\tikzstyle{text_fill}=[rectangle,draw,fill=white,inner sep=.2ex]
\pgfplotsset{compat=1.15}
\newcommand{\dx}{\mathrm{d}}
\newcommand{\R}{\mathbb{R}}
\newcommand{\N}{\mathbb{N}}
\newcommand{\E}{\mathbb{E}}
\newcommand{\Prob}{\mathbb{P}}
\newcommand{\Wasserstein}{\mathcal{W}}
\newcommand{\WassersteinDiscrete}{\mathbf{W}}
\newcommand{\autoencoder}{\mathbf{AE}}
\newcommand{\DCT}{\mathbf{DCT}}
\newcommand{\ID}{\mathbf{ID}}
\newcommand{\patch}{\mathbf{p}}
\newcommand{\F}{\mathbf{F}}
\newcommand{\Patch}{\mathbf{P}}
\newcommand{\Tmax}{T_\mathrm{max}}
\newcommand{\Id}{\mathrm{Id}}
\newcommand{\loss}{\ell}
\newcommand{\init}{\mathrm{init}}
\renewcommand{\div}{{\operatorname{div}}}
\newcommand{\supervised}{\mathrm{Su}}
\newcommand{\unsupervised}{\mathrm{Un}}
\newcommand{\explicit}{\mathrm{expl}}
\newcommand{\implicit}{\mathrm{impl}}
\newcommand{\Newton}{\mathrm{EN}}
\newcommand{\training}{\mathcal{P}}
\newcommand{\PSNR}{\mathrm{PSNR}}
\DeclareMathOperator*{\argmin}{argmin}
\DeclareMathOperator*{\diag}{diag}
\DeclareMathOperator{\tr}{tr}
\theoremstyle{plain}% default
\newtheorem{theorem}{Theorem}[section]
\theoremstyle{remark}
\newtheorem{remark}[theorem]{Remark}
\theoremstyle{definition}
\newtheorem{definition}[theorem]{Definition}
\begin{document}

%%%%%%%%% TITLE
\title{Shared Prior Learning of Energy-Based Models for Image Reconstruction}

\author{
Thomas Pinetz\thanks{Graz University of Technology and Austrian Institute of Technology} \and 
Erich Kobler\thanks{Graz University of Technology} \and
Thomas Pock\footnotemark[2] \and
Alexander Effland\thanks{Graz University of Technology and Silicon Austria Labs (TU Graz SAL DES Lab)}
}

\maketitle
%\thispagestyle{empty}

%%%%%%%%% ABSTRACT
\begin{abstract}
We propose a novel learning-based framework for image reconstruction particularly designed for training without ground truth data, which has three major building blocks: energy-based learning, a patch-based Wasserstein loss functional, and shared prior learning.
In energy-based learning, the parameters of an energy functional composed of a learned data fidelity term and a data-driven regularizer are computed in a mean-field optimal control problem.
In the absence of ground truth data, we change the loss functional to a patch-based Wasserstein functional, in which local statistics of the output images are compared to uncorrupted reference patches.
Finally, in shared prior learning, both aforementioned optimal control problems are optimized simultaneously with shared learned parameters of the regularizer to further enhance unsupervised image reconstruction.
We derive several time discretization schemes of the gradient flow and verify their consistency in terms of Mosco convergence.
In numerous numerical experiments, we demonstrate that the proposed method generates state-of-the-art results for various image reconstruction applications--even if no ground truth images are available for training.
\end{abstract}

\section{Introduction}
In this paper, we are concerned with \emph{inverse problems} in image reconstruction, in which the unknown \emph{ground truth image}~$y$ should be recovered from an \emph{observation}~$z$.
The underlying forward model is of the form
\begin{equation}
z=Z(Ay,\zeta),
\label{eq:IP}
\end{equation}
where $\zeta$ is a random variable modeling \emph{(i.i.d.)~noise}.
We assume that the \emph{task-dependent linear operator} $A$ and the \emph{observation-generating function} $Z$ are fixed.
In the case of additive Gaussian noise, for instance, $A$ is the identity operator, $\zeta\sim\mathcal{N}(0,\sigma^2\Id)$ and $Z(y,\zeta)=y+\zeta$.
Numerous problems in imaging such as denoising, deblurring, single image super-resolution, and demosaicing can be cast exactly into this form.

Inverse problems are often solved using variational methods, which allow for a Bayesian interpretation.
Here, Bayes' theorem implies that the posterior probability~$\Prob(x\vert z)$ of the reconstructed image~$x$ given the observation~$z$ is proportional to the product of the data likelihood~$\Prob(z\vert x)$ and the prior~$\Prob(x)$.
From a variational perspective, the data fidelity term~$\mathcal{D}(Ax,z)$ is identified with $-\log\Prob(z\vert x)$ and the regularizer~$\mathcal{R}(x)$ with $-\log\Prob(x)$.
Hence, maximizing the posterior probability in a logarithmic domain defines the maximum a posteriori (MAP) estimator, which is equivalent to minimizing the \emph{energy functional}~$\mathcal{E}$ given by
\[
\mathcal{E}(x,z)=\mathcal{D}(Ax,z)+\mathcal{R}(x).
\]
Several hand-crafted regularizers have been proposed relying on structural assumptions on the images.
For example, the well-known total variation~\cite{RuOsFa92} enforces sparse gradients and thus induces piecewise constant images.
Multiple extension including the total generalized variation~\cite{BrKu10} and its variants~\cite{KoDo19} penalize higher-order derivatives while promoting piecewise smooth regions.

Recently, several data-driven approaches have been proposed to learn the regularizer embedded in a variational model.
This approach is motivated by the diverse structure of natural images, which is only insufficiently covered by the aforementioned hand-crafted regularizers.
Nowadays, learning-based regularizers vastly outperform their hand-crafted counterparts as predicted in~\cite{LeNa11}, we refer the reader to \cref{sub:relatedWork} for a recent overview.
In this paper, we utilize the total deep variation (TDV) regularizer~\cite{KoEf20} representing a deep multi-scale convolutional neural network, whose robustness and local structure has been thoroughly analyzed in~\cite{KoEf20a}.

While there are various possibilities to design a regularizer, the choice of the data fidelity term depends on the noise statistics and the prior for a given task.
As a classical example, the squared $\ell^2$-norm is the proper choice for additive white Gaussian denoising with a Gaussian prior~\cite{GrNi19}.
However, apart from simulated noise instances and predefined simple priors, the choice of the \emph{optimal} data fidelity term is in general unknown~\cite{Ni07}.
Frequently, $\ell^p$-norms are used due to their simplicity, but their applicability to realistic noise with learned priors is not always justified~\cite{ZhJi20}.

In many practical applications, the statistics of the noise~$\zeta$ in~\eqref{eq:IP} is unknown as well as ground truth images~$y$ are not available for learning.
Hence, in this paper we introduce and analyze a learning framework based on a variational principle, which is tailored for image reconstruction in exactly this real-world context.
In its core, we advocate three components: 
\begin{enumerate}
\item
in \emph{energy-based learning}, both the data fidelity term and the regularizer are learned from data to account for the unknown noise statistics,
\item
the \emph{patch-based Wasserstein loss functional} is designed for energy-based learning in the absence of ground truth images,
\item
in \emph{shared prior learning}, the learned parameters of the prior are shared among the supervised and the patch-based Wasserstein loss functional to further enhance the image quality if no ground truth images are available.
\end{enumerate}
In \emph{energy-based learning}, the data fidelity term is either in the subclass of squared $\ell^2$-data terms, Fr\'echet metrics, or generalized divergences, and the regularizer is TDV.
In supervised learning, all learnable parameters are computed in a mean-field optimal control problem~\cite{EHa19}, in which the state equation coincides with the gradient flow of the energy functional.
The control parameters are given as the entity of all learned parameters of the data fidelity term, the regularizer as well as the stopping time of the gradient flow trajectory.
The terminal state of the gradient flow defines the reconstructed image and the loss penalizes its deviation from the ground truth image in an $\ell^p$-norm.
In numerical experiments, we analyze the particular structure of the learned data fidelity terms in various settings and experimentally prove that we achieve state-of-the-art results for several problems in the supervised case.
A comprehensive overview of learning an energy functional for general applications is presented in~\cite{LeCh06}.

Further, we advocate an unsupervised mean-field optimal control problem with a \emph{patch-based Wasserstein loss functional}, in which a patch-wise statistical comparison of image features is performed.
In its core, the loss functional quantifies the Wasserstein distance of two discrete measures, in which the former again depends on the terminal state of the gradient flow trajectory and the latter relies on a family of uncorrupted reference patches.
We compare three different linear feature extraction operators: the mean-invariant patch extraction operator, the DCT-II operator, and a linear autoencoder.

\emph{Shared prior learning} describes a mean-field optimal control problem, in which the loss functional is a convex combination of the loss functional utilized for supervised learning and the previously defined patch-based Wasserstein loss functional.
We stress that the parameters of the regularizer are \emph{shared} among both functionals, all remaining parameters are optimized separately.
Moreover, we require one data set for each loss functional:
\begin{itemize}
\item
In the unsupervised case, the data set is comprised of \emph{independent} collections of observations and reference patches.
\item
The data set in the supervised loss functional consists of pairs of ground truth and corrupted images, where the latter are synthesized in this work.
Heuristically, the synthesized images should as far as possible exhibit structural similarities with the observations in the unsupervised case.
\end{itemize}
We emphasize that in all tasks we are actually \emph{only} interested in the reconstruction of the observations in the unsupervised case, for which no ground truth images are available.
Moreover, these observations are entirely \emph{unrelated} to the remaining training data.
In the case of realistic image denoising, for instance, the training data set for the supervised task consists of high-quality images and synthesized observations,
the training data set in the unsupervised problem is given by the observed images (corrupted by realistic noise) and a set of uncorrupted reference patches, which are both independent.
Note that this approach shares some similarities with \emph{semi-supervised multi-task regression}~\cite{ZhYe09}.

\medskip

For the numerical realization, we propose three discretization schemes of the gradient flow, which define a discretized mean-field optimal control problem.
The consistency of the different discretization schemes is verified in terms of Mosco convergence, which in particular proves the convergence of the time-discrete minimizers to their time-continuous counterparts. 
In several numerical experiments, we achieve state-of-the-art-results for numerous image reconstruction tasks--even in the absence of ground truth images.

\medskip

The paper is structured as follows:
In \cref{sec:meanfield}, we introduce energy-based learning, the patch-based Wasserstein loss functional and shared prior learning in the time-continuous case.
\Cref{sec:discretization} is devoted to the time-discretization of the gradient flow as well as the discretizations of the data fidelity term and the regularizer.
The consistency of the discretization schemes in terms of Mosco convergence is the subject of \cref{sec:Mosco}.
Finally, we numerically validate the applicability of our approach for various imaging problems in \cref{sec:numerics}, in which we achieve state-of-the-art results.

\subsection{Related Work}\label{sub:relatedWork}
Variational approaches for imaging have been advocated in various publications, among which the TV-$L^2$ model~\cite{RuOsFa92} incorporating the total variation (TV) as the regularizer is one of the most prominent.
However, the total variation relies on the first principle assumption stating that images are composed of piecewise constant regions with sparse gradients, that is why staircasing artifacts are promoted.
This effect can be overcome by the inclusion of higher-order image derivatives~\cite{ChMa00,SeSt08,WuTa10}.
A well-known extension of TV is the total generalized variation (TGV)~\cite{BrKu10}, in which a balancing of the image derivatives up to a predefined order is performed and thus prevents the formation of staircasing.
Different approaches to remove staircasing are based on the inclusion of directional information in the regularizer \cite{BeBu06,LeRo15,PaMa20}
and the penalization of the curvature of level lines to enforce continuity of edges~\cite{NiMu93,ChPo19}.
However, hand-crafted regularizers are unable to entirely capture the statistics of natural images, that is why data-driven methods are commonly superior~\cite{LuOk18,LiSc20}.

\medskip

Throughout the last years, several machine learning-based regularizers have been proposed for image reconstruction~\cite{LuIl18}.
Bigdeli and Zwicker~\cite{BiZw18} exploited autoencoding priors, which achieve competitive results even if the regularizers are trained for similar tasks.
In~\cite{DoWa19}, a deep convolutional network-based prior resulting from an unfolding of the algorithm is proposed, which is composed of denoising modules and successive back-projection modules to enforce data consistency.
A particular generalization of the total variation are Fields of Experts (FoE) regularizers~\cite{RoBl09}, whose building blocks are learned filters and learned potential functions.
In~\cite{SaTa09}, discriminative learning of FoE via implicit differentiation is advocated, which turns out to be superior to the originally proposed generative learning.
Integrating the FoE regularizer in a reaction-diffusion process and unrolling the gradient descent algorithm with iteration-dependent parameters was proposed in~\cite{ChPo17} (TNRD) and \cite{Le16}.
Variational networks~\cite{KoKl17} build upon TNRD by additionally incorporating an incremental proximal gradient scheme.
In~\cite{EfKo19}, an optimal control formulation of the training process is analyzed, in which the state equation is the gradient flow associated with an energy functional composed of a squared $\ell^2$-data term and the FoE regularizer.
Interestingly, the optimal stopping parameter is essential to achieve competitive results for image reconstruction tasks.
Finally, in~\cite{KoEf20,KoEf20a} the training process is modeled as a sampled/mean-field optimal control problem and the FoE regularizer is replaced by the total deep variation (TDV), which is a deep convolutional neural network consisting of several successive U-Net type networks~\cite{RoFi15}.
In various numerical experiments, the state-of-the-art performance for image reconstruction problems is shown and the stability of the regularizer with respect to variations of the input data and learned parameters is validated.

\medskip

% unsupervised image restoration
In the literature, numerous approaches to unsupervised image restoration have been suggested.
Figueiredo and Leit\~{a}o~\cite{FiLe97} incorporated a compound Gauss--Markov random field to model image discontinuities in combination with the minimum description length principle for unsupervised image restoration.
A recent unsupervised restoration technique is the deep image prior~\cite{UlVe18}, in which untrained convolutional neural networks are fit to a single corrupted image, where the structure of the network is essential for the reconstruction quality.
Based on simple statistical arguments, the Noise2Noise~\cite{LeMu18} method is a further prominent approach for image restoration without requiring uncorrupted data yielding competitive restoration results.
An extension of Noise2Noise is Noise2Void~\cite{KrBu19}, where a single degraded image suffices for the unsupervised training of a denoising network especially designed for medical tasks.
Recently, a further extension called Noise2Self~\cite{BaRo19} was advocated, in which a self-supervised loss functional is exploited for image denoising.
Du et al.~\cite{DuCh20} introduced a novel general unsupervised learning method based on invariant representations of noise data exploiting an adversarial domain adaption.
Pajot et al.~\cite{PaBe19} advocated an unsupervised image reconstruction framework, where the loss function is a linear combination of an adversarial loss and a reconstruction loss to enforce data consistency.
Recently, Dittmer et al.~\cite{DiSc20} devised an unsupervised framework for additive noise removal based on Wasserstein GANs \cite{ArCh17}.

There are several links to our particular Wasserstein-based loss functional.
In \cite{EnRe20}, the performance of the quadratic Wasserstein distance for data matching is analyzed and the robustness against high-frequency noise patterns is shown.
Schmitz et al.~\cite{ScHe18} advocated a nonlinear Wasserstein dictionary learning for images decoded as probability measures.
In a similar fashion, Rolet et al.~\cite{RoCu16} proposed a dictionary learning framework, in which the observations are encoded as normalized histograms of features.

To the best of our knowledge, there has been no technique comparable with the proposed shared prior learning in the literature.
However, the combination of (simulated) training data with natural image data in a semi-supervised fashion has been analyzed in several publications, which mostly rely on elaborate convolutional neural network structures.
Recent publications in this direction address the tasks of low-dose computed tomography reconstruction~\cite{LiYe19}, single image rain removal~\cite{WeMe19}, and image dehazing~\cite{LiDo20}.
The inclusion of unpaired training data has also been utilized for the training of generative adversarial networks~\cite{ZhPa17,ChVa19}.

\subsection{Notation}
We denote by $C^0(X,Y)$ the space of continuous, by $C^k(X,Y)$ the space of $k$-times continuously differentiable functions
and by $C^{k,\alpha}(X,Y)$, $\alpha\in(0,1]$, the space of H\"older functions mapping from~$X$ to~$Y$, where an additional subscript $c$ indicates a function with compact support.
The associated norms are denoted by $\Vert\cdot\Vert_{C^0}$, $\Vert\cdot\Vert_{C^k}$, $\Vert\cdot\Vert_{C^{k,\alpha}}$, respectively.
In addition, $\vert\cdot\vert_{C^{k,\alpha}}$ is the seminorm in the respective H\"older space.
Further, we use the standard notation $L^p(X)$ and $H^m(X)=W^{m,2}(X)$ to denote Lebesgue and Sobolev spaces, respectively.
The symbol $\mathbf{1}_n\in\R^n$ denotes the column one vector, i.e.~$\mathbf{1}_n=(1,\ldots,1)^\top$, and $\Id$ is the identity matrix in the respective vector space.
The largest eigenvalue of a matrix~$A$ is denoted by $\sigma_{\max}(A)$.
Finally, the indicator function of a set~$S$ is written as~$\mathbb{I}_S$, i.e.~$\mathbb{I}_S(x)=1$ if $x\in S$ and $0$ otherwise.

\subsection{Wasserstein distance}\label{sub:Wasserstein}
In this section, we recall the Wasserstein distance in the general case as well as a fully discrete version.
Afterwards, we utilize an algorithm to compute the discrete Wasserstein distance incorporating the Bregman distance on the entropy, which can be regarded as a modification of the widely used Sinkhorn algorithm~\cite{Si64,Cu13}.

The definition of the Wasserstein distance is as follows~\cite{Vi09,PeCu19}:
\begin{definition}
Let $\mu,\nu$ be two probability measures on~$\R^n$.
Then, for $p\in[1,\infty)$ the \emph{Wasserstein distance} associated with the $\ell^p$-norm is given by
\[
\Wasserstein_p(\mu,\nu)\coloneqq\inf_{\pi\in\Pi(\mu,\nu)}\int_{\R^n\times\R^n}\Vert x-y\Vert_p\dx\pi(x,y),
\]
where $\Pi(\mu,\nu)$ denotes the set of all joint probability measures on~$\R^n\times\R^n$ whose marginals are~$\mu$ and~$\nu$.
\end{definition}
In what follows, we define for any collection of points $v=(v_1,\ldots,v_N)$, $v_i\in\R^n$, the \emph{discrete measure}~$\mu[v]$ as
\[
\mu[v](S)\coloneqq\frac{1}{N}\sum_{i=1}^N\delta_{v_i}(S)
\]
for $S\subset\R^n$, where $\delta_{v_i}(S)=1$ if $v_i\in S$ and~$0$ otherwise.
For a second collection of points $w=(w_1,\ldots,w_N)$, $w_i\in\R^n$, and $p\in[1,\infty)$ the \emph{cost matrix}~$C_p[{v,w}]\in\R^{N\times N}$ is defined as
\[
(C_p[{v,w}])_{i,j}\coloneqq\Vert v_i-w_j\Vert_p.
\]
Following~\cite[Chapter~6.4]{Sa15}, the \emph{discrete Wasserstein distance} between $\mu[v]$ and $\mu[w]$ with cost matrix~$C_p[{v,w}]$ reads as
\[
\WassersteinDiscrete_p(\mu[v], \mu[w])\coloneqq\inf\left\{\tr((C_p[v,w])^\top P)\!:\!P\in\R^{N\times N}, P_{i,j}\geq 0,P^\top\mathbf{1}_N=\tfrac{1}{N}\mathbf{1}_N,P\mathbf{1}_N=\tfrac{1}{N}\mathbf{1}_N\right\}.
\]
Classically, the Sinkhorn algorithm is exploited to approximate the discrete Wasserstein distance using an entropy regularization.
In this paper, we apply a proximal version of the Sinkhorn algorithm proposed in~\cite{BeCa15,XiWa19}, which is based on an inexact proximal point method to increase convergence speed and stability.
In fact, the Bregman divergence with respect to the entropy and the solution of the transport map in the previous iteration step is added to the discrete Wasserstein distance.
This reformulation amounts to a rescaling of the cost matrix in each iteration step.
Thus, compared to the classical Sinkhorn algorithm, the only difference can be found in line~\ref{state:Q} of~\Cref{alg:Wasserstein}.
\begin{algorithm}[htb]
\begin{algorithmic}[1]
\STATE{\textbf{Initial}: discrete probability measures $\mu[v]$ and $\mu[w]$, collections of points $v$ and $w$, cost matrix\\ $C_p[{v,w}]$ for $p\geq 1$, regularization parameter~$\beta>0$, maximum number of iterations~$J\in\N$}
\STATE{$b=\frac{1}{N}\mathbf{1}_N$}
\STATE{$G_{i,j}=\exp(-\frac{1}{\beta}(C_p[{v,w}])_{i,j})$}
\STATE{$T_N^{(1)}=\mathbf{1}_N\mathbf{1}_N^\top$}
\FOR{$j=1$ \TO $J$}
\STATE{$Q=G\odot T_N^{(j)}$}\label{state:Q}
\STATE{$a=\frac{\mu[v]}{Qb}$}
\STATE{$b=\frac{\mu[w]}{Q^\top a}$}
\STATE{$T_N^{(j+1)}=\diag(a)Q\diag(b)$}
\ENDFOR
\RETURN $T_N^{(J+1)}$
\end{algorithmic}
\caption{Proximal version of Sinkhorn algorithm to approximate $\WassersteinDiscrete_p$.}
\label{alg:Wasserstein}
\end{algorithm}

\section{Time-continuous mean-field optimal control problems}\label{sec:meanfield}
In this section, we introduce all novel main concepts of this paper: energy-based learning (\cref{sub:variationalLearning}), the patch-based Wasserstein loss functional (\cref{sub:patchWasserstein}), and shared prior learning (\cref{sub:sharedPriorLearning}).

In this paper, we are concerned with inverse problems~\cite{EnHa96} of the form
\begin{equation}
z=Z(Ay,\zeta),
\label{eq:inverseProblem}
\end{equation}
where $y\in\R^{n_y}$ denotes an unknown ground truth image with a resolution of $n=\mathrm{width}\times\mathrm{height}$ and $C$~channels, i.e.~$n_y=nC$.
Throughout this paper, we always identify images with their vector representations.
As an example, the ground truth image on the multichannel pixel grid $\R^{\mathrm{width}\times\mathrm{height}\times C}$ is cast as~$\R^{n_y}$.
Moreover, $z\in\R^{n_z}$ refers to the observation, and $\zeta\in\R^{n_z}$ is a random variable modeling i.i.d.~noise with a finite second momentum.
We assume that the task-dependent linear operator $A\in\R^{n_z\times n_y}$ and the observation-generating function $Z:\R^{n_z}\times\R^{n_z}\to\R^{n_z}$ are fixed.
For simplicity, we only consider two-dimensional data and remark that the subsequent methods are applicable to data of any dimension with minor modifications.
In the case of additive white Gaussian noise, for instance, $A$ is the identity operator with $n_y=n_z$, $\zeta\sim\mathcal{N}(0,\sigma^2\Id)$ and $Z(y,\zeta)=y+\zeta$.

\subsection{Energy-based learning}\label{sub:variationalLearning}
The starting point of our analysis presented in \cref{subsub:variation} is the variational formulation of inverse problems, in which the energy functional is composed of a learned data fidelity term and a learned regularizer.
In \cref{subsub:supervised}, the supervised learning process is modeled as a mean-field optimal control problem, where the state equation is the gradient flow associated with the energy functional.

\subsubsection{Energy functional and gradient flow}\label{subsub:variation}
In energy-based learning, both the data fidelity term and the regularizer are learned from data.
We retrieve a reconstruction~$x\in\R^{n_y}$ of~$y$ by minimizing
\begin{equation}
x\in\argmin_{\widetilde{x}\in\R^{n_y}}\left\{\mathcal{E}(\widetilde{x},z,\xi,\theta)\coloneqq\mathcal{D}(A\widetilde{x},z,\xi)+\mathcal{R}(\widetilde{x},\theta)\right\}.
\label{eq:variationalProblem}
\end{equation}
The data fidelity term $\mathcal{D}\in C_c^3(\R^{n_z}\times\R^{n_z}\times\Xi,\R_0^+)$ with compact support quantifies the distance of~$Ax$ from the observation~$z\in\R^{n_z}$.
Here, $\xi\in\Xi$ denotes the learned parameter contained in the compact \emph{data parameter space} $\Xi\subset\R^{n_\Xi}$.
Throughout this paper, we consider three different choices of~$\mathcal{D}$, where the compact support is enforced by a suitable smooth truncation for sufficiently large values:
\begin{enumerate}
\item
Following~\cite{KoEf20,KoEf20a}, we consider the \emph{squared scaled $\ell^2$-data term} given by
\[
\mathcal{D}(Ax,z,\xi)=\mathcal{D}_{\ell^2}(Ax,z,\xi)\coloneqq\frac{\xi}{2}\Vert Ax-z\Vert_2^2,
\]
where the scale parameter~$\xi\in\Xi$ for $n_\Xi=1$ is learned from data.
\item 
Next, we consider the case in which $\mathcal{D}$ is a \emph{Fr\'echet metric}, i.e.
\[
\mathcal{D}(Ax,z,\xi)=\mathcal{D}_F(Ax,z,\xi)\coloneqq\sum_{i=1}^{n_z}\rho_F((Ax-z)_i,\xi),
\]
where $\rho_F\in C_c^3(\R\times\Xi,\R_0^+$) is a function parametrized by $\xi$ satisfying
\begin{enumerate}
\item 
$\rho_F(x,\xi)=\rho_F(-x,\xi)$,
\item
$\rho_F(x,\xi)\geq0$, where $\rho_F(x,\xi)=0$ if and only if $x=0$,
\item 
$\rho_F(x+y,\xi)\leq\rho_F(x,\xi)+\rho_F(y,\xi)$.
\end{enumerate}
\item
In the case of a \emph{generalized divergence}, we assume that
\[
\mathcal{D}(Ax,z,\xi)=\mathcal{D}_\div(Ax,z,\xi)\coloneqq\sum_{i=1}^{n_z}\rho_\div((Ax)_i,z_i,\xi).
\]
Here, $\rho_\div\in C_c^3(\R\times\R\times\Xi,\R^+_0)$ is convex in the first argument and satisfies $\rho_\div(s,t,\xi)=0$ if and only if $s=t$.
\end{enumerate}
Throughout this paper, the regularizer $\mathcal{R}\in C^3_c(\R^{n_y}\times\Theta,\R)$ is the \emph{total deep variation}~\cite{KoEf20,KoEf20a}, which is a convolutional neural network with learned parameters~$\theta\in\Theta$ for a compact \emph{regularizer data space} $\Theta\subset\R^{n_\Theta}$.
Here, $\mathcal{R}$ is the sum of the pixelwise deep variation~$r:\R^{n_y}\times\Theta\to\R^n$, i.e.
$\mathcal{R}(x,\theta)=\sum_{i=1}^n r(x,\theta)_i$, where $r$ has the form $r(x,\theta)=w\mathcal{N}(Kx)$.
The components of the pixelwise deep variation are as follows:
\begin{itemize}
\item
$K\in\R^{mn\times n_y}$ is the matrix representation of a learned $3\times3$ convolution kernel~$k$ with $m$~feature channels and zero-mean constraint (i.e.~$\sum_{i,j=1}^{3}k_{i,j}=0$) to enforce mean-invariance,
\item
$\mathcal{N}:\R^{mn}\to\R^{mn}$ is a sufficiently smooth multiscale convolutional neural network with compact support, which is detailed in~\cref{sub:TDV},
\item
$w\in\R^{n\times mn}$ is the matrix representation of a learned $1\times1$ convolution kernel.
\end{itemize}
Thus, each $\theta$ encodes the learnable parameters of~$K$, $\mathcal{N}$ and $w$.

\medskip

A widespread approach to minimize~\eqref{eq:variationalProblem} relies on the \emph{gradient flow}~\cite{AmGi08}, which reads as
\begin{equation}
\dot{\widetilde{x}}(t)=f(\widetilde{x}(t),z,\xi,\theta)\coloneqq-\nabla_1\mathcal{E}(\widetilde{x}(t),z,\xi,\theta)
=-A^\top\nabla_1\mathcal{D}(A\widetilde{x}(t),z,\xi)-\nabla_1\mathcal{R}(\widetilde{x}(t),\theta),
\label{eq:originalGradientFlow}
\end{equation}
where $t$ is contained in the time interval~$[0,T]$ and $T$ is the \emph{stopping time}.
The initial value~$\widetilde{x}(0)$ is set to $A_\init z$ for a fixed task-dependent matrix $A_\init\in\R^{n_y\times n_z}$.
Further, we assume that $T\leq\Tmax$ with $\Tmax>0$ denoting the \emph{maximum time horizon}.
The reparametrization $x(t)=\widetilde{x}(tT)$ converts \eqref{eq:originalGradientFlow} into the equivalent gradient flow
\begin{equation}
\dot{x}(t)=Tf(x(t),z,\xi,\theta)
\label{eq:gradientFlow}
\end{equation}
on the time interval~$[0,1]$ with the same initial value as before.
In this case, the reconstruction of the unknown ground truth image is given by~$x(1)$.
In general, we denote the image trajectory evaluated at time $t\in[0,1]$ by $x(t,z,T,\xi,\theta)$ to highlight the dependency on $(z,T,\xi,\theta)$.

\subsubsection{Optimal control problem for supervised energy-based learning}\label{subsub:supervised}
In this section, we cast the training process for the supervised energy-based learning as a mean-field optimal control problem following~\cite{E17,EHa19}.
To this end, let $(\Omega^\supervised,\mathcal{F}^\supervised,\Prob^\supervised)$ be a complete probability space,
which is associated with the joint data distribution~$\training^\supervised$ of the training data $(\overline y,\overline z)\in\R^{n_y}\times\R^{n_z}$ representing pairs of ground truth images and corresponding observations.
We assume that both random variables are \emph{dependent} with a finite second momentum, and we denote by $\training^{\supervised,\mathcal{Z}}$ the marginal distribution with respect to the observations.

Commonly, the learned parameters in a machine learning setting are optimized by comparing the outcome of a network depending on input data with the associated ground truth in a specific metric.
In our case, given a pair~$(\overline y,\overline z)\sim\training^\supervised$ we are aiming at minimizing the mean distance of
the reconstruction~$x(1,\overline z,T,\xi,\theta)$ from the ground truth image~$\overline y$ measured in terms of the \emph{loss function}~$\loss$, which is either $\loss(x,y)=(\sum_i(x_i-y_i)^2+\iota^2)^\frac{1}{2}$ (denoted by $\ell_\iota^1$) for $\iota>0$ or $\loss(x,y)=\Vert y-x\Vert_2$ (denoted by $\ell^2$).
Then, the optimal parameters $(T,\xi,\theta)$ that uniquely determine the model output $x(1,\overline z,T,\xi,\theta)$ are inferred from
the \emph{mean-field optimal control problem for supervised energy-based learning} given by
\begin{equation}
\inf
\left\{\E_{(\overline y,\overline z)\sim\training^\supervised}\loss(x(1,\overline z,T,\xi,\theta),\overline y):T\in[0,\Tmax],\xi\in\Xi,\theta\in\Theta\right\}.
\label{eq:objectiveFunctionSupervised}
\end{equation}
We remark that the state equation~\eqref{eq:gradientFlow} of this optimal control problem is a stochastic ordinary differential equation whose only source of randomness is the observation~$z$.
The existence of optimal control parameters~$(T,\xi,\theta)$ for~\eqref{eq:objectiveFunctionSupervised} is discussed in~\cref{sub:sharedPriorLearning}.
\Cref{fig:supervisedLearning} visualizes the training process in the mean-field optimal control setting for salt-and-pepper denoising with $A=A_\init=\Id$ using the shorthand notation $x(t,\overline z)=x(t,\overline z,T,\xi,\theta)$.
Compared to the mean-field optimal control problem proposed in~\cite{KoEf20a}, we here additionally learn the parameters~$\xi$ of the data fidelity term, which gives rise to the term \emph{energy-based learning}.
\begin{figure}[htb]
\begin{center}
\includegraphics[width=.9\linewidth]{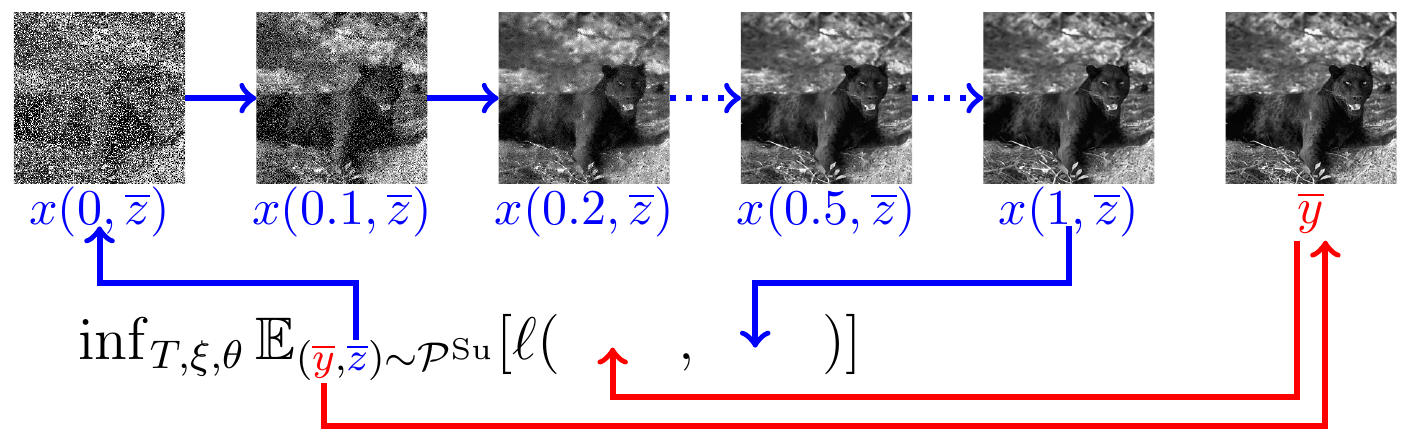}
\end{center}
\caption{Visualization of the supervised mean-field optimal control training problem.}
\label{fig:supervisedLearning}
\end{figure}
    
\subsection{Patch-based Wasserstein loss functional}\label{sub:patchWasserstein}
In this section, we propose an unsupervised mean-field optimal control problem based on the patch-wise Wasserstein distance to compare local image statistics.
As before, the state equation coincides with the gradient flow of the previously defined energy functional.

Motivated by the pioneering work of Mumford et al.~\cite{HuMu99,MuGi01,LePe03} on natural image statistics, we intend to compare the statistics of the approximate reconstruction given by the terminal state of the state equation with randomly drawn patches of reference images.
To this end, we propose a cost functional which quantifies the mismatch of the distributions of the reconstructed image and reference patches using the discrete Wasserstein distance.

The comparison of whole images has proven to be infeasible with current computational tools due to the curse of dimensionality~\cite{PeCu19}.
To overcome this issue, we restrict our method to local image comparisons on the level of patches.
Therefore, we randomly draw $N\in\N$ patches of size $n_\patch^2$, which is usually larger than the size of the reconstructed image since we allow overlapping patches.
To model the training data set, we consider the complete probability space $(\Omega^\unsupervised,\mathcal{F}^\unsupervised,\Prob^\unsupervised)$ defining the distribution~$\training^\unsupervised$,
where each pair of random variables $(\widehat z,\widehat p)\sim\training^\unsupervised$ models an observation~$\widehat z\in\mathcal{Z}^\unsupervised\subset\R^{n_z}$ and a collection of reference patches~$\widehat p\in\R^{N\times n_\patch^2}$.
We assume that both random variables are \emph{independent} with a finite second momentum.
Further, we denote by $\training^{\unsupervised,\mathcal{Z}}$ the corresponding marginal distribution with respect to the observations.

In what follows, we conduct a statistical comparison of the observations and reference patches on the level of features, which encode local structure information.
The \emph{patch extraction operator} $\Patch:\R^{n_y}\to\R^{N\times n_\patch^2}$ crops the image into $N$~overlapping patches of size~$n_\patch^2$.
The features are extracted using a linear \emph{feature extraction operator} $\F:\R^{N\times n_\patch^2}\to\R^{N\times n_\F}$.
In this work, we consider the subsequent choices for the feature extraction operator~$\F$:
\begin{enumerate}
\item[$\ID$:]
Let $\R^{N\times n_\F}/\sim$ be the equivalence class of $N$~square patches with width and height~$n_\patch$ (i.e.~$n_\F=n_\patch^2$), in which two patches are equivalent if and only if they coincide after subtracting the respective patch means.
The mean-invariant identity operator~$\ID:\R^{N\times n_\patch^2}\to\R^{N\times n_\F}/\sim$ subtracts the mean without further alterations of the patches.
\item[$\DCT$:]
As advocated by~\cite{LePe03}, a reasonable feature extraction operator is the DCT-II transform without the constant component~\cite{BrYi06}.
In our case, the DCT~operator is a mapping $\DCT:\R^{N\times n_\patch^2}\to\R^{N\times n_\F}$ with $n_\F=n_\patch^2-1$, which assigns DCT coefficients to each patch, where the first coefficient is neglected to obtain a mean-invariant representation.
\item[$\autoencoder$:]
Finally, we utilize the linear autoencoder operator $\autoencoder:\R^{N\times n_\patch^2}\to\R^{N\times n_\F}$ with $n_\F=n_\patch^2-1$ (in analogy to~$\DCT$) as a feature extraction operator, which is pretrained on the BSDS400 data set.
In detail, let $x_i\in\R^{n_y}$, $1\leq i\leq 400$, denote the collection of training images of the BSDS400 data set and $x_i^j\in\R^{n_\patch^2}$ be the $j^{th}$~square patch of $x_i$ for $1\leq j\leq N$ with zero mean constraint, which is realized by applying $\ID\circ\Patch$.
Hence, the operator $\autoencoder=\underbrace{(\widetilde{\autoencoder},\ldots,\widetilde{\autoencoder})}_{N\text{ times}}\circ\ID$ is obtained from
\[
\min_{\widetilde{\autoencoder}\in\R^{n_\F\times n_\patch^2},\mathbf{D}\in\R^{n_\patch^2\times n_\F}}\sum_{i=1}^{400}\sum_{j=1}^N\Vert\mathbf{D}(\widetilde{\autoencoder}x_i^j)-x_i^j\Vert_2^2.
\]
In other words, $\autoencoder$ is composed of $N$~identical operators $\widetilde{\autoencoder}$ mapping each single patch to a lower-dimensional representation.
\end{enumerate}
Next, we propose the \emph{mean-field optimal control problem for unsupervised learning}.
In its core, the Wasserstein distance of the two discrete measures $\mu[\F(\Patch x(1,\widehat z,T,\xi,\theta))]$ and $\mu[\F\widehat p]$ is minimized in the mean-field setting, i.e.~$\widehat z$ and~$\widehat p$ are random variables.
Note that $x(1,\widehat z,T,\xi,\theta)$ denotes the terminal state of the gradient flow equation emanating from the observation~$\widehat z$ with the control parameters $T$, $\xi$ and~$\theta$.
Henceforth, we use the abbreviation
\[
\WassersteinDiscrete_{\F,p}(v,w)\coloneqq\WassersteinDiscrete_p(\mu[\F v],\mu[\F w]).
\]
Thus, the optimal control problem reads as
\begin{equation}
\inf\left\{\E_{(\widehat z,\widehat p)\sim\training^\unsupervised}\WassersteinDiscrete_{\F,p}(\Patch x(1,\widehat z,T,\xi,\theta),\widehat p):T\in[0,\Tmax],\xi\in\Xi,\theta\in\Theta\right\}.
\label{eq:objectiveFunctionUnsupervised}
\end{equation}
For the definitions of the discrete measures, the cost matrix and the discrete Wasserstein distance, we refer the reader to~\cref{sub:Wasserstein}.
We prove the existence of optimal control parameters~$(T,\xi,\theta)$ associated with~\eqref{eq:objectiveFunctionUnsupervised} in~\cref{sub:sharedPriorLearning}.
\Cref{fig:unsupervisedLearning} summarizes the unsupervised mean-field optimal control problem (using the notation $x(t,\widehat z)=x(t,\widehat z,T,\xi,\theta)$).
\begin{figure}[htb]
\begin{center}
\includegraphics[width=.9\linewidth]{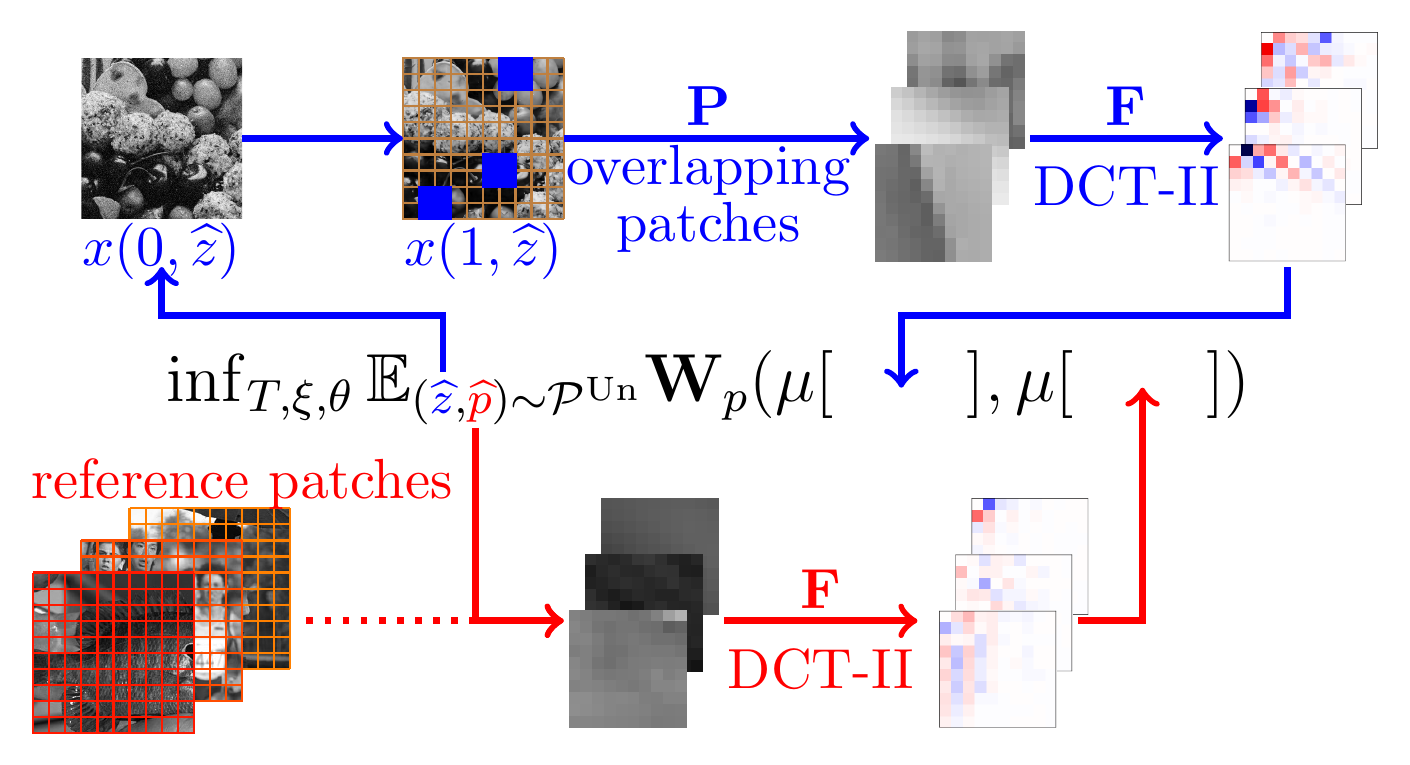}
\end{center}
\caption{Visualization of the unsupervised mean-field optimal control training problem.}
\label{fig:unsupervisedLearning}
\end{figure}

\begin{figure}
\includegraphics[width=\linewidth]{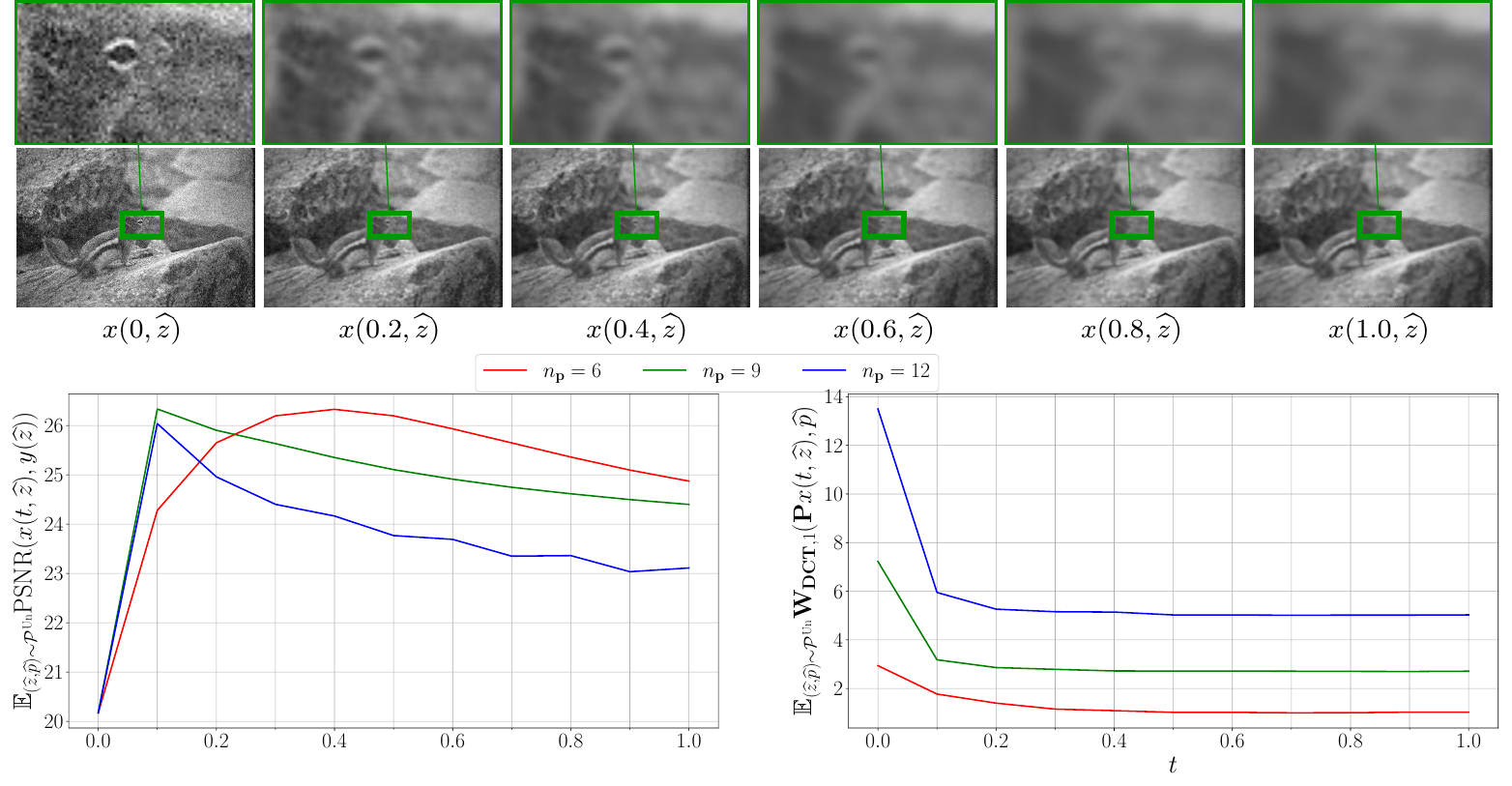}
\caption{Numerical experiments to illustrate pros and cons related to the patch-based Wasserstein loss functional.}
\label{fig:WassersteinRelevance}
\end{figure}

There are several pros and cons of the sole use of the patch-based Wasserstein loss functional.
For illustration, \Cref{fig:WassersteinRelevance} (first row) visualizes a prototypic image trajectory for additive white Gaussian denoising ($n_\patch=6$, $\sigma=25$, $A=A_\init=\Id$) along with a zoom with factor~$6$, which is evaluated at $t\in\{0,0.1,\ldots,1\}$.
All control parameters are computed using the patch-based Wasserstein loss functional in~\eqref{eq:objectiveFunctionUnsupervised}.
The probability measure associated with~$\training^\unsupervised$  is the discrete measure related to image patches of size $99\times 99$ of the DIV2K data set~\cite{AgTi17} for training and the BSDS68 data set~\cite{MaFo01} for validation, which were corrupted by additive white Gaussian noise with $\sigma=25$,
and reference patches~$\widehat p$ of the BSDS400 data set~\cite{MaFo01} of size $n_\patch^2$.
The initial noisy image is gradually smoothed along the trajectory, which results in a blurry image at its terminal point.
This property can also be observed in the entire data set.
In detail, \Cref{fig:WassersteinRelevance} (second row) shows the plots 
\[
t\mapsto\E_{(\widehat z,\widehat p)\sim\training^\unsupervised}\PSNR(x(t,\widehat z),y(\widehat z)),\qquad
t\mapsto\E_{(\widehat z,\widehat p)\sim\training^\unsupervised}\WassersteinDiscrete_{\DCT,1}(\Patch x(t,\widehat z),\widehat p)
\]
for $t\in\{0,0.1,\ldots,1\}$ and different color-coded $n_\patch\in\{6,9,12\}$.
Here, $y(\widehat z)$ denotes the ground truth image associated with~$\widehat z$.
As a result, all $\PSNR$ curves have a peak for $t\in[0.1,0.4]$, whereas the minimum of the Wasserstein distances is attained for larger~$t$.
Nevertheless, a significant increase of all $\PSNR$ scores compared to the noisy initialization can be observed.
Moreover, high-frequency information are discarded and oversmoothed images are generated for larger~$t$.

\subsection{Shared prior learning}\label{sub:sharedPriorLearning}
Next, we introduce shared prior learning as an extension of the patch-based Wasserstein loss functional, for which no ground truth images are required.

Shared prior learning describes a convex combination of the supervised loss functional with the patch-based Wasserstein loss functional resulting in the \emph{mean-field optimal control problem}
\begin{equation}
\inf\left\{J(x(1,\cdot,T^\supervised,\xi^\supervised,\theta),x(1,\cdot,T^\unsupervised,\xi^\unsupervised,\theta)):(T^\supervised,T^\unsupervised,\xi^\supervised,\xi^\unsupervised,\theta)\in\Gamma\right\},
\label{eq:objectiveFunctionSemi}
\end{equation}
where the cost functional is given by
\begin{equation}
J(\overline x,\widehat x)\coloneqq\alpha\E_{(\overline y,\overline z)\sim\training^\supervised}\loss(\overline x(\overline{z}),\overline y)+(1-\alpha)\E_{(\widehat z,\widehat p)\sim\training^\unsupervised}\WassersteinDiscrete_{\F,p}(\Patch \widehat x(\widehat z),\widehat p)
\label{eq:costFunctional}
\end{equation}
for $\alpha\in[0,1]$.
Here, the set of all control parameters is denoted by
\[
\Gamma\coloneqq[0,\Tmax]^2\times\Xi^2\times\Theta.
\]
Note that the optimal control problem~\eqref{eq:objectiveFunctionSemi} coincides with the supervised optimal control problem for~$\alpha=1$ and with the unsupervised optimal control problem for~$\alpha=0$.
We stress that the stopping times~$T^\supervised$ and~$T^\unsupervised$ as well as the parameters~$\xi^\supervised$ and~$\xi^\unsupervised$ of the data fidelity terms are
learned individually in the supervised and the unsupervised loss functionals, whereas the regularization parameter~$\theta$ is shared among both.
This separation is motivated by the fact that the data fidelity terms strongly depend on the reconstruction task, while the regularizer reflects the prior knowledge of the underlying image distribution.

In the standard setting, $(\overline y,\overline z)\sim\training^\supervised$ describes uncorrupted images~$\overline y$ taken from any data set, where the corresponding observations~$\overline z$ are synthesized by the known forward model~\eqref{eq:inverseProblem} potentially degraded by simulated noise,
and $(\widehat z,\widehat p)\sim\training^\unsupervised$ refers to pairs of observations~$\widehat z$ generated by a forward model with realistic noise and \emph{independent} reference patches~$\widehat p$.
We emphasize that the observations~$\widehat z$ are entirely \emph{unrelated} to all remaining training data, which are corrupted by unknown noise and potentially by an unknown linear operator in the forward model~\eqref{eq:inverseProblem}.
Actually, we are \emph{exclusively} interested in the reconstruction of the observations~$\widehat z$ and not in the reconstruction quality for the synthesized observations~$\overline z$.
Intuitively, the reconstruction quality of our method depends on the similarity of the degradation processes~\eqref{eq:inverseProblem} for the synthesized and the real observations.
Hence, shared prior learning can be seen as a trade-off between two opposing effects:
\begin{itemize}
\item
Supervised learning leads to impressive reconstruction results if the noise statistics and the linear operator are known.
However, if these assumptions are not entirely satisfied (for example, if the noise statistics are slightly altered), then supervised learning could yield poor results.
\item
We have seen in \cref{sub:patchWasserstein} that unsupervised learning accurately compares image statistics, but fails to recover high-frequency information.
\end{itemize}
In summary, shared prior learning is an approach to further enhance the reconstruction quality of the given observations~$\widehat z$ without ground truth, to which
all remaining training data~$\overline y$, $\overline z$ and~$\widehat p$ are unrelated.
The shared parameters of the prior are simultaneously adapted to the real images via the unsupervised loss functional and to the synthesized observations, and thereby we intend to overcome the disadvantages of the patch-based Wasserstein loss functional discussed in \cref{sub:patchWasserstein}.

Next, we numerically verify the benefits of shared prior learning.
Let $\training^\supervised$ be the distribution modeling the entire BSDS400 data set~\cite{MaFo01} (each image with equal probability),
where each $\overline z$ is the sum of a ground truth image~$\overline y$ and additive white Gaussian noise with $\sigma=15$.
The probability measure in the unsupervised case coincides with the measure used for \Cref{fig:WassersteinRelevance}, in which the observations are degraded by additive white Gaussian noise with~$\sigma=25$.
The image trajectory depicted in \Cref{fig:sharedPriorOverview} (first row) emanates from the same noisy initial image as in \Cref{fig:WassersteinRelevance},
all subsequent images are evaluated at $t\in\{0.2,0.4,\ldots,1.0\}$, for which the control parameters are computed in~\eqref{eq:objectiveFunctionSemi} using $\alpha=0.9$.
Compared to the corresponding sequence in \Cref{fig:WassersteinRelevance}, fine details and structures are preserved and less smoothing artifacts are visible.
We have computed the average $\PSNR$ values on the entire validation set in \Cref{fig:sharedPriorOverview} (lower left), in which the control parameters are obtained from \eqref{eq:objectiveFunctionSemi} with $\alpha\in\{0,0.5,0.9,1.0\}$.
At the terminal point, the average $\PSNR$ scores of the curves $\alpha=0.5$ and $\alpha=0.9$ are significantly above the corresponding unsupervised curve, which itself lies clearly above the supervised curve.
The corresponding Wasserstein distances are plotted in~\Cref{fig:sharedPriorOverview} (lower right).
Interestingly, the terminal points of the curves $\alpha\in\{0,0.5,0.9\}$ nearly coincide, whereas the terminal point of the supervised curve with $\alpha=1$ is substantially higher.
In summary, this example nicely illustrates the superiority of shared prior learning to pure supervised or unsupervised learning in the absence of ground truth images.
We will present further numerical experiments in \cref{sec:numerics}.

\begin{figure}
\includegraphics[width=\linewidth]{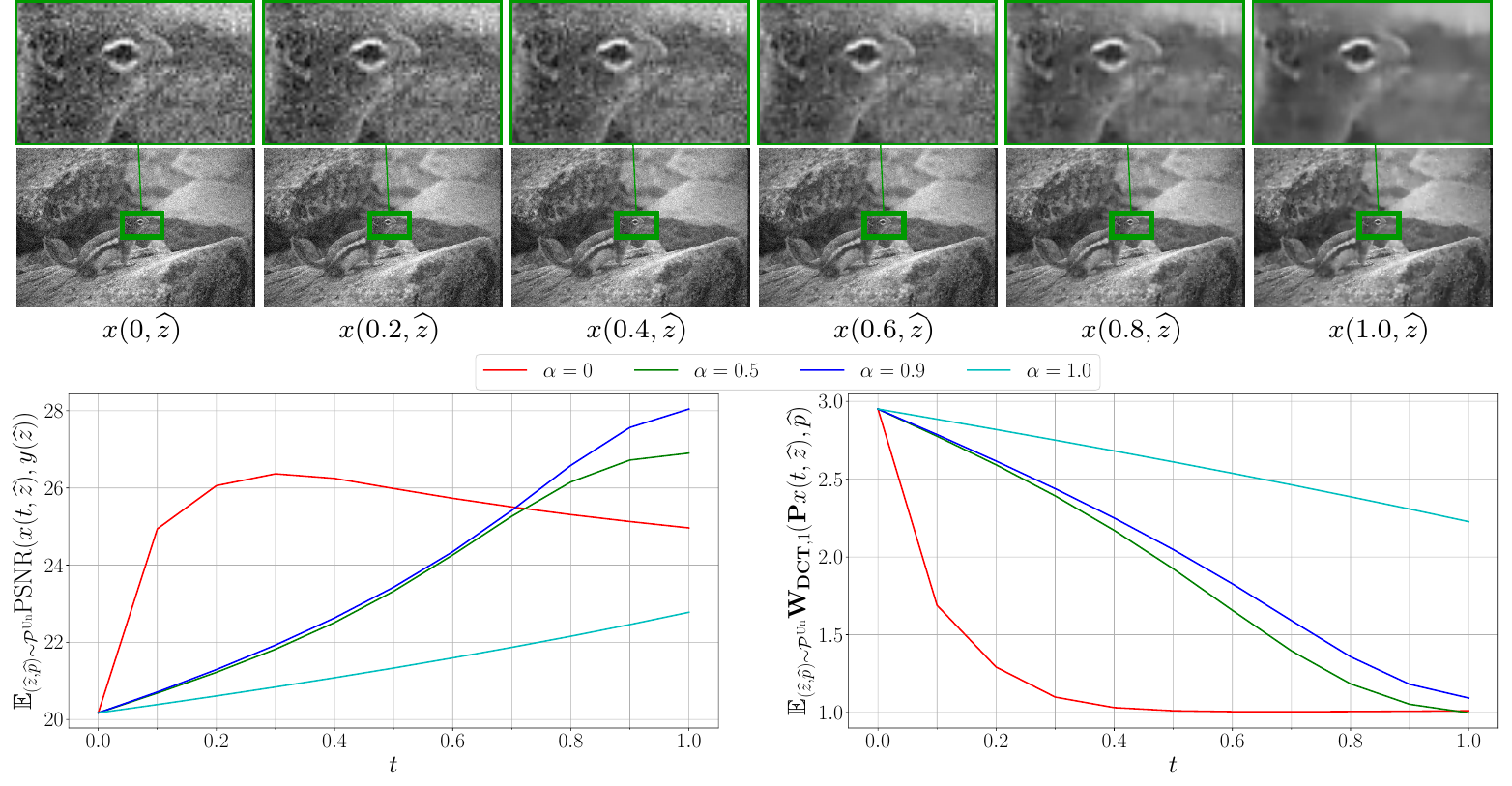}
\caption{Illustration of the advantages of shared prior learning compared to the sole use of the patch-based Wasserstein loss functional.}
\label{fig:sharedPriorOverview}
\end{figure}

The existence of optimal control parameters for~\eqref{eq:objectiveFunctionSemi} is verified in the next theorem:
\begin{theorem}[Existence of solutions]\label{thm:existenceSolutionSemi}
The minimum in~\eqref{eq:objectiveFunctionSemi} for $\alpha\in[0,1]$ is attained.
\end{theorem}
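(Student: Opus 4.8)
The plan is to apply the direct method of the calculus of variations. First I would observe that the feasible set $\Gamma = [0,\Tmax]^2 \times \Xi^2 \times \Theta$ is compact, since it is a finite product of compact sets ($[0,\Tmax]$ is a compact interval, and $\Xi$, $\Theta$ are compact by assumption in \cref{subsub:variation}). Hence it suffices to show that the cost functional $(T^\supervised, T^\unsupervised, \xi^\supervised, \xi^\unsupervised, \theta) \mapsto J(x(1,\cdot,T^\supervised,\xi^\supervised,\theta), x(1,\cdot,T^\unsupervised,\xi^\unsupervised,\theta))$ is lower semicontinuous (in fact I expect continuity) on $\Gamma$; then a minimizing sequence has a convergent subsequence whose limit is a minimizer.

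The key step is therefore a continuous-dependence result for the flow map. For fixed $z$, the right-hand side $Tf(x,z,\xi,\theta) = -T\bigl(A^\top\nabla_1\mathcal{D}(Ax,z,\xi) + \nabla_1\mathcal{R}(x,\theta)\bigr)$ of the state equation \eqref{eq:gradientFlow} is, by the $C_c^3$ regularity assumptions on $\mathcal{D}$ and $\mathcal{R}$, globally Lipschitz in $x$ (with constant depending on $\sigma_{\max}(A)$ and the $C^2$ bounds of $\mathcal{D}$, $\mathcal{R}$, which are uniform over the compact sets $\Xi$, $\Theta$) and jointly continuous in $(x,\xi,\theta,T)$. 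By the standard Picard–Lindelöf theory together with Grönwall's inequality, the terminal value $x(1,z,T,\xi,\theta)$ depends continuously on $(T,\xi,\theta)$, and moreover continuously on the initial datum $A_\init z$, with a Grönwall-type estimate $\|x(1,z,T,\xi,\theta)\| \le C(1 + \|z\|)$ that is uniform over $\Gamma$. In particular, since $\zeta$ (hence $z$) has finite second moment and the loss $\loss$ has at most quadratic growth while $\|x-y\|_p$ is controlled by the analogous bound, the integrands in \eqref{eq:costFunctional} are dominated by an integrable envelope uniformly in the control parameters.

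Given the pointwise (in $z$, resp.\ $(z,p)$) continuity of the integrands in the controls and this uniform domination, the dominated convergence theorem yields continuity of both expectations $\E_{(\overline y,\overline z)}\loss(x(1,\overline z,\cdot),\overline y)$ and $\E_{(\widehat z,\widehat p)}\WassersteinDiscrete_{\F,p}(\Patch x(1,\widehat z,\cdot),\widehat p)$ in the controls --- for the second one I also need that $(v,w)\mapsto\WassersteinDiscrete_p(\mu[\F v],\mu[\F w])$ is continuous, which follows because $\F$ and $\Patch$ are linear (hence continuous), the cost matrix $C_p[\cdot,\cdot]$ depends continuously on its arguments, and $\WassersteinDiscrete_p$ is the value of a linear program over a fixed compact transport polytope, hence continuous in the cost matrix by Berge's maximum theorem. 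Taking the convex combination with weight $\alpha\in[0,1]$ preserves continuity, so $J$ is continuous on the compact set $\Gamma$ and attains its infimum.

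The main obstacle I anticipate is not conceptual but bookkeeping: carefully tracking that the Lipschitz and growth constants coming out of the Grönwall estimate are genuinely uniform over $\Gamma$ (this is where compactness of $\Xi$ and $\Theta$ and the compact-support assumption on $\mathcal{D}$, $\mathcal{R}$ are used to bound $\nabla_1\mathcal{D}$, $\nabla_1\mathcal{R}$ and their $x$-derivatives), and verifying the integrable domination needed to pass the limit inside the expectation --- in particular confirming that the finite-second-moment hypotheses on $\zeta$ and on the training distributions $\training^\supervised$, $\training^\unsupervised$ suffice, given the at-most-quadratic growth of $\loss$ and $\|\cdot\|_p$ in combination with the affine-in-$\|z\|$ bound on the terminal state.
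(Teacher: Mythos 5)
Your proposal is correct and follows the same skeleton as the paper's proof: compactness of $\Gamma$, Picard--Lindel\"of for well-posedness of the flow, and a Gr\"onwall estimate (uniform over $\Gamma$ thanks to the compact support of $\mathcal{D},\mathcal{R}$ and compactness of $\Xi,\Theta$) giving continuous dependence of the terminal state on the controls. The only real divergence is in how the limit is passed inside the expectations: the paper establishes $L^1$-convergence of the states with respect to the data distributions, extracts a pointwise a.e.\ convergent subsequence, and invokes Fatou's lemma, so it only ever needs nonnegativity of the integrands and lower semicontinuity of $J$; you instead prove full continuity of $J$ on $\Gamma$ via dominated convergence, using the affine-in-$\Vert z\Vert$ bound on the terminal state (valid since $f$ is bounded) together with the finite-second-moment hypothesis, and you make explicit the continuity of $(v,w)\mapsto\WassersteinDiscrete_p(\mu[\F v],\mu[\F w])$ as the value of a linear program over a fixed compact polytope -- a point the paper leaves implicit in its Fatou step. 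Both routes are valid; yours yields the slightly stronger conclusion (continuity rather than mere lower semicontinuity of the reduced cost) at the price of the domination bookkeeping, while the paper's Fatou argument sidesteps the integrable envelope entirely. One harmless imprecision: the two admissible losses $\ell^1_\iota$ and $\ell^2$ have linear, not quadratic, growth, so the second-moment assumption is more than sufficient for your envelope.
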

\begin{proof}
Let $(T^{\supervised,j},T^{\unsupervised,j},\xi^{\supervised,j},\xi^{\unsupervised,j},\theta^j)\in\Gamma$ be a sequence of control parameters such that
\begin{align*}
\overline{x}^j:&\mathcal{Z}^\supervised\to\R^{n_y},&\overline z&\mapsto x(1,\overline z,T^{\supervised,j},\xi^{\supervised,j},\theta^j),\\
\widehat{x}^j:&\mathcal{Z}^\unsupervised\to\R^{n_y},&\widehat z&\mapsto x(1,\widehat z,T^{\unsupervised,j},\xi^{\unsupervised,j},\theta^j)
\end{align*}
are minimizing sequences for~$J$.
Taking into account the smoothness assumptions regarding $\nabla_1\mathcal{D}$ and $\nabla_1\mathcal{R}$ as well as their compact support we can apply the Picard--Lindel\"of Theorem~\cite[Theorem~2.2]{Te12}
and \cite[Theorem~2.17]{Te12} to deduce that the mappings $t\mapsto x(t,\overline z,T^{\supervised,j},\xi^{\supervised,j},\theta^j)$ and $t\mapsto x(t,\widehat z,T^{\unsupervised,j},\xi^{\unsupervised,j},\theta^j)$ are well-defined for $t\in\R$ and differentiable.
Due to the compactness of the control parameter spaces a subsequence of $(T^{\supervised,j},T^{\unsupervised,j},\xi^{\supervised,j},\xi^{\unsupervised,j},\theta^j)$ (not relabeled)
converges to the optimal control parameters $(T^{\supervised,\ast},T^{\unsupervised,\ast},\xi^{\supervised,\ast},\xi^{\unsupervised,\ast},\theta^\ast)\in\Gamma$.
Next, we prove that $\overline{x}^j\to\overline{x}^\ast$ in~$L^1(\mathcal{Z}^\supervised,\Prob^{\supervised,\mathcal{Z}})$ and $\widehat{x}^j\to \widehat{x}^\ast$ in~$L^1(\mathcal{Z}^\unsupervised,\Prob^{\unsupervised,\mathcal{Z}})$, where
$\overline{x}^\ast(\overline z)\coloneqq x(1,\overline z,T^{\supervised,\ast},\xi^{\supervised,\ast},\theta^\ast)$ and $\widehat{x}^\ast(\widehat z)\coloneqq x(1,\widehat z,T^{\unsupervised,\ast},\xi^{\unsupervised,\ast},\theta^\ast)$.
Gronwall's inequality applied to the initial value problem~\cite[Lemma~2.7 \& Theorem~2.8]{Te12} implies that
\begin{align}
\E_{\overline z\sim\training^{\supervised,\mathcal{Z}}}\left(\Vert\overline{x}^\ast(\overline z)-\overline{x}^j(\overline z)\Vert_2\right)&\leq\frac{M^j(\exp(L)-1)}{L},\label{eq:Gronwall1}\\
\E_{\widehat z\sim\training^{\unsupervised,\mathcal{Z}}}\left(\Vert \widehat{x}^\ast(\widehat z)-\widehat{x}^j(\widehat z)\Vert_2\right)&\leq\frac{M^j(\exp(L)-1)}{L}.\label{eq:Gronwall2}
\end{align}
Here, $L$ denotes the Lipschitz constant of the right-hand side of the state equation, i.e.
\[
L\coloneqq\sup_{\substack{(x_1,z_1),(x_2,z_2)\in\R^{n_y}\times\R^{n_z}\\(x_1,z_1)\neq(x_2,z_2)}}\frac{\Vert T^\ast f(x_1,z_1,\xi^\ast,\theta^\ast)-T^\ast f(x_2,z_2,\xi^\ast,\theta^\ast)\Vert_2}{\Vert(x_1,z_1)-(x_2,z_2)\Vert_2},
\]
which is finite due to the compact support of~$\mathcal{D}$ and~$\mathcal{R}$.
Likewise, $M^j$ is given by
\[
M^j\coloneqq\sup_{(x,z)\in\R^{n_y}\times\R^{n_z}}\Vert T^j f(x,z,\xi^j,\theta^j)-T^\ast f(x,z,\xi^\ast,\theta^\ast)\Vert_2,
\]
which is finite following the same line of arguments as above.
Moreover, $M^j\leq C_M(\vert T^\ast-T^j\vert+\Vert\theta^\ast-\theta^j\Vert_2+\Vert\xi^\ast-\xi^j\Vert_2)$ holds true for a constant $C_M>0$.
Thus, \eqref{eq:Gronwall1} and \eqref{eq:Gronwall2} imply that $\overline{x}^j\to\overline{x}^\ast$ in~$L^1(\mathcal{Z}^\supervised,\Prob^{\supervised,\mathcal{Z}})$ and $\widehat{x}^j\to \widehat{x}^\ast$ in~$L^1(\mathcal{Z}^\unsupervised,\Prob^{\unsupervised,\mathcal{Z}})$.
In particular, for both sequences there exist subsequences (not relabeled) that converge pointwise~a.e.
Finally, by taking into account Fatou's lemma we have proven $J(\overline{x}^\ast,\widehat{x}^\ast)\leq\liminf_{j\to\infty}J(\overline x^j,\widehat{x}^j)$, which concludes this proof.
\end{proof}
\begin{remark}
Using a perturbation argument and Gronwall's inequality we can easily prove that the solution satisfies $x\in C^1([0,1],C^0(\R^{n_z}\times[0,\Tmax]\times\Xi\times\Theta,\R^{n_y}))$.
\label{rem:smoothness}
\end{remark}

\section{Discretization of mean-field optimal control problems}\label{sec:discretization}
The aim of this section is the discrete formulation of the mean-field optimal control problems discussed in~\cref{sec:meanfield}.
To this end, we present different discretizations for the gradient flow equation in \cref{sub:discretizationSchemes}.
Furthermore, we elaborate on the discretization of the data fidelity term (\cref{sub:dataFidelity}) and the total deep variation regularizer (\cref{sub:TDV}).
Finally, the discretized mean-field optimal control problems for energy-based learning and shared prior learning are detailed in \cref{sub:discreteSharedPriorLearning}.

\subsection{Discretization schemes for the gradient flow}\label{sub:discretizationSchemes}
In this section, we propose three different discretization schemes for the state equation~\eqref{eq:gradientFlow}.
In all schemes, the number of iteration steps~$S\in\N$ is assumed to be fixed.
Recall that $z\in\R^{n_z}$ refers to the observation, $x_0=A_\init z\in\R^{n_y}$ is an initial value depending on~$z$, $T\in[0,\Tmax]$ is the time horizon, $\xi\in\Xi$ and $\theta\in\Theta$ are the learnable parameters of the data fidelity term and the regularizer, respectively.
The proposed schemes are as follows:
\begin{itemize}
\item
The \emph{explicit forward Euler scheme} as one of the simplest Runge--Kutta schemes~\cite{At89} reads as
\[
x_{s+1}=x_s+f^\explicit(x_s,z,S,T,\xi,\theta)
\]
for $s=0,\ldots,S-1$ with 
\[
f^\explicit(x,z,S,T,\xi,\theta)\coloneqq-\tfrac{T}{S}A^\top\nabla_1\mathcal{D}(Ax,z,\xi)-\tfrac{T}{S}\nabla_1\mathcal{R}(x,\theta).
\]
\item
Next, we introduce the \emph{semi-implicit forward Euler scheme}, in which an implicit update step on the data fidelity term and an explicit step on the regularizer are performed.
However, due to this particular structure we can only apply the scheme for image denoising (i.e.~$A=\Id$) since a closed-form expression for~$x_{s+1}$ is in general not available.
The scheme is given by
\begin{equation}
x_{s+1}=x_s-\tfrac{T}{S}\nabla_1\mathcal{D}(x_{s+1},z,\xi)-\tfrac{T}{S}\nabla_1\mathcal{R}(x_s,\theta)
\label{eq:implicitScheme}
\end{equation}
for $s=0,\ldots,S-1$.
Note that this equation is equivalent to
\[
x_{s+1}=f^\implicit(x_s,z,S,T,\xi,\theta),
\]
where 
\begin{equation}
f^\implicit(x_s,z,S,T,\xi,\theta)\coloneqq(\Id+\tfrac{T}{S}\nabla_1\mathcal{D}(\cdot,z,\xi))^{-1}(x_s-\tfrac{T}{S}\nabla_1\mathcal{R}(x_s,\theta)),
\label{eq:proxData}
\end{equation}
and $(\Id+\tfrac{T}{S}\nabla_1\mathcal{D}(\cdot,z,\xi))^{-1}$ is the proximal operator~\cite{ChPo16,BuSa16}.
\item
The starting point of the \emph{Euler--Newton scheme} is a linearization of $\nabla_1\mathcal{D}$ in~\eqref{eq:implicitScheme} for a general linear operator~$A$ around the base point $x_{s+\frac{1}{2}}=x_s-\frac{T}{S}\nabla_1\mathcal{R}(x_s,\theta)$.
Unlike the semi-implicit discretization, this scheme is applicable for general linear inverse problems and reads as
\[
x_{s+1}=x_{s+\frac{1}{2}}-\tfrac{T}{S}A^\top\nabla_1\mathcal{D}(Ax_{s+\frac{1}{2}},z,\xi)-\tfrac{T}{S}A^\top\nabla_1^2\mathcal{D}(Ax_{s+\frac{1}{2}},z,\xi)(Ax_{s+1}-Ax_{s+\frac{1}{2}})
\]
for $s=0,\ldots,S-1$.
After rearranging the terms we can rewrite this scheme as
\[
x_{s+1}=f^\Newton(x_{s+\frac{1}{2}},z,S,T,\xi,\theta),
\]
where the function $f^\Newton$ is given by
\[
f^\Newton(x,z,S,T,\xi,\theta)\coloneqq x-\left(\tfrac{S}{T}\Id+A^\top\nabla_1^2\mathcal{D}(Ax,z,\xi)A\right)^{-1}A^\top\nabla_1\mathcal{D}(Ax,z,\xi).
\]
Note that this scheme is equivalent to a single Newton step of the proximal problem
\[
\min_{x\in\R^{n_y}}\frac{\Vert x-x_{s+\frac{1}{2}}\Vert_2^2}{2\tfrac{T}{S}}+\mathcal{D}(Ax,z,\xi),
\]
which is initialized with the base point.
We highlight that this scheme is stable due to the proximal structure of $f^\Newton$.
\end{itemize}
We denote by $x^{\kappa,S}:\R^{n_z}\times[0,\Tmax]\times\Xi\times\Theta\to(\R^{n_y})^{S+1}$ the discrete trajectories, which are associated with the discretization scheme $\kappa\in\{\explicit,\implicit,\Newton\}$, the control parameters~$T$, $\xi$ and~$\theta$, and the observation~$z$.
In particular, $x_S^{\kappa,S}(z,T,\xi,\theta)$ refers to the terminal state of this discrete trajectory.

\subsection{Discretization of data fidelity term and proximal operator}\label{sub:dataFidelity}
In this subsection, we elaborate on the discretization of the data fidelity term~$\mathcal{D}$ and the proximal operator discussed in \cref{subsub:variation} and \cref{sub:discretizationSchemes}.
Since the state equation~\eqref{eq:gradientFlow} only requires the gradient of the data fidelity term,
we start with the discretization of $\nabla_1\mathcal{D}$ using a weighted sum of cubic splines in order to fulfill the regularity assumptions.
Since the semi-implicit scheme allows for a reformulation as a proximal operator, we learn a parametric function representing this operator instead of~$\nabla_1\mathcal{D}$ in this case.
Henceforth, we use the notation and the definitions introduced in \cref{subsub:variation}.

In the case of the Fr\'echet metric and the explicit/Euler--Newton scheme, we discretize the data fidelity term as the weighted sum of $2n_\Xi+1$ cubic spline basis functions~$\varphi_j^{n_\Xi}$, $j\in\{-n_\Xi,\ldots,n_\Xi\}$, with $n_\Xi$~degrees of freedom in total due to the symmetry assumption $\rho_F(x,\xi)=\rho_F(-x,\xi)$.
The centers of the basis functions are located at equidistant points on the interval~$[-Q,Q]$ for a fixed $Q>0$.
Each basis function $\varphi_j^{n_\Xi}$ is a translation of a cubic spline reference basis function~$\varphi$ with support $\operatorname{supp}(\varphi)=[-\frac{2Q}{n_\Xi},\frac{2Q}{n_\Xi}]$ and centered around~$\frac{Qj}{n_\Xi}$,
i.e.~$\varphi_j^{n_\Xi}(x)=\varphi(x-\frac{Qj}{n_\Xi})$.
Thus,
\begin{equation}
\nabla_1\rho_F(x,\xi)=\sum_{j=-n_\Xi}^{n_\Xi}\xi_j\varphi_j^{n_\Xi}(x)
\label{eq:discFrechet}
\end{equation}
for $x\in[-Q,Q]$.
In particular, the coefficients of the basis functions satisfy~$\xi_j=-\xi_{-j}$, $\xi_0=0$, $\xi_{j-1}\leq \xi_j$ and $\xi_j\geq 0$ for $j\in\{1,\ldots,n_\Xi\}$.
Hence, $\xi=(\xi_1,\ldots,\xi_{n_\Xi})\in\Xi\coloneqq(\R_0^+)^{n_\Xi}$.

In the case of the generalized divergence and the explicit/Euler--Newton scheme, the domain~$[-Q,Q]^2$ is two-dimensional, that is why we discretize the data fidelity term for $(x,y)\in[-Q,Q]^2$ and $N_\Xi=\frac{\sqrt{n_\Xi}}{2}$ using the same basis functions as before as follows:
\begin{equation}
\nabla_1\rho_\div(x,y,\xi)=\sum_{i=-N_\Xi}^{N_\Xi}\sum_{j=-N_\Xi}^{N_\Xi}\xi_{i,j}\varphi_i^{N_\Xi}(x)\varphi_j^{N_\Xi}(y).
\label{eq:discDivergence}
\end{equation}
The coefficients have to satisfy~$\xi_{i-1,j}\leq\xi_{i,j}$ and $\xi_{i,i}=0$ for $i,j\in\{-N_\Xi,\ldots,N_\Xi\}$, and are adapted such that $\nabla_1\rho_\div(x,x,\xi)=0$ for $x\in[-Q,Q]$.
In summary, $\xi=\{\xi_{i,j}:i,j\in\{-N_\Xi,\ldots,N_\Xi\}\}\in\Xi\coloneqq\R^{n_\Xi}$ with the aforementioned constraints.

Finally, for the semi-implicit scheme we discretize the proximal operator appearing in~\eqref{eq:proxData} using \eqref{eq:discFrechet} or \eqref{eq:discDivergence}, where we additionally impose a $1$-Lipschitz constraint.
Note that the monotonicity is already implied by the constraints on the coefficients.

We remark that in all cases the identity of indiscernibles is ignored for numerical reasons, i.e.~$\mathcal{D}(Ax,z,\xi)=0$ does not necessarily imply $Ax=z$.

\subsection{Total deep variation regularizer}\label{sub:TDV}
In this subsection, we present the convolutional neural network~$\mathcal{N}$ as a vital part of~$\mathcal{R}$, which is taken from the \emph{total deep variation regularizer}~\cite{KoEf20,KoEf20a}.
On the largest scale, the total deep variation is composed of $3$~blocks $\textrm{Bl}^i$, $i=1,2,3$, as depicted in~\Cref{fig:network} (upper left),
where each block is designed as a U-Net~\cite{RoFi15} with $5$~residual blocks $\mathrm{R}_1^i,\ldots,\mathrm{R}_5^i$, as shown in \Cref{fig:network} (lower left).
In particular, residual blocks on the same scale are linked with skip connections (solid vertical arrows).
To enhance the expressiveness of~$\mathcal{N}$, residual connections (dotted vertical arrows) connecting residual blocks on the same scale of consecutive blocks are added whenever possible.
The residual block~$\mathrm{R}_j^i$ for $j=1,\ldots,5$ is modeled as 
\[
\mathrm{R}_j^i(x)=x+K_{j,2}^i\Phi(K_{j,1}^i x)
\]
for $3\times 3$ convolution operators with $m$~feature channels and no bias, which are represented by matrices~$K_{j,1}^i,K_{j,2}^i\in\R^{nm\times nm}$  (\Cref{fig:network}, upper right).
Here, we use the log-Student-t-distribution $\phi(x)=\frac{1}{2}\log(1+x^2)$ as the componentwise activation function of $\Phi=(\phi,\ldots,\phi):\R^{nm}\to\R^{nm}$, which is inspired by
the work of Huang and Mumford~\cite{HuMu99} on the statistics of natural images.
To avoid aliasing, we follow a recent approach by Zhang~\cite{Zh19}, 
who advocated the use of $3\times 3$ convolutions and transposed convolutions with stride~$2$ and a blur kernel to realize downsampling and upsampling.
In total, we use $\vert\Theta\vert\approx 4\cdot 10^5$ learnable parameters.
\begin{figure}
\centering
\includegraphics[width=.8\linewidth]{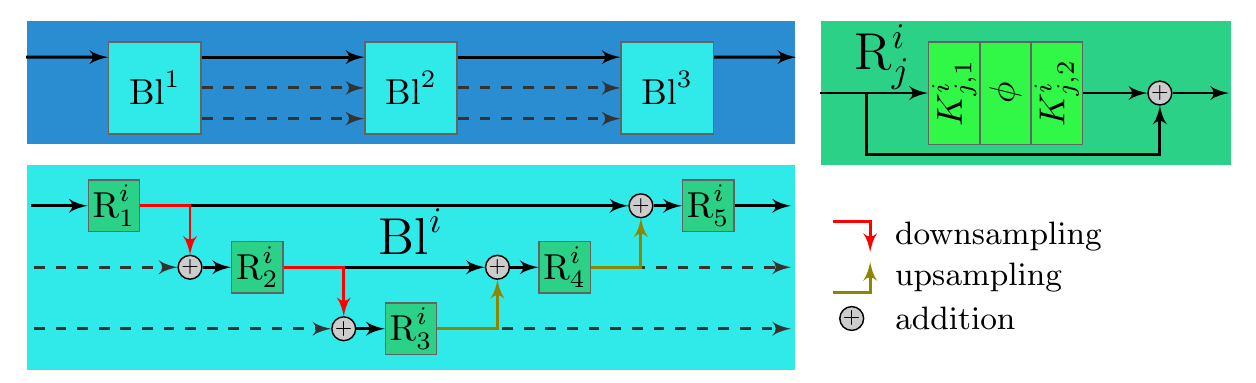}
\caption{The network structure of the total deep variation with $3$~blocks each operating on $3$~scales.}
\label{fig:network}
\end{figure}

\subsection{Discretized mean-field optimal control problems}\label{sub:discreteSharedPriorLearning}
This section is devoted to the introduction of the discretized versions of energy-based learning and shared prior learning.
Intuitively, we solely replace the terminal states of the image trajectories in~\eqref{eq:objectiveFunctionSupervised} and~\eqref{eq:objectiveFunctionSemi} by the corresponding terminal states of the discretized trajectories.

The \emph{discretized mean-field optimal control problem for energy-based learning} is defined as
\begin{equation}
\inf
\left\{\E_{(\overline y,\overline z)\sim\training^\supervised}\loss(x_S^{\kappa,S}(\overline z,T,\xi,\theta),\overline y):T\in[0,\Tmax],\xi\in\Xi,\theta\in\Theta\right\}.
\label{eq:discreteOptimalControlVariationalLearning}
\end{equation}
We recall from \cref{sub:discretizationSchemes} that $x_S^{\kappa,S}(\overline z,T,\xi,\theta)$ denotes the terminal state of the discrete trajectory.
Likewise, the \emph{discretized mean-field optimal control problem for shared prior learning} reads as
\begin{equation}
\inf\left\{J(x_S^{\kappa,S}(\cdot,T^\supervised,\xi^\supervised,\theta),x_S^{\kappa,S}(\cdot,T^\unsupervised,\xi^\unsupervised,\theta)):(T^\supervised,T^\unsupervised,\xi^\supervised,\xi^\unsupervised,\theta)\in\Gamma\right\}
\label{eq:discreteOptimalControlSharedPriorLearning}
\end{equation}
for $\kappa\in\{\explicit,\implicit,\Newton\}$ and~$\alpha\in[0,1]$, where~$J$ is the cost functional defined in~\eqref{eq:costFunctional}.
In particular, \eqref{eq:discreteOptimalControlVariationalLearning} is a special case of~\eqref{eq:discreteOptimalControlSharedPriorLearning} when setting $\alpha=1$.
\begin{theorem}[Existence of minimizers]
The minimizer in~\eqref{eq:discreteOptimalControlSharedPriorLearning} is attained for all discretization schemes $\kappa\in\{\explicit,\implicit,\Newton\}$.
\end{theorem}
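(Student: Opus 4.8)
The plan is to mirror the proof of \Cref{thm:existenceSolutionSemi} almost verbatim, replacing the continuous flow maps by the discrete trajectory maps $x_S^{\kappa,S}$. First I would take a minimizing sequence $(T^{\supervised,j},T^{\unsupervised,j},\xi^{\supervised,j},\xi^{\unsupervised,j},\theta^j)\in\Gamma$ for the functional $J$ composed with the discrete solution operators, and extract a convergent subsequence using compactness of $\Gamma=[0,\Tmax]^2\times\Xi^2\times\Theta$; call the limit $(T^{\supervised,\ast},T^{\unsupervised,\ast},\xi^{\supervised,\ast},\xi^{\unsupervised,\ast},\theta^\ast)\in\Gamma$. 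The key point is then continuity of the map $(T,\xi,\theta)\mapsto x_S^{\kappa,S}(z,T,\xi,\theta)$, uniformly in $z$ on the respective observation sets, which follows because each discretization scheme is an $S$-fold composition of one of the three update maps $f^\explicit$, $f^\implicit$, $f^\Newton$, and each of these is continuous in all its arguments. For $f^\explicit$ this is immediate from the $C^3_c$ regularity of $\mathcal{D}$ and $\mathcal{R}$; for $f^\implicit$ and $f^\Newton$ one additionally invokes that the relevant matrix $(\Id+\tfrac{T}{S}\nabla_1\mathcal{D}(\cdot,z,\xi))$, respectively $(\tfrac{S}{T}\Id+A^\top\nabla_1^2\mathcal{D}A)$, is boundedly invertible — for $f^\implicit$ because $\nabla_1\mathcal{D}$ is monotone (the proximal operator is well-defined and $1$-Lipschitz), and for $f^\Newton$ because $\tfrac{S}{T}\Id$ dominates and $\nabla_1^2\mathcal{D}$ is bounded by compact support, so the inverse depends continuously on the data via Cramer's rule. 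Composition of continuous maps being continuous, $x_S^{\kappa,S}(z,\cdot,\cdot,\cdot)$ is continuous on $\Gamma$ for each fixed $z$, with a modulus of continuity uniform over the (bounded) observation data.

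Next I would pass to the limit in the cost functional $J(\overline x,\widehat x)=\alpha\,\E_{(\overline y,\overline z)\sim\training^\supervised}\loss(\overline x(\overline z),\overline y)+(1-\alpha)\,\E_{(\widehat z,\widehat p)\sim\training^\unsupervised}\WassersteinDiscrete_{\F,p}(\Patch\widehat x(\widehat z),\widehat p)$. By the pointwise continuity just established, $x_S^{\kappa,S}(\overline z,T^{\supervised,j},\xi^{\supervised,j},\theta^j)\to x_S^{\kappa,S}(\overline z,T^{\supervised,\ast},\xi^{\supervised,\ast},\theta^\ast)$ for $\Prob^{\supervised,\mathcal{Z}}$-a.e.\ $\overline z$, and analogously for the unsupervised trajectories. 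Since $\loss$ is continuous and the patch-extraction operator $\Patch$, the (linear) feature operator $\F$, the cost matrix $C_p$, and the discrete Wasserstein distance $\WassersteinDiscrete_p$ are all continuous, the integrands converge pointwise a.e. To conclude I would either apply Fatou's lemma (both integrands are nonnegative, so $J(\overline x^\ast,\widehat x^\ast)\le\liminf_{j\to\infty}J(\overline x^j,\widehat x^j)$ directly) exactly as in the proof of \Cref{thm:existenceSolutionSemi}, or invoke dominated convergence using that the $C^3_c$ compact supports force all iterates $x_s^{\kappa,S}$ to stay in a fixed bounded set, hence the loss and Wasserstein integrands are uniformly bounded; the finite second moments of the data distributions then give an integrable majorant and in fact equality of $J$ at the limit with the infimum. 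In either case the limiting control parameters attain the minimum, which proves the theorem.

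The main obstacle — and the only place requiring genuine care rather than transcription — is the continuity of the implicit update maps $f^\implicit$ and $f^\Newton$ with respect to the control parameters, in particular uniform invertibility of the linear systems as $(T,\xi,\theta)$ vary over $\Gamma$ and, for $f^\implicit$, the degeneracy at $T=0$. At $T=0$ both schemes collapse: $f^\explicit$, $f^\implicit$ and $f^\Newton$ all reduce to the identity $x_{s+1}=x_s$, so the discrete trajectory is constantly $x_0=A_\init z$, which is continuous; away from $T=0$ the factor $\tfrac{S}{T}$ in $f^\Newton$ is bounded and bounded away from zero on any compact subinterval, and for $f^\implicit$ one rewrites $(\Id+\tfrac{T}{S}\nabla_1\mathcal{D}(\cdot,z,\xi))^{-1}(y)$ as the unique solver of a monotone equation whose solution is jointly continuous in $(T,\xi,z,y)$ by the implicit function theorem (using $\nabla_1^2\mathcal{D}\succeq 0$). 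A brief lemma establishing this joint continuity, followed by the composition argument and the Fatou/dominated-convergence step above, completes the proof; I would present it as a short paragraph rather than a long computation, since each ingredient is standard once the setup is spelled out.
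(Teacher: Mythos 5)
Your proposal follows essentially the same route as the paper's (sketched) proof: a minimizing sequence of controls, compactness of $\Gamma$, continuity of the discrete solution map $x^{\kappa,S}$ in the control parameters (your composition-of-update-maps argument is precisely the paper's ``discretization specific induction argument''), and then the Fatou-type passage to the limit exactly as in \Cref{thm:existenceSolutionSemi}. In fact you supply more detail than the paper does on the only delicate points (invertibility/joint continuity of the implicit and Euler--Newton updates and the $T=0$ degeneracy), so the argument is correct and complete.
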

\begin{proof}
This proof relies on standard arguments of the direct method in the calculus of variations~\cite{Da08}, that is why we only sketch the proof here.
We note that $x^{\kappa,S}\in C^0(\R^{n_z}\times[0,\Tmax]\times\Xi\times\Theta,(\R^{n_y})^{S+1})$ due to the assumptions regarding~$f$ for all three discretization schemes considered above.
Let $(T^{\supervised,j},T^{\unsupervised,j},\xi^{\supervised,j},\xi^{\unsupervised,j},\theta^j)\in\Gamma$ be a minimizing sequence of control parameters defining a minimizing sequences of states $x^{\kappa,S}(\cdot,T^{\supervised,j},\xi^{\supervised,j},\theta^j)$ and $x^{\kappa,S}(\cdot,T^{\unsupervised,j},\xi^{\unsupervised,j},\theta^j)$.
Due to the compactness of the control parameter space a subsequence thereof converges to $(T^{\supervised,\ast},T^{\unsupervised,\ast},\xi^{\supervised,\ast},\xi^{\unsupervised,\ast},\theta^\ast)\in\Gamma$.
Then, a discretization specific induction argument reveals that the minimizing sequences of states actually converge to 
$x^{\kappa,S}(\cdot,T^{\supervised,\ast},\xi^{\supervised,\ast},\theta^\ast)$ and $x^{\kappa,S}(\cdot,T^{\unsupervised,\ast},\xi^{\unsupervised,\ast},\theta^\ast)$, respectively.
The remainder of this proof is similar to~\Cref{thm:existenceSolutionSemi}.
\end{proof}

\section{Consistency of discretization}\label{sec:Mosco}
In this section, we prove the consistency of the temporal discretization schemes in terms of Mosco convergence~\cite{Mo69} as~$S\to\infty$ (see \Cref{thm:Mosco}).
Based on this result, we can verify the convergence of the associated minimizers in \Cref{thm:MoscoExistence}.

Mosco convergence is a notion of convergence for functionals, which in addition implies the convergence of minimizers under suitable conditions.
We note that Mosco convergence is a stronger version of $\Gamma$-convergence~\cite{Ma93}, since the former implies the latter.
Let us first recall their definitions:
\begin{definition}[Mosco convergence]
Let $(X,d)$ be a metric space.
We consider the functionals $\mathcal{J}^S,\mathcal{J}:X\to\overline\R$ for $S\in\N$.
Then the sequence $\mathcal{J}^S$ converges to $\mathcal{J}$ in the sense of Mosco w.r.t.~the topology induced by $d$ if the following holds:
\begin{enumerate}[label=(M\arabic*),leftmargin=3em]
\item \label{MoscoItem1}
For every sequence $\{x^S\}_{S\in\N} \subset X$ such that $x^S$ converges weakly to $x\in X$ (denoted by $x^S\rightharpoonup x\in X$) the subsequent inequality holds true:
\begin{equation}\tag{liminf-inequality}
\mathcal{J}(x) \leq \liminf_{S\to\infty} \mathcal{J}^S(x^S).
\end{equation}
\item For every $x\in X$ there exists a recovery sequence $\{x^S\}_{S\in\N}\subset X$, i.e.~$x^S\to x$ as $S\to\infty$ and
\begin{equation}\tag{limsup-inequality}
\mathcal{J}(x) \geq \limsup_{S\to\infty} \mathcal{J}^S(x^S).
\end{equation}
\end{enumerate}
If the weak topology in~\ref{MoscoItem1} is replaced by the strong topology induced by $d$,
then $\mathcal{J}^S$ is said to $\Gamma$-converge to $\mathcal{J}$ w.r.t.~the topology induced by $d$.	
\end{definition}
We define $\mathcal{I}^S:C^0(\R^{n_z},\R^{n_y})^{S+1}\to H^1([0,1],C^0(\R^{n_z},\R^{n_y}))\cap C^0([0,1]\times\R^{n_z},\R^{n_y})$ as the affine interpolation in time, i.e.~for any $x\in C^0(\R^{n_z},\R^{n_y})^{S+1}$ we have
\[
\mathcal{I}^S[x](t,z)\coloneqq(s+1-St)x_s(z)+(St-s)x_{s+1}(z)
\]
for $s\in\{0,\ldots,S-1\}$, $t\in[\frac{s}{S},\frac{s+1}{S})$ and $\mathcal{I}^S[x](1,z)\coloneqq x_S(z)$.
Note that $\mathcal{I}^S[x](\frac{s}{S},z)=x_s(z)$.
\begin{remark}
Henceforth, we use the abbreviation $\gamma=(T^\supervised,T^\unsupervised,\xi^\supervised,\xi^\unsupervised,\theta)\in\Gamma$.
The measures of $L^2((0,1)\times\mathcal{Z}^\supervised)$ and $L^2((0,1)\times\mathcal{Z}^\unsupervised)$ are the one-dimensional Lebesgue measure in the first and the respective probability measure in the second component.
For convenience, we set $\mathbb{L}\coloneqq L^2((0,1)\times\mathcal{Z}^\supervised)\times L^2((0,1)\times\mathcal{Z}^\unsupervised)$.
\end{remark}
We define the temporal extension of the cost functional~$J$ in the time-discrete case $\mathcal{J}^S:\mathbb{L}\to\R_0^+$ and in the time-continuous case $\mathcal{J}:\mathbb{L}\to\R_0^+$ as follows:
\begin{align*}
\mathcal{J}^S(\overline X,\widehat X)
&\coloneqq
\begin{cases}
J(\overline x_S,\widehat{x}_S)
&\text{if }\exists\gamma\in\Gamma\text{ with }\overline x(\overline z)=x^{\kappa,S}(\overline z,T^\supervised,\xi^\supervised,\theta)\text{ s.t. }\overline X=\mathcal{I}^S[\overline x],\\
&\widehat{x}(\widehat z)=x^{\kappa,S}(\widehat z,T^\unsupervised,\xi^\unsupervised,\theta)\text{ s.t. }\widehat X=\mathcal{I}^S[\widehat x],\\
+\infty&\text{else},
\end{cases}
\\
\mathcal{J}(\overline X,\widehat X)
&\coloneqq
\begin{cases}
J(\overline X(1,\cdot),\widehat X(1,\cdot))&\text{if }\exists\gamma\in\Gamma\text{ s.t. }\overline X(t,\overline z)=x(t,\overline z,T^\supervised,\xi^\supervised,\theta),\\
&\widehat X(t,\widehat z)=x(t,\widehat z,T^\unsupervised,\xi^\unsupervised,\theta),\\
+\infty&\text{else}.
\end{cases}
\end{align*}
Next, we state the Mosco convergence for the explicit and the Euler--Newton discretization, we omit the proof for the semi-implicit scheme due to its limited applicability for denoising.
\begin{theorem}[Mosco convergence]\label{thm:Mosco}
$\mathcal{J}^S$ converges to~$\mathcal{J}$ in the sense of Mosco w.r.t.~the $\mathbb{L}$-topology as $S\to\infty$ for $\kappa\in\{\explicit,\Newton\}$.
\end{theorem}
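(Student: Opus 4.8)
The plan is to verify the two Mosco conditions \ref{MoscoItem1} and the limsup-inequality directly, reducing everything to the uniform (in $S$) convergence of the discrete interpolated trajectories $\mathcal{I}^S[x^{\kappa,S}]$ to the time-continuous trajectory $x$, together with the compactness of $\Gamma$ and the continuous dependence results from \Cref{thm:existenceSolutionSemi} and \Cref{rem:smoothness}. The first step is a classical consistency estimate for one-step schemes: for fixed $\gamma\in\Gamma$ and a fixed observation $z$, both $f^\explicit$ and $f^\Newton$ are consistent approximations of $Tf$ with local truncation error $O(S^{-2})$, uniformly in $z$ because $\mathcal{D}$ and $\mathcal{R}$ have compact support and bounded $C^3$-norms; combining this with the Lipschitz bound $L$ from \Cref{thm:existenceSolutionSemi} and a discrete Gronwall argument gives $\sup_z \Vert \mathcal{I}^S[x^{\kappa,S}](\cdot,z) - x(\cdot,z,\gamma)\Vert_{C^0([0,1])} \le C/S$, with $C$ independent of $z$ and locally uniform in $\gamma$. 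For the Euler--Newton scheme one additionally checks that the matrix $\tfrac{S}{T}\Id + A^\top \nabla_1^2\mathcal{D}(Ax,z,\xi)A$ is invertible with bounded inverse for $S$ large (its smallest eigenvalue is $\ge S/T - \sigma_{\max}(A)^2 \Vert\nabla_1^2\mathcal{D}\Vert_\infty$), so $f^\Newton$ is well-defined and smooth, and a Taylor expansion of the Newton step around $x_{s+1/2}$ shows it agrees with the semi-implicit (hence with the gradient flow) up to $O(S^{-2})$.

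For the liminf-inequality, take $(\overline X^S,\widehat X^S)\rightharpoonup(\overline X,\widehat X)$ in $\mathbb{L}$. If $\liminf_S \mathcal{J}^S(\overline X^S,\widehat X^S)=+\infty$ there is nothing to prove, so pass to a subsequence realizing a finite liminf; along it each $(\overline X^S,\widehat X^S)$ is of the admissible form with some $\gamma^S\in\Gamma$. By compactness of $\Gamma$, extract a further subsequence with $\gamma^S\to\gamma^\ast\in\Gamma$. The consistency estimate of the first step, now with $\gamma^S$ varying but converging, yields $\mathcal{I}^S[x^{\kappa,S}(\cdot,\gamma^S)]\to x(\cdot,\gamma^\ast)$ strongly in $\mathbb{L}$ (uniform convergence in $(t,z)$, hence $L^2$), so the weak limit is forced: $(\overline X,\widehat X)=(x(\cdot,\gamma^\ast),x(\cdot,\gamma^\ast))$ is admissible. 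Then the terminal states converge, $\overline x_S\to\overline X(1,\cdot)$ and $\widehat x_S\to\widehat X(1,\cdot)$ in $L^1$ of the respective observation spaces, and continuity of the loss $\loss$ (it is $\ell^1_\iota$ or $\ell^2$, both Lipschitz on bounded sets) together with continuity of $\WassersteinDiscrete_{\F,p}$ with respect to the points --- the cost matrix $C_p[\cdot,\cdot]$ and hence the optimal transport value depend continuously on $v,w$ --- plus Fatou's lemma give $\mathcal{J}(\overline X,\widehat X)\le\liminf_S \mathcal{J}^S(\overline X^S,\widehat X^S)$, as in the proof of \Cref{thm:existenceSolutionSemi}.

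For the limsup-inequality, given $(\overline X,\widehat X)\in\mathbb{L}$: if $\mathcal{J}(\overline X,\widehat X)=+\infty$ any sequence works, so assume it is finite, i.e.~$\overline X(\cdot,\overline z)=x(\cdot,\overline z,\gamma)$ and $\widehat X(\cdot,\widehat z)=x(\cdot,\widehat z,\gamma)$ for some $\gamma\in\Gamma$. Take the recovery sequence $(\overline X^S,\widehat X^S)\coloneqq(\mathcal{I}^S[x^{\kappa,S}(\cdot,\gamma)],\mathcal{I}^S[x^{\kappa,S}(\cdot,\gamma)])$ using the \emph{same} $\gamma$ for every $S$; by the first-step estimate this converges strongly in $\mathbb{L}$ to $(\overline X,\widehat X)$, and by the same continuity of $\loss$ and $\WassersteinDiscrete_{\F,p}$ together with dominated convergence (the integrands are uniformly bounded, owing to compact support of $\mathcal{D}$, $\mathcal{R}$ and boundedness of the patch/feature operators) we get $\mathcal{J}^S(\overline X^S,\widehat X^S)\to\mathcal{J}(\overline X,\widehat X)$, which gives the limsup-inequality (indeed equality). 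The main obstacle I expect is the uniformity of the consistency estimate over the observation variable $z$ and simultaneously over a convergent sequence of control parameters $\gamma^S$: one has to push the truncation-error and Gronwall bounds through with constants depending only on the $C^3$-bounds and supports of $\mathcal{D}$, $\mathcal{R}$ (which are uniform over the compact $\Xi\times\Theta$) and not on $z$ or on the particular $\gamma^S$, and for the Euler--Newton scheme to simultaneously control the invertibility of $\tfrac{S}{T^S}\Id+A^\top\nabla_1^2\mathcal{D}(\cdot,z,\xi^S)A$ uniformly; once this is in place the rest is a routine application of the direct method exactly as in \Cref{thm:existenceSolutionSemi}.
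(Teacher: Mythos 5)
Your proposal is correct and follows essentially the same route as the paper: a consistency/truncation estimate combined with a (discrete) Gronwall argument to show strong $\mathbb{L}$-convergence of the interpolated discrete trajectories to the continuous flow (with the extra Taylor-expansion and eigenvalue-bound care for $f^\Newton$), compactness of $\Gamma$ to identify the weak limit as an admissible trajectory for the liminf-inequality, and a recovery sequence built from the same control parameters for the limsup-inequality. The only cosmetic difference is that you phrase the error bound uniformly in $z$ while the paper estimates the expected error and handles the non-compact observation space via a compact exhaustion $K_\delta$, which does not change the argument.
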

The proof is deferred to the end of this section.
Finally, we prove the convergence of minimizers of~$\mathcal{J}^S$ to the respective minimizers of~$\mathcal{J}$ as $S\to\infty$ as well as the convergence of the energies.
\begin{theorem}\label{thm:MoscoExistence}
Let $(\overline{X}^S,\widehat{X}^S)\in\mathbb{L}$ be a sequence of minimizers of~$\mathcal{J}^S$ for $S\in\N$.
Then, a subsequence of $\{(\overline{X}^S,\widehat{X}^S)\}_{S\in\N}$ converges weakly in~$\mathbb{L}$
to a minimizer of~$\mathcal{J}$ and the associated sequence of energies converges to the respective energy.    
\end{theorem}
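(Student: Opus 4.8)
The plan is to deduce the statement from \Cref{thm:Mosco} by the standard mechanism for convergence of minimizers under $\Gamma$-/Mosco convergence: Mosco convergence implies $\Gamma$-convergence with respect to the weak $\mathbb{L}$-topology, so once an equi-coercivity (compactness) estimate for the minimizing sequence is in place, the conclusion follows from the liminf-inequality \ref{MoscoItem1} together with a comparison against recovery sequences. Recall also that $\mathbb{L}=L^2((0,1)\times\mathcal{Z}^\supervised)\times L^2((0,1)\times\mathcal{Z}^\unsupervised)$ is a Hilbert space, hence reflexive, so bounded sequences are weakly precompact.

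First I would establish that $\{(\overline X^S,\widehat X^S)\}_{S\in\N}$ is bounded in $\mathbb{L}$. By definition each minimizer is of the form $\overline X^S=\mathcal{I}^S[\overline x^S]$ with $\overline x^S(\overline z)=x^{\kappa,S}(\overline z,T^{\supervised,S},\xi^{\supervised,S},\theta^S)$ for control parameters ranging over the compact set $\Gamma$, and similarly for $\widehat X^S$. Since $\nabla_1\mathcal{D}$ and $\nabla_1\mathcal{R}$ have compact support (and, in the Euler--Newton case, $f^\Newton$ has the stable proximal form), the increments of each scheme $\kappa\in\{\explicit,\Newton\}$ are bounded in norm by $\tfrac{\Tmax}{S}$ times a constant independent of $S$ and of the control parameters, so that $\Vert\overline x_s^S(\overline z)\Vert_2\leq\Vert A_\init\overline z\Vert_2+C$ uniformly in $s$ and $S$. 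Because the observations have finite second moment, this yields a uniform bound in $L^\infty([0,1],L^2(\mathcal{Z}^\supervised))\subset L^2((0,1)\times\mathcal{Z}^\supervised)$, and the piecewise-affine interpolation $\mathcal{I}^S$ does not spoil it; the unsupervised component is handled identically. Passing to a subsequence (not relabeled), I obtain $(\overline X^S,\widehat X^S)\rightharpoonup(\overline X^\ast,\widehat X^\ast)$ weakly in $\mathbb{L}$.

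Next I would combine the two Mosco inequalities. The liminf-inequality applied to the weakly convergent subsequence gives
\[
\mathcal{J}(\overline X^\ast,\widehat X^\ast)\leq\liminf_{S\to\infty}\mathcal{J}^S(\overline X^S,\widehat X^S).
\]
For an arbitrary competitor $(\overline Y,\widehat Y)\in\mathbb{L}$ the limsup-inequality supplies a (strongly convergent) recovery sequence $(\overline Y^S,\widehat Y^S)\to(\overline Y,\widehat Y)$ with $\limsup_{S}\mathcal{J}^S(\overline Y^S,\widehat Y^S)\leq\mathcal{J}(\overline Y,\widehat Y)$; minimality of $(\overline X^S,\widehat X^S)$ for $\mathcal{J}^S$ gives $\mathcal{J}^S(\overline X^S,\widehat X^S)\leq\mathcal{J}^S(\overline Y^S,\widehat Y^S)$, whence
\[
\mathcal{J}(\overline X^\ast,\widehat X^\ast)\leq\liminf_{S\to\infty}\mathcal{J}^S(\overline X^S,\widehat X^S)\leq\limsup_{S\to\infty}\mathcal{J}^S(\overline X^S,\widehat X^S)\leq\mathcal{J}(\overline Y,\widehat Y).
\]
Taking the infimum over $(\overline Y,\widehat Y)$ shows $(\overline X^\ast,\widehat X^\ast)$ minimizes $\mathcal{J}$; the infimum is finite since by \Cref{thm:existenceSolutionSemi} the continuous problem is attained, so choosing $(\overline Y,\widehat Y)=(\overline X^\ast,\widehat X^\ast)$ collapses the chain to equalities and yields $\mathcal{J}^S(\overline X^S,\widehat X^S)\to\mathcal{J}(\overline X^\ast,\widehat X^\ast)$, i.e.\ convergence of the energies.

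The hard part will be the equi-coercivity step: one must check that the uniform bound on the discrete states—and hence on their interpolations in $\mathbb{L}$—holds simultaneously over all $S\in\N$ and all control parameters in $\Gamma$, which is precisely where the compact support of $\mathcal{D}$ and $\mathcal{R}$, the compactness of $\Gamma$, and the finite second moments of the observations enter. Once this uniform bound is secured, the rest is the abstract convergence-of-minimizers argument for Mosco convergence and requires no further structure.
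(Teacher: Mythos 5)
Your proposal is correct and follows essentially the same route as the paper: uniform boundedness of the interpolated discrete minimizers in $\mathbb{L}$ (via the compact support of the vector fields, compactness of $\Gamma$, and finite second moments of the observations), weak compactness in the Hilbert space $\mathbb{L}$, and then the standard liminf/recovery-sequence comparison from \Cref{thm:Mosco} to identify the weak limit as a minimizer and obtain energy convergence. The only cosmetic difference is that the paper argues by contradiction against the continuous minimizer provided by \Cref{thm:existenceSolutionSemi}, whereas you compare against arbitrary competitors and take an infimum, which amounts to the same argument.
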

\begin{proof}
A discretization specific induction argument shows that $(\overline{X}^S,\widehat{X}^S)$ is uniformly bounded in~$\mathbb{L}$ due to 
\[
\E_{\overline z\sim\training^{\supervised,\mathcal{Z}}}(\Vert\overline{X}^S(0,\overline z)\Vert_2^2)\leq \E_{\overline z\sim\training^{\supervised,\mathcal{Z}}}(\Vert A_\init\Vert_2^2\Vert\overline z\Vert_2^2)<\infty
\]
(with an analogous reasoning for~$\widehat{X}^S$) and the compact support of~$f$.
Hence, there exists a subsequence of~$(\overline{X}^S,\widehat{X}^S)$ weakly converging to~$(\overline{X},\widehat{X})\in\mathbb{L}$.
Next, we prove that $(\overline{X},\widehat{X})$ is actually a minimizer of~$\mathcal{J}$.
Following \Cref{thm:existenceSolutionSemi}, there exists a minimizer $(\overline{X}_{\min},\widehat{X}_{\min})\in\mathbb{L}$ of~$\mathcal{J}$.
If $\mathcal{J}(\overline{X}_{\min},\widehat{X}_{\min})<\mathcal{J}(\overline{X},\widehat{X})$, then there exists a recovery $(\overline{X}^S,\widehat{X}^S)\in\mathbb{L}$ satisfying $\limsup_{S\to\infty}\mathcal{J}^S(\overline{X}^S,\widehat{X}^S)\leq\mathcal{J}(\overline{X}_{\min},\widehat{X}_{\min})$.
Hence,
\[
\mathcal{J}(\overline{X},\widehat{X})
\leq\liminf_{S\to\infty}\mathcal{J}^S(\overline{X}^S,\widehat{X}^S)
\leq\limsup_{S\to\infty}\mathcal{J}^S(\overline{X}^S,\widehat{X}^S)
\leq\mathcal{J}(\overline{X}_{\min},\widehat{X}_{\min}),
\]
which contradicts $\mathcal{J}(\overline{X}_{\min},\widehat{X}_{\min})<\mathcal{J}(\overline{X},\widehat{X})$.
In summary, $(\overline{X},\widehat{X})$ is indeed a minimizer of~$\mathcal{J}$ and the associated energies converge.
\end{proof}
Next, we present a sketch of the proof of \Cref{thm:Mosco}.
\begin{proof}
We first note that the pointwise evaluation of~$\overline X$ and~$\widehat X$ appearing in~$\mathcal{J}$ is well-defined due to~\Cref{rem:smoothness}.
In what follows, we prove the $\liminf$--inequality and the $\limsup$--inequality separately.
We remark that this proof is essentially an adaption of the convergence proof for ordinary differential equations.
However, to the best of our knowledge this proof has neither been conducted in the mean-field context nor for the Euler--Newton discretization.

\paragraph{$\liminf$--inequality}
Let $(\overline{X}^S,\widehat{X}^S),(\overline{X},\widehat{X})\in \mathbb{L}$ be sequences such that
$(\overline{X}^S,\widehat{X}^S)\rightharpoonup(\overline{X},\widehat{X})$ in $\mathbb{L}$ as $S\to\infty$.
To exclude trivial cases, we assume that $\mathcal{J}^S(\overline{X}^S,\widehat{X}^S)\leq\overline{\mathcal{J}}$ for a finite constant~$\overline{\mathcal{J}}$ and $\lim_{S\to\infty}\mathcal{J}^S(\overline{X}^S,\widehat{X}^S)=\liminf_{S\to\infty}\mathcal{J}^S(\overline{X}^S,\widehat{X}^S)$.
Hence, there exist $\gamma^S=(T^{\supervised,S},T^{\unsupervised,S},\xi^{\supervised,S},\xi^{\unsupervised,S},\theta^S)\in\Gamma$ such that 
\[
\overline{X}^S=\mathcal{I}^S[x^{\kappa,S}(\overline z,T^{\supervised,S},\xi^{\supervised,S},\theta^S)],\qquad
\widehat{X}^S=\mathcal{I}^S[x^{\kappa,S}(\widehat z,T^{\unsupervised,S},\xi^{\unsupervised,S},\theta^S)].
\]
Further, we can infer that $\gamma^S\to\gamma=(T^\supervised,T^\unsupervised,\xi^\supervised,\xi^\unsupervised,\theta)\in\Gamma$ holds true for a subsequence (not relabeled).
We set $\overline{X}_\infty(t,\overline z)=x(t,\overline z,T^\supervised,\xi^\supervised,\theta)$ and $\widehat{X}_\infty(t,\widehat z)=x(t,\widehat z,T^\unsupervised,\xi^\unsupervised,\theta)$,
and prove that $\overline{X}^S\to\overline{X}_\infty$ in $L^2((0,1)\times\mathcal{Z}^\supervised)$ and $\widehat{X}^S\to\widehat{X}_\infty$ to $L^2((0,1)\times\mathcal{Z}^\unsupervised)$.
Since both proofs are analogous, we only present the former proof and occasionally drop the superscript~$\supervised$.
We define the global error for $s\in\{0,\ldots,S\}$ as
\[
\epsilon_s^S\coloneqq\E_{\overline z\sim\training^{\supervised,\mathcal{Z}}}\left(\Vert \overline{X}^S(\tfrac{s}{S},\overline z)-\overline{X}_\infty(\tfrac{s}{S},\overline z)\Vert_2\right),
\]
and recall for $t_1,t_2\in[0,1]$ that
\begin{equation}    
\E_{\overline z\sim\training^{\supervised,\mathcal{Z}}}\left(\overline{X}_\infty(t_2,\overline z)-\overline{X}_\infty(t_1,\overline z)\right)
=\E_{\overline z\sim\training^{\supervised,\mathcal{Z}}}\left(\int_{t_1}^{t_2}Tf(\overline{X}_\infty(r,\overline z),\overline z,\xi,\theta)\dx r\right).
\label{eq:definitionX}
\end{equation}
Henceforth, we use the notation $\mathcal{X}=\R^{n_y}\times\R^{n_z}\times\Xi\times\Theta$ for the domain space of~$f$.

In the case $\kappa=\explicit$, we immediately see that
\begin{equation}
\overline{X}^S(\tfrac{s+1}{S},\overline z)=\overline{X}^S(\tfrac{s}{S},\overline z)+f^\explicit(\overline{X}^S(\tfrac{s}{S},\overline z),\overline z,S,T^S,\xi^S,\theta^S)
\label{eq:definitionXs}
\end{equation}
for any $s\in\{0,\ldots,S-1\}$ and a.e.~$\overline z\in\mathcal{Z}^\supervised$.
Hence, \eqref{eq:definitionX}, \eqref{eq:definitionXs} and $f^\explicit(x,z,S,T,\xi,\theta)=\frac{T}{S}f(x,z,\xi,\theta)$ imply that
\begin{align*}
\epsilon_{s+1}^S&\leq\epsilon_s^S+\E_{\overline z\sim\training^{\supervised,\mathcal{Z}}}
\left(\int_{\frac{s}{S}}^{\frac{s+1}{S}}\Vert T^S f(\overline{X}^S(\tfrac{s}{S},\overline z),\overline z,\xi^S,\theta^S)-Tf(\overline{X}_\infty(r,\overline z),\overline z,\xi,\theta)\Vert_2\dx r\right)\\
&\leq\epsilon_s^S+\tfrac{1}{S}\vert T^S-T\vert\Vert f\Vert_{C^0(\mathcal{X})}+\tfrac{\Tmax}{2S^2}\vert f\vert_{C^{1,1}(\mathcal{X})}\E_{\overline z\sim\training^{\supervised,\mathcal{Z}}}(\Vert\partial_t\overline{X}_\infty(t,\overline z)\Vert_{L^\infty([0,1])})\\
&\quad+\tfrac{\Tmax}{S}\vert f\vert_{C^{1,1}(\mathcal{X})}\left(\epsilon_s^S+\Vert\xi^S-\xi\Vert_2+\Vert\theta^S-\theta\Vert_2\right).
\end{align*}
The first inequality is implied by Jensen's inequality.
To prove the second inequality, we used the triangle inequality several times as well as the smoothness and compact support of~$f$, and $\int_{\frac{s}{S}}^{\frac{s+1}{S}}(r-\frac{s}{S})\dx r=\frac{1}{2S^2}$.
Note that $\E_{\overline z\sim\training^{\supervised,\mathcal{Z}}}(\Vert\partial_t\overline{X}_\infty(t,\overline z)\Vert_{L^\infty([0,1])})<\infty$ due to the defining equation of~$\overline{X}_\infty$ and the compact support of~$f$.
Then, an induction argument implies that
\begin{align*}
&\epsilon_{s+1}^S\leq(1+\tfrac{\Tmax}{S}\vert f\vert_{C^{1,1}(\mathcal{X})})\epsilon_s^S+\tfrac{\Tmax}{S}\vert f\vert_{C^{1,1}(\mathcal{X})}\left(\Vert\xi^S-\xi\Vert_2+\Vert\theta^S-\theta\Vert_2\right)\\
&\quad+\tfrac{1}{S}\vert T^S-T\vert\Vert f\Vert_{C^0(\mathcal{X})}+\tfrac{\Tmax}{2S^2}\vert f\vert_{C^{1,1}(\mathcal{X})}\E_{\overline z\sim\training^{\supervised,\mathcal{Z}}}(\Vert\partial_t\overline{X}_\infty(t,\overline z)\Vert_{L^\infty([0,1])})\\
&\leq(1+\tfrac{\Tmax}{S}\vert f\vert_{C^{1,1}(\mathcal{X})})^{S+1}\epsilon_0^S+\Bigg(\sum_{i=0}^S\left(1+\tfrac{\Tmax\vert f\vert_{C^{1,1}(\mathcal{X})}}{S}\right)^i\Bigg)\Bigg(\tfrac{1}{S}\vert T^S-T\vert\Vert f\Vert_{C^0(\mathcal{X})}\\
&\quad+\tfrac{\Tmax\vert f\vert_{C^{1,1}(\mathcal{X})}}{S}\left(\Vert\xi^S-\xi\Vert_2+\Vert\theta^S-\theta\Vert_2\right)+\tfrac{\Tmax}{2S^2}\vert f\vert_{C^{1,1}(\mathcal{X})}\E_{\overline z\sim\training^{\supervised,\mathcal{Z}}}(\Vert\partial_t\overline{X}_\infty(t,\overline z)\Vert_{L^\infty([0,1])})\Bigg).
\end{align*}
Thus, the geometric series implies
\begin{align*}
\epsilon_s^S&\leq\left(\exp(\Tmax\vert f\vert_{C^{1,1}(\mathcal{X})})-1\right)
\Big(\Vert\xi^S-\xi\Vert_2+\Vert\theta^S-\theta\Vert_2\\
&\quad+\tfrac{1}{\Tmax\vert f\vert_{C^{1,1}(\mathcal{X})}}\Big(\vert T^S-T\vert\Vert f\Vert_{C^0(\mathcal{X})}+\tfrac{1}{2S}\E_{\overline z\sim\training^{\supervised,\mathcal{Z}}}(\Vert\partial_t\overline{X}_\infty(t,\overline z)\Vert_{L^\infty([0,1])})\Big)\Big).
\end{align*}
In particular, $\lim_{S\to\infty}\max_{s=1,\ldots,S}\epsilon_s^S\to 0$.

In the case $\kappa=\Newton$, we first revisit the defining equation for $\overline{X}^S$:
\begin{equation}
\overline{X}^S(\tfrac{s+1}{S},\overline z)
=\overline{X}^S_\frac{1}{2}(\tfrac{s}{S},\overline z)-(\tfrac{S}{T}\Id+A^\top\nabla_1^2\mathcal{D}(A\overline{X}^S_\frac{1}{2}(\tfrac{s}{S},\overline z),z,\xi^S)A)^{-1}
A^\top\nabla_1\mathcal{D}(A\overline{X}^S_\frac{1}{2}(\tfrac{s}{S},\overline z),z,\xi^S),
\label{eq:definitionXsEN}
\end{equation}
where $\overline{X}^S_\frac{1}{2}(\tfrac{s}{S},\overline z)=\overline{X}^S(\tfrac{s}{S},\overline z)-\frac{T^S}{S}\nabla_1\mathcal{R}(\overline{X}^S(\tfrac{s}{S},\overline z),\theta^S)$.
A Taylor expansion of~$\nabla_1\mathcal{D}$ at the base point $\overline{X}^S_\frac{1}{2}(\tfrac{s}{S},\overline z)$ yields
\begin{align}
\nabla_1\mathcal{D}(A\overline{X}^S(\tfrac{s}{S},\overline z),z,\xi^S)
&=\nabla_1\mathcal{D}(A\overline{X}^S_\frac{1}{2}(\tfrac{s}{S},\overline z),z,\xi^S)\notag\\
&\quad+\tfrac{T^S}{S}\nabla_1^2\mathcal{D}(A\overline{X}^S_\frac{1}{2}(\tfrac{s}{S},\overline z),z,\xi^S)A\nabla_1\mathcal{R}(\overline{X}^S(\tfrac{s}{S},\overline z),\theta^S)
+\mathcal{O}(S^{-2}).
\label{eq:TaylorBasepoint}
\end{align}
Combining \eqref{eq:definitionX}, \eqref{eq:definitionXsEN} and \eqref{eq:TaylorBasepoint} we get
\begin{align}
\epsilon_{s+1}^S&\leq\epsilon_s^S+\E_{\overline z\sim\training^{\supervised,\mathcal{Z}}}
\bigg(\int_{\frac{s}{S}}^{\frac{s+1}{S}}\Vert
Tf(\overline{X}_\infty(r,\overline z),\overline z,\xi,\theta)+T^S\nabla_1\mathcal{R}(\overline{X}^S(\tfrac{s}{S},\overline z),\theta^S)\notag\\
&\quad+T^S(\Id+\tfrac{T^S}{S}A^\top\nabla_1^2\mathcal{D}(A\overline{X}^S_\frac{1}{2}(\tfrac{s}{S},\overline z),\overline z,\xi^S)A)^{-1}
A^\top\nabla_1\mathcal{D}(A\overline{X}^S_\frac{1}{2}(\tfrac{s}{S},\overline z),\overline z,\xi^S)
\Vert_2\dx r\bigg)\notag\\
&\leq\epsilon_s^S+\E_{\overline z\sim\training^{\supervised,\mathcal{Z}}}
\bigg(\int_{\frac{s}{S}}^{\frac{s+1}{S}}\Vert
Tf(\overline{X}_\infty(r,\overline z),\overline z,\xi,\theta)\notag\\
&\quad-T^S(\Id+\tfrac{T^S}{S}A^\top\nabla_1^2\mathcal{D}(A\overline{X}^S_\frac{1}{2}(\tfrac{s}{S},\overline z),\overline z,\xi^S)A)^{-1}
f(\overline{X}^S(\tfrac{s}{S},\overline z),\overline z,\xi^S,\theta^S)
\Vert_2\dx r\bigg)\!+\!\mathcal{O}(S^{-3}).
\label{eq:consistencyEN}
\end{align}
Let $\widehat \sigma=\sup_{\widehat x\in\R^{n_y},\widehat z\in\R^{n_z},\widehat \xi\in\Xi}\sigma_{\max}(A^\top\nabla_1^2\mathcal{D}(\widehat x,\widehat z,\widehat \xi)A)$ be the absolute value of the largest eigenvalue of the matrix~$A^\top\nabla_1^2\mathcal{D}(\cdot)A$,
which implies $\Vert\Id-(\Id+\frac{T^S}{S}A^\top\nabla_1^2\mathcal{D}(\cdot)A)^{-1}\Vert\leq\vert1-\frac{1}{1+\frac{\Tmax}{S}\widehat\sigma}\vert$.
Then, in combination with \eqref{eq:consistencyEN} we obtain by neglecting the higher order terms
\begin{align*}
\epsilon_{s+1}^S&\leq\epsilon_s^S+\tfrac{1}{S}\vert T^S-T\vert\Vert f\Vert_{C^0(\mathcal{X})}+\tfrac{\Tmax}{2S^2}\vert f\vert_{C^{1,1}(\mathcal{X})}\E_{\overline z\sim\training^{\supervised,\mathcal{Z}}}(\Vert\partial_t\overline{X}_\infty(t,\overline z)\Vert_{C^0([0,1])})\\
&\quad+\tfrac{\Tmax}{S}(\vert f\vert_{C^{1,1}(\mathcal{X})}(\epsilon_s^S+\Vert\xi^S-\xi\Vert_2+\Vert\theta^S-\theta\Vert_2)+\vert1-\tfrac{1}{1+\frac{\Tmax}{S}\widehat\sigma}\vert\Vert f\Vert_{C^0(\mathcal{X})}).
\end{align*}
As above, we can deduce using an induction argument that
\begin{align*}
\epsilon_s^S&\leq\left(\exp(\Tmax\vert f\vert_{C^{1,1}(\mathcal{X})})-1\right)
\Big(\Vert\xi^S-\xi\Vert_2+\Vert\theta^S-\theta\Vert_2+\tfrac{1}{2S}\E_{\overline z\sim\training^{\supervised,\mathcal{Z}}}(\Vert\partial_t\overline{X}_\infty(t,\overline z)\Vert_{C^0([0,1])})\\
&\quad+\tfrac{1}{\Tmax\vert f\vert_{C^{1,1}(\mathcal{X})}}(\vert T^S-T\vert\Vert f\Vert_{C^0(\mathcal{X})}+\vert1-\tfrac{1}{1+\frac{\Tmax}{S}\widehat\sigma}\vert\Vert f\Vert_{C^0(\mathcal{X})}))\Big).
\end{align*}
Again, $\lim_{S\to\infty}\max_{s=1,\ldots,S}\epsilon_s^S\to 0$.

Thus, for both cases we can estimate as follows:
\begin{align}
&\E_{\overline z\sim\training^{\supervised,\mathcal{Z}}}\!
\left(\int_0^1\Vert \overline{X}^S(r,\overline z)-\overline{X}_\infty(r,\overline z)\Vert_2^2\dx r\right)
\!=\!
\E_{\overline z\sim\training^{\supervised,\mathcal{Z}}}\!
\left(\sum_{s=0}^{S-1}\int_\frac{s}{S}^\frac{s+1}{S}\Vert \overline{X}^S(r,\overline z)-\overline{X}_\infty(r,\overline z)\Vert_2^2\dx r\right)\notag\\
\leq&
\E_{\overline z\sim\training^{\supervised,\mathcal{Z}}}\!
\Bigg(\sum_{s=0}^{S-1}\int_\frac{s}{S}^\frac{s+1}{S}\!\!\!\!\Vert 
(s+1-Sr)(\overline{X}^S(\tfrac{s}{S},\overline z)-\overline{X}_\infty(\tfrac{s}{S},\overline z))+(Sr-s)(\overline{X}_\infty(\tfrac{s+1}{S},\overline z)-\overline{X}_\infty(r,\overline z))\notag\\
&\hspace{3em}+(s+1-Sr)(\overline{X}_\infty(\tfrac{s}{S},\overline z)-\overline{X}_\infty(r,\overline z))+(Sr-s)(\overline{X}^S(\tfrac{s+1}{S},\overline z)-\overline{X}_\infty(\tfrac{s+1}{S},\overline z))
\Vert_2^2\dx r\Bigg).
\label{eq:globalEstimateX}
\end{align}
Let $\delta>0$ and $K_\delta\subset\mathcal{Z}^\supervised$ be compact such that $\E_{\overline z\sim\training^{\supervised,\mathcal{Z}}}(\Vert\overline z\Vert_2^2\mathbb{I}_{\mathcal{Z}^\supervised\setminus K_\delta}(\overline z))<\delta$.
Thus, due to $\overline{X}^S(0,\overline z)=\overline{X}_\infty(0,\overline z)=A_\init\overline z$ and the compact support of~$f$, $f^\explicit$ and $f^\Newton$
there exists a finite constant~$C_\delta$ and a modulus of continuity~$\omega:\R_0^+\to\R_0^+$ with $\lim_{\delta\searrow 0}\omega(\delta)=0$ such that
\begin{align*}
\sup_{S\in\N}\max_{t\in[0,1]}\max_{\overline z\in K_\delta}\Vert \overline{X}^S(t,\overline z)-\overline{X}_\infty(t,\overline z)\Vert_2=C_\delta&<\infty,\\
\sup_{S\in\N}\max_{t\in[0,1]}\E_{\overline z\sim\training^{\supervised,\mathcal{Z}}}\left(\mathbb{I}_{\mathcal{Z}^\supervised\setminus K_\delta}(\overline z)\Vert\overline{X}^S(t,\overline z)-\overline{X}_\infty(t,\overline z)\Vert_2\right)&<\omega(\delta).
\end{align*}
Hence, the first summand of~\eqref{eq:globalEstimateX} can be bounded from above as follows:
\begin{align*}
\E_{\overline z\sim\training^{\supervised,\mathcal{Z}}}
\left(\sum_{s=0}^{S-1}\int_\frac{s}{S}^\frac{s+1}{S}\Vert 
(s+1-Sr)(\overline{X}^S(\tfrac{s}{S},\overline z)-\overline{X}_\infty(\tfrac{s}{S},\overline z))\Vert_2^2\dx r\right)
\leq \omega(\delta)^2+C_\delta\max_{s=1,\ldots,S}\epsilon_s^S,
\end{align*}
where we used $\vert s+1-Sr\vert\leq 1$ for $r\in(\frac{s}{S},\frac{s+1}{S})$.
To estimate the third summand of~\eqref{eq:globalEstimateX}, we first note that a finite constant~$C_f$ exists such that
\[
\max_{t\in[0,1]}\E_{\overline z\sim\training^{\supervised,\mathcal{Z}}}\left(\Vert\partial_t\overline{X}_\infty(t,\overline z)\Vert_2\right)\leq\sup_{(x,\overline z)\in\R^{n_y}\times\mathcal{Z}^\supervised}\Vert Tf(x,\overline z,\xi,\theta)\Vert_2\eqqcolon C_f<\infty.
\]
Hence, a Taylor expansion implies that $\sup_{\overline z\in\mathcal{Z}^\supervised}\Vert\overline{X}_\infty(\tfrac{s}{S},\overline z)-\overline{X}_\infty(r,\overline z)\Vert_2\leq C_f(r-\tfrac{s}{S})$
for any $r\in[\frac{s}{S},\frac{s+1}{S}]$ and $s=0,\ldots,S-1$.
In summary, we can estimate as follows:
\begin{align*}
\E_{\overline z\sim\training^{\supervised,\mathcal{Z}}}
\left(\sum_{s=0}^{S-1}\int_\frac{s}{S}^\frac{s+1}{S}\Vert 
(s+1-Sr)(\overline{X}_\infty(\tfrac{s}{S},\overline z)-\overline{X}_\infty(r,\overline z))
\Vert_2^2\dx r\right)
\leq\tfrac{C_f^2}{30S^2}.
\end{align*}
Since the remaining summands in~\eqref{eq:globalEstimateX} can be estimated analogously, we have shown the convergence of $\overline{X}^S\to\overline{X}_\infty$ in $L^2((0,1)\times\mathcal{Z}^\supervised)$.
In particular, $\overline X=\overline{X}_\infty$ and $\widehat X=\widehat{X}_\infty$ almost everywhere and we have proven the $\liminf$--inequality.

\paragraph{$\limsup$--inequality}
Let $(\overline{X},\widehat{X})\in \mathbb{L}$ be such that $\mathcal{J}(\overline{X},\widehat{X})<\infty$.
Then, there exists $(T^\supervised,T^\unsupervised,\xi^\supervised,\xi^\unsupervised,\theta)\in\Gamma$ such that
\[
\overline{X}(t,\overline z)=x(t,\overline z,T^\supervised,\xi^\supervised,\theta),\qquad
\widehat{X}(t,\widehat z)=x(t,\widehat z,T^\unsupervised,\xi^\unsupervised,\theta).
\]
We define the recovery sequence as follows:
\[
\overline{X}^S=\mathcal{I}^S[x^{\kappa,S}(\overline z,T^\supervised,\xi^\supervised,\theta)],\qquad
\widehat{X}^S=\mathcal{I}^S[x^{\kappa,S}(\widehat z,T^\unsupervised,\xi^\unsupervised,\theta)]
\]
for $\overline z\in\mathcal{Z}^\supervised$ and $\widehat z\in\mathcal{Z}^\unsupervised$.
Note that $\mathcal{J}^S(\overline{X}^S,\widehat{X}^S)<\infty$.

Then, the proof of the $\limsup$--inequality essentially follows the same line of arguments as in the $\liminf$--inequality, we skip further details.
\end{proof}

\section{Numerical results}\label{sec:numerics}
In this section, we present numerical experiments for energy-based learning (\cref{sub:numericsVariationalLearning}) and shared prior learning (\cref{sub:numericsSharedPriorLearning}).

\paragraph{Discretization of gradient flow}
Depending on the task, we use the squared $\ell^2$-norm~$\mathcal{D}_{\ell^2}$, the Fr\'echet metric~$\mathcal{D}_{F}$ with $n_\Xi=256$, or the generalized divergence~$\mathcal{D}_{\div}$ with $N_\Xi=256$ as the data fidelity term,
where in both latter cases the data fidelity term is initialized as the squared $\ell^2$-norm (see \cref{sub:dataFidelity}).
In all experiments, we use the TDV regularizer introduced in~\cref{sub:TDV}.
The gradient flows are discretized with $S=10$ iteration steps.
For the Euler--Newton scheme, $10$~iterations of the conjugate gradient method are applied to approximately solve the linear system appearing in~$f^\Newton$.
Note that the semi-implicit scheme is only applied to image denoising.

\paragraph{General setting for training}
During training, we optimize the respective discretized mean-field optimal control problems w.r.t.~the control parameters~$\gamma$ for different discretization schemes~$\kappa$
using ADAM~\cite{KiBa15} with momentum variables $\beta_1=0.5$ and $\beta_2=0.9$.
Here, we apply random rotations by multiples of~$90^\circ$ and random flips for data augmentation.
Henceforth, to describe a certain setting, we frequently use the shorthand notation ($\ell^p$-cost/$f^\kappa$/$\mathcal{D}$) for $p\in\{1,2\}$, $\kappa\in\{\explicit,\implicit,\Newton\}$ and $\mathcal{D}\in\{\mathcal{D}_{\ell^2},\mathcal{D}_F,\mathcal{D}_\div\}$,
where the first component defines the loss functional~$\loss$ in the supervised subproblem with regularization parameter~$\iota=10^{-3}$ if $p=1$ (see \cref{subsub:supervised}), the second component refers to the considered discretization scheme of the gradient flow incorporating the data fidelity term specified in the last component.
In all experiments, we use $p=1$ in the cost functional for the Wasserstein distance (see \cref{sub:Wasserstein}).
Unless otherwise specified, the observation-generating function is always $Z(Ay,\zeta)=Ay+\zeta$.

\subsection{Energy-based learning}\label{sub:numericsVariationalLearning}
We present numerical results for energy-based learning applied to image denoising, demosaicing and single image super-resolution (SISR), for which we achieve state-of-the-art performance in many cases.

\paragraph{Training and data set}
In all experiments, we compute all control parameters of the discretized optimal control problem~\eqref{eq:discreteOptimalControlVariationalLearning}, i.e. the stopping time~$T$ as well as the parameters of the learned data fidelity term~$\xi$ and the learned regularizer~$\theta$, using a batch size of~$32$ and a patch size of $99\times 99$ with $10^6$~iterations of the ADAM optimizer.
The only difference in the optimization among the applications stems from the initial learning rate, which is $10^{-4}$ in the case of SISR and Poisson denoising,
$4\cdot 10^{-4}$ for salt-and-pepper noise, and $4\cdot 10^{-3}$ in all remaining cases.
All learning rates are divided by~$4$ after $50000$~iterations.
The training was performed on the BSDS400 data set~\cite{MaFo01} (all images have a resolution of $n_y=481\cdot 321=154\,401$ pixels) for denoising and SISR, and on the MSR data set~\cite{KhNo14} for demosaicing.
All images are scaled to the interval~$[0,1]$ before processing.
To account for boundary artifacts in the case of Laplace and salt-and-pepper noise, we incorporate reflective padding with $10$~pixels on each side.
We emphasize that no further task-specific adaptations are required, which experimentally validates that our method is broadly applicable to a wide range of tasks.

\subsubsection{Image denoising}
In this subsection, we present numerical results for the image denoising task with $A=A_\init=\Id$.

\paragraph{Additive white Gaussian noise}
As a first example, we consider \emph{additive white Gaussian noise} with $\zeta\sim\mathcal{N}(0,\sigma^2)$ for~$\sigma\in\{15,25\}$ (see \eqref{eq:inverseProblem}).

First, we empirically verify that the learned data fidelity terms approximate the statistically justified squared $\ell^2$-norm in the case of additive white Gaussian denoising with a Gaussian prior~\cite{Ni07,AsBo13}.
\Cref{fig:proxMapsDenoisingGaussian} visualizes the learned data fidelity terms for $\mathcal{D}_F$ (first pair) and $\mathcal{D}_\div$ (second pair).
In detail, the first and third columns contain color-coded contour plots of both data fidelity terms in the $Ax$-$z$-plane along with three distinct slice functions in the second/fourth column for $z\in\{0,0.5,1\}$.
As a result, both data fidelity terms roughly coincide with the squared $\ell^2$-norm.
However, $\mathcal{D}_F$ is smoother due to the restrictions inherited from the Fr\'echet metric compared to the oscillatory behavior of~$\mathcal{D}_\div$.
Moreover, significantly higher slopes of the data fidelity term can be observed for image intensities outside of the range $[0,1]$.
\begin{figure}[htb]
\includegraphics[width=\linewidth]{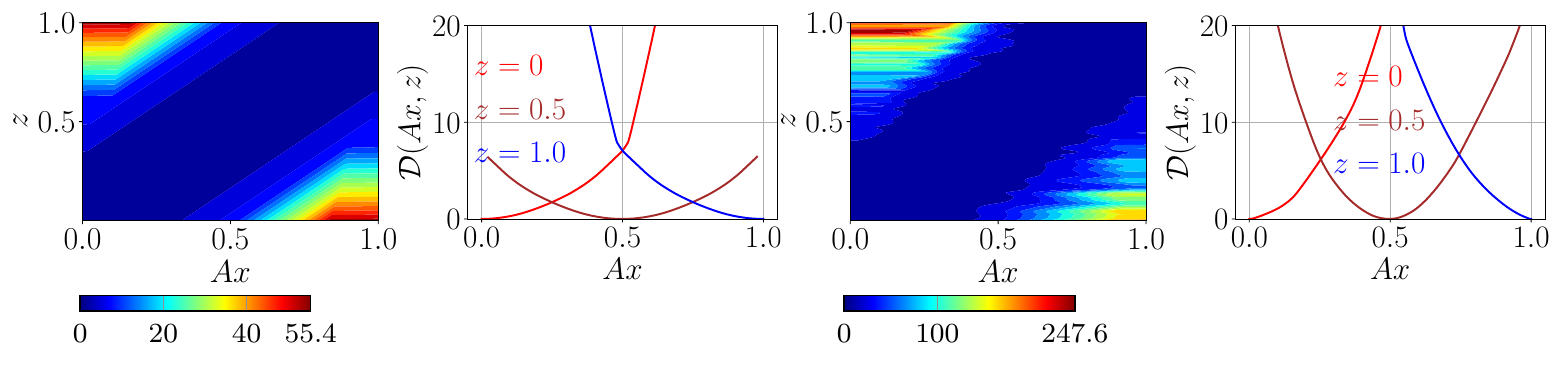}
\caption{
Visualization of contour plots of the learned data fidelity terms along with corresponding slice functions for varying~$z$ for additive white Gaussian noise ($\sigma=25$) using ($\ell^2$-cost/$f^\implicit$/$\mathcal{D}_F$) (first pair) and ($\ell^2$-cost/$f^\implicit$/$\mathcal{D}_\div$) (second pair).}
\label{fig:proxMapsDenoisingGaussian}
\end{figure}

\Cref{tab:PSNRDenoisingSVGaussian} lists the PSNR values for additive white Gaussian denoising ($\sigma\in\{15,25\}$) for several discretization schemes.
We highlight that our proposed method yields PSNR values that are on par with the state-of-the-art method FOCNet, which has $53\,513\,120$ parameters and thus more than $100$ times the number of learnable parameters of our approach.
\begin{table}[htb]
\begin{subtable}[ht]{\textwidth}
\centering
\begin{tabular}{*{8}{c|} }
noisy &  $\mathcal{D}_{\ell^2}$ & $\mathcal{D}_F$ & $\mathcal{D}_\div$ & BM3D~\cite{DaFo07} & TNRD~\cite{ChPo17} & DnCNN~\cite{ZhZu17} & FOCNet~\cite{JiLi19} \\\hline
$24.61$ & $31.82$ & $31.82$ & $31.82$ & $31.08$ & $31.42$ & $31.73$ & $\mathbf{31.83}$
\end{tabular}
\caption{Additive white Gaussian noise $\sigma=15$ using ($\ell^2$-cost/$f^\implicit$).} 
\end{subtable}

\begin{subtable}[ht]{\textwidth}
\resizebox{\linewidth}{!}{
\begin{tabular}{ c || *{13}{c|} }
&\multirow{2}{*}{noisy}&\multicolumn{2}{ |c| }{$\mathcal{D}_{\ell^2}$} & \multicolumn{3}{ |c| }{$\mathcal{D}_F$} &  \multicolumn{3}{ |c|}{$\mathcal{D}_\div$} & BM3D & TNRD  & DnCNN  & FOCNet  \\\cline{3-10}
&& $f^\explicit$ & $f^\implicit$ & $f^\explicit$ & $f^\implicit$ & $f^\Newton$ & $f^\explicit$ & $f^\implicit$ & $f^\Newton$ & \cite{DaFo07} & \cite{ChPo17} & \cite{ZhZu17} & \cite{JiLi19}\\ \hline
$\ell_\iota^1$-cost & \multirow{2}{*}{20.17} & $29.31$ &  $29.31$ & $29.32$ & $29.32$ & $29.32$ & $29.26$ & $29.26$ & $29.28$ & \multirow{2}{*}{$28.57$} & \multirow{2}{*}{$28.92$} & \multirow{2}{*}{$29.23$} & \multirow{2}{*}{$\mathbf{29.38}$}\\
$\ell^2$-cost & & $29.32$ & $29.35$ & $29.34$ & $29.33$ & $29.34$ & $29.27$ & $29.30$ & $29.36$ & & & &
\end{tabular}
}
\caption{Additive white Gaussian noise $\sigma=25$ using various discretization schemes.}
\end{subtable}
\caption{PSNR values for additive Gaussian denoising with various discretization schemes.}
\label{tab:PSNRDenoisingSVGaussian}
\end{table}

Finally, \Cref{fig:GaussianResults} depicts the ground truth image along with the noisy images deteriorated by Gaussian noise ($\sigma=25$), and the restored output image using ($\ell^2$-cost/$f^\implicit$/$\mathcal{D}_\div$).
Our proposed method is clearly capable of removing noise patterns and achieving superior quality, where piecewise smooth regions without fine details are preferred.

\begin{figure}[htb]
\includegraphics[width=\linewidth]{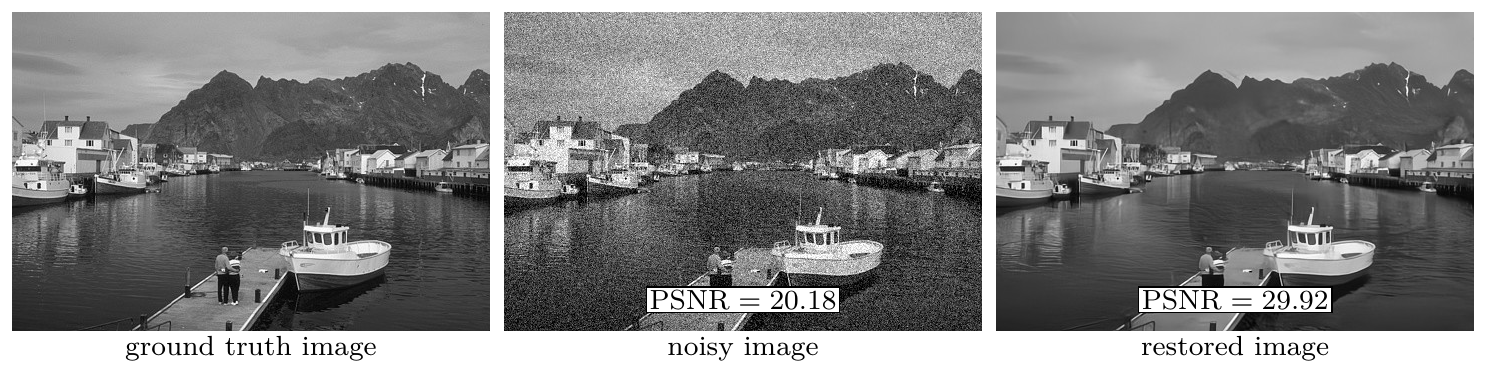}
\caption{From left to right: ground truth image, image corrupted by additive Gaussian noise ($\sigma=25$), restored image using ($\ell^2$-cost/$f^\implicit$/$\mathcal{D}_\div$).}
\label{fig:GaussianResults}
\end{figure}

\paragraph{Further noise distributions}
In what follows, we pursue further numerical experiments with the subsequent different noise instances to illustrate the broad applicability of our method,
where we particularly focus on the resulting learned data fidelity terms:
\begin{itemize}
\item
\emph{additive mixture noise}, where $10\%$ of the pixels are corrupted by additive uniform noise in the given range~$[-25,25]$, 
$20\%$ of the pixels are deteriorated by additive Gaussian noise drawn from~$\mathcal{N}(0,\Id)$, all remaining pixels are altered by additive Gaussian noise drawn from~$\mathcal{N}(0,0.1\cdot\Id)$,
\item
\emph{additive Laplace noise} with standard deviation~$\sigma=25$,
\item
\emph{multiplicative Poisson noise} with peak value~$4$,
\item
in \emph{salt-and-pepper noise}, as a specific instance of impulse noise, $50\%$ of the pixels are corrupted.
\end{itemize}
For Poisson noise, we set $Z(y,\zeta)=y\odot\zeta$ with $\odot$ denoting elementwise multiplication.
In the case of salt-and-pepper noise, $Z$ assigns the values~$0$ or~$1$ (with equal probability) to $50\%$ of the pixels depending on~$\zeta$, all other pixels remain unchanged.
First, we visualize the learned data fidelity terms for mixture and Poisson noise in~\Cref{fig:proxMapsDenoisingMixedPoisson} with the same arrangement as in \Cref{fig:proxMapsDenoisingGaussian}.
As a result, the data fidelity term~$\mathcal{D}_F$ associated with mixed noise is visually much smoother compared to~$\mathcal{D}_\div$ due to the structural assumptions imposed by the Fr\'echet metric, whereas in the
case of the generalized divergence data points away from the diagonal are less likely, which results in the oscillatory behavior.
In the case of multiplicative Poisson noise, $\mathcal{D}_F$ and $\mathcal{D}_\div$ significantly differ, where again $\mathcal{D}_F$ appears to be much smoother.
Furthermore, we note that the Fr\'echet metric is not capable of properly reflecting the statistical distribution as it only depends on~$Ax-z$ and not on the actual value of~$z$.

\Cref{fig:proxMapsDenoising} contains the corresponding data fidelity terms along with additional plots of the proximal maps for salt-and-pepper noise and their slice functions, where both data fidelity terms significantly differ.
Note that the dark red color indicates the value~$+\infty$ for the data fidelity term in the case~$\mathcal{D}_\div$, where we stress that for $z=0$ and $z=1$ finite values are observed, yielding a $Z$-shaped domain.
Thus, the generalized divergence enforces nearly exact data consistency for all pixel values strictly between~$0$ and~$1$ and exhibits a slight bias towards the boundary for~$0$ and~$1$, which cannot be reflected by the Fr\'echet metric.

\begin{figure}[htb]
\includegraphics[width=\linewidth]{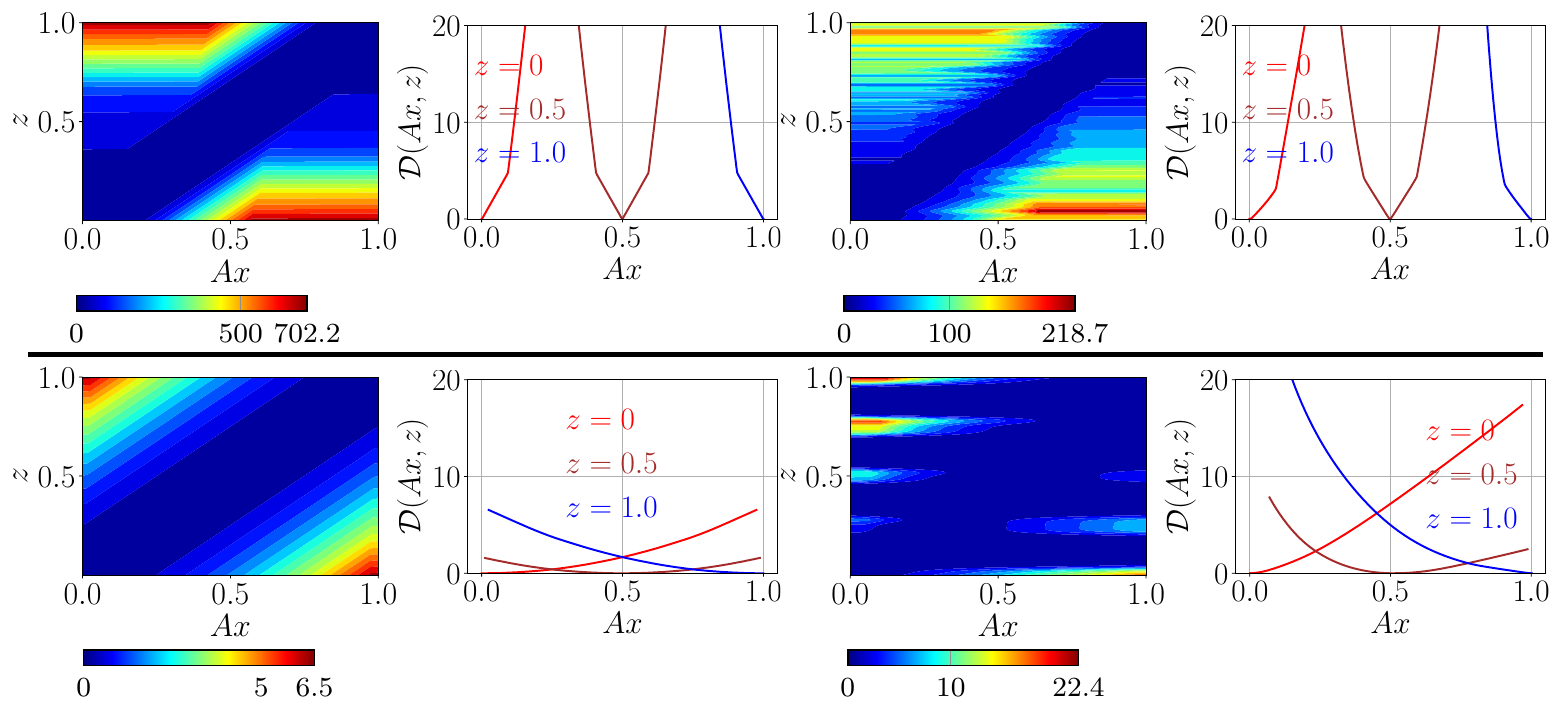}
\caption{
Learned data fidelity terms for additive mixture noise (first row, ($\ell^2$-cost/$f^\implicit$)) and multiplicative Poisson (second row, ($\ell^2$-cost/$f^\implicit$))
with pairs of data fidelity terms and corresponding slice functions for $\mathcal{D}_F$ (first/second column) and $\mathcal{D}_\div$ (third/fourth column).}
\label{fig:proxMapsDenoisingMixedPoisson}
\end{figure}

\begin{figure}[htb]
\includegraphics[width=\linewidth]{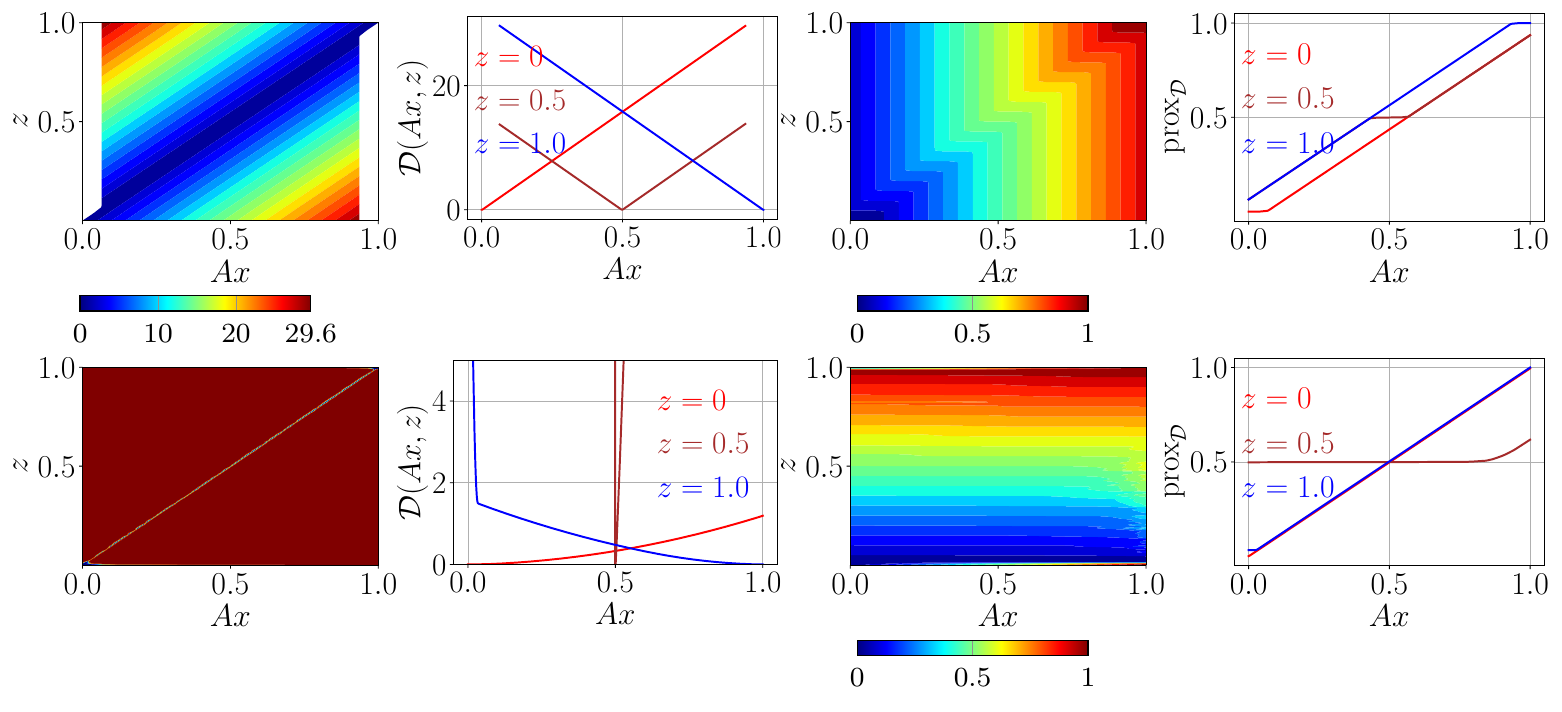}
\caption{
Visualization of learned data fidelity terms for salt-and-pepper noise using ($\ell^2$-cost/$f^\implicit$/$\mathcal{D}_F$) (first row) and ($\ell^2$-cost/$f^\implicit$/$\mathcal{D}_\div$) (second row).
From left to right: contour plots of the learned data fidelity terms (first column) along with corresponding slice functions for varying~$z$ (second column), contour plots of the learned proximal operators (third column) and corresponding slice functions (fourth column).
Note that dark red pixels in the leftmost plot in the second row indicate the value~$\infty$.}
\label{fig:proxMapsDenoising}
\end{figure}

\Cref{tab:PSNRDenoisingSV} lists the PSNR values of some denoising tasks for distinct discretization schemes.
In all tasks, the best discretization schemes of our approach outperform competing methods, where some methods are particularly designed for a distinct noise distribution.
The PSNR values for mixture noise and salt-and-pepper noise significantly vary depending on the scheme, whereas for Laplace and Poisson noise the PSNR values do not differ much, which is reflected by the learned data fidelity terms.
In some cases such as salt-and-pepper and additive Laplace denoising, no competing benchmarks are available for the same data set, that is why we added benchmark results for DnCNN-B~\cite{ZhZu17}, BM3D~\cite{DaFo07}, and TV denoising~\cite{ChLi97},
where the optimal balance parameters are computed by a grid search with an exactness of~$10^{-3}$.
In summary, this experiment validates the necessity to learn the data fidelity term and proves the broad applicability of our proposed method.

\Cref{fig:MixedResults} depicts the ground truth along with the noisy images deteriorated by mixed noise.
Since all discretization schemes yield visually similar results, we only present the restored output image using ($\ell^2$-cost/$f^\implicit$/$\mathcal{D}_\div$).
In the zoom, it is clearly observable that the ground truth and the restored image are nearly indistinguishable.
\Cref{fig:LaplacePoissonResults} shows sequences of ground truth, noisy, and restored images computed with three different discretization schemes for additive Laplace (first row) and multiplicative Poisson noise (second row).
Note that the restored images originally corrupted by additive Laplace noise using $\mathcal{D}_{\ell^2}$ exhibit artifacts in the case of large outliers, which can be seen, for instance, on the border of the vase.
In contrast, $\mathcal{D}_F$ or $\mathcal{D}_\div$ are tuned for Laplace noise and are therefore able to deal with outliers more easily.
\Cref{fig:SNPResults} shows the ground truth image, the corresponding noisy image corrupted by salt-and-pepper noise as well as all proposed discretization schemes using the $\ell^2$-cost functional.
Here, the results computed with ($f^\implicit$/$\mathcal{D}_\div$) and ($f^\Newton$/$\mathcal{D}_\div$) significantly outperform all remaining results.
Besides, the results computed with~$\mathcal{D}_{\ell^2}$ exhibit clearly visible noise patterns, which can, for instance, be seen in the face region.
Moreover, the explicit discretization yields visually and quantitatively inferior results compared to the remaining discretization schemes, which justifies the inclusion of the Euler--Newton scheme as a replacement for the semi-implicit scheme for general imaging tasks.

To sum up, in all cases the restored images are visually and in terms of the PSNR value significantly better than the noisy input images, the gain in the PSNR value is noise-dependent and roughly in the range between~$4$ and~$20$.

\begin{table}[htb]
\centering
\begin{subtable}[ht]{\textwidth}
\centering
\begin{tabular}{*{6}{c|} }
noisy &  $\mathcal{D}_{\ell^2}$ & $\mathcal{D}_F$ & $\mathcal{D}_\div$ & BM3D~\cite{DaFo07} & GCBD~\cite{Ch18} \\\hline
$34.90$ & $39.42$ & $\mathbf{40.62}$ & $40.56$ & $37.85$ & $39.87$
\end{tabular}
\caption{Additive mixture noise using ($\ell^2$-cost/$f^\implicit$). 
Note that GCBD is a blind method, i.e.~the noise distribution is estimated using a GAN with infinite training data.}
\end{subtable}

\begin{subtable}[ht]{\textwidth}
\centering
\begin{tabular}{*{6}{c|} }
noisy &  $\mathcal{D}_{\ell^2}$ & $\mathcal{D}_F$ & $\mathcal{D}_\div$ & BM3D~\cite{DaFo07} & TV~\cite{RuOsFa92} \\\hline
$17.16$ & $28.03$ & $28.09$ & $\mathbf{28.12}$ & $27.05$ & $25.49$
\end{tabular}
\caption{Additive Laplace noise using ($\ell^2$-cost/$f^\implicit$).}
\end{subtable}

\begin{subtable}[ht]{\textwidth}
\centering
\begin{tabular}{*{7}{c|} }
noisy &  $\mathcal{D}_{\ell^2}$ & $\mathcal{D}_F$ & $\mathcal{D}_\div$ & VST+BM3D~\cite{Re18} & CAFC~\cite{Re18} & $\text{MC}^2\text{RNet}_6$-S~\cite{Su19} \\\hline
$9.8$ & $\mathbf{24.44}$ & $\mathbf{24.44}$ & $\mathbf{24.44}$ & $23.54$ & $23.99$ & $24.25$
\end{tabular}
\caption{Multiplicative Poisson noise using ($\ell^2$-cost/$f^\implicit$).}
\end{subtable}

\begin{subtable}[ht]{\textwidth}
\resizebox{\linewidth}{!}{
\begin{tabular}{ *{11}{c|} }
\multirow{2}{*}{noisy} & \multicolumn{2}{|c|}{$\mathcal{D}_{\ell^2}$} & \multicolumn{3}{ |c| }{$\mathcal{D}_F$} &  \multicolumn{3}{ |c|}{$\mathcal{D}_\div$} & \multirow{2}{*}{BM3D~\cite{DaFo07}}& \multirow{2}{*}{TV~\cite{RuOsFa92}} \\\cline{2-9}
& $f^\explicit$ & $f^\implicit$ & $f^\explicit$ & $f^\implicit$ & $f^\Newton$ & $f^\explicit$ & $f^\implicit$ & $f^\Newton$ & & \\ \hline
$8.10$ & $21.14$ & $21.28$ & $24.82$ & $27.23$ & $27.11$ & $28.88$ & $30.97$ & $\mathbf{31.48}$ & $17.92$ & $17.05$
\end{tabular}
}
\caption{Salt-and-pepper noise for various discretization schemes and $\ell^2$-cost.}
\end{subtable}
\caption{PSNR values for distinct denoising tasks with various discretization schemes.}
\label{tab:PSNRDenoisingSV}
\end{table}

\begin{figure}[htb]
\includegraphics[width=\linewidth]{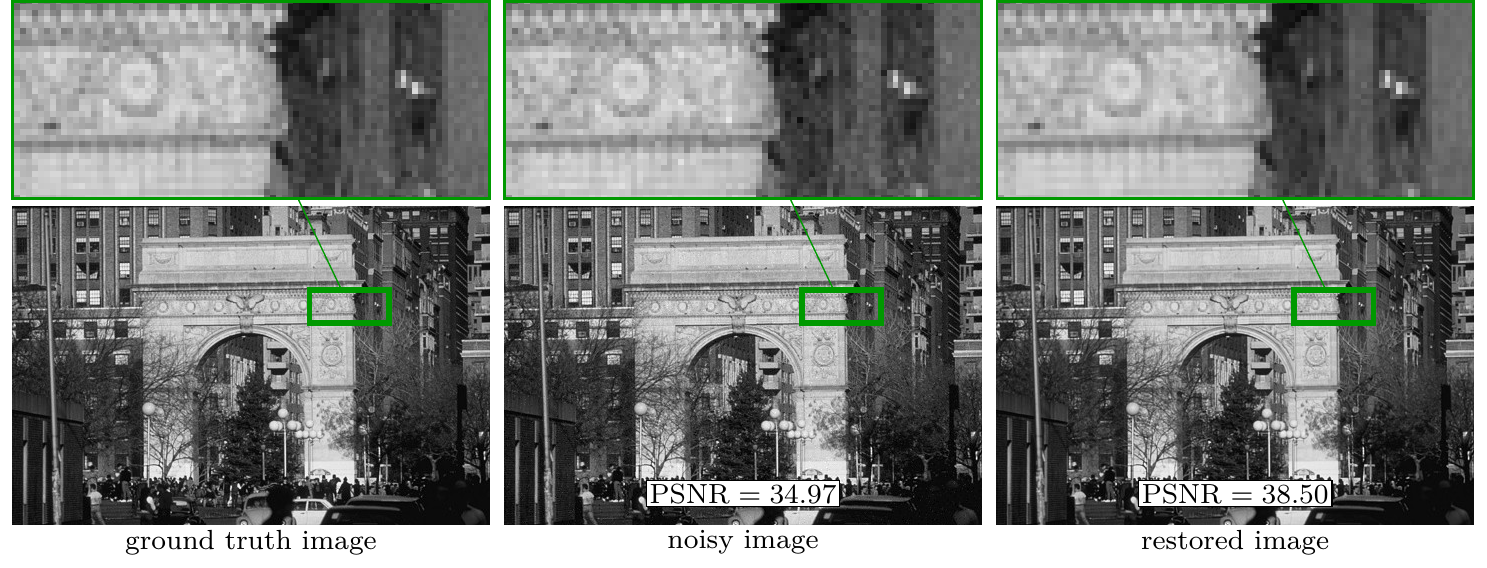}
\caption{From left to right: ground truth image, image corrupted by mixture noise, restored image using ($\ell^2$-cost/$f^\implicit$/$\mathcal{D}_\div$).
The magnification factor of the zoom is~$6$.}
\label{fig:MixedResults}
\end{figure}

\begin{figure}[htb]
\includegraphics[width=\linewidth]{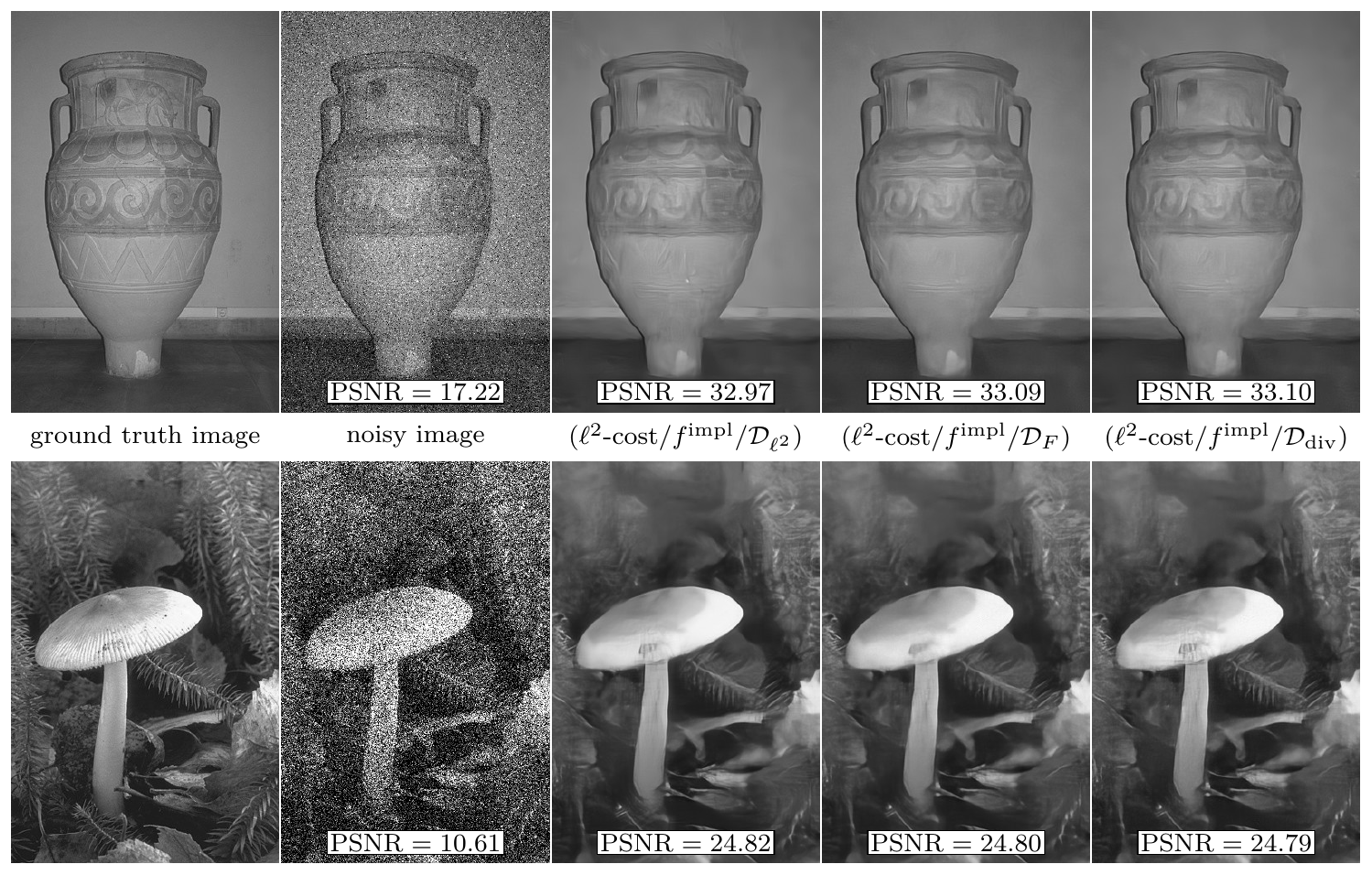}
\caption{From left to right: ground truth images, images corrupted by additive Laplace noise (first row)/multiplicative Poisson noise (second row) along with the restored images for various discretization schemes.}
\label{fig:LaplacePoissonResults}
\end{figure}

\begin{figure}[htb]
\includegraphics[width=\linewidth]{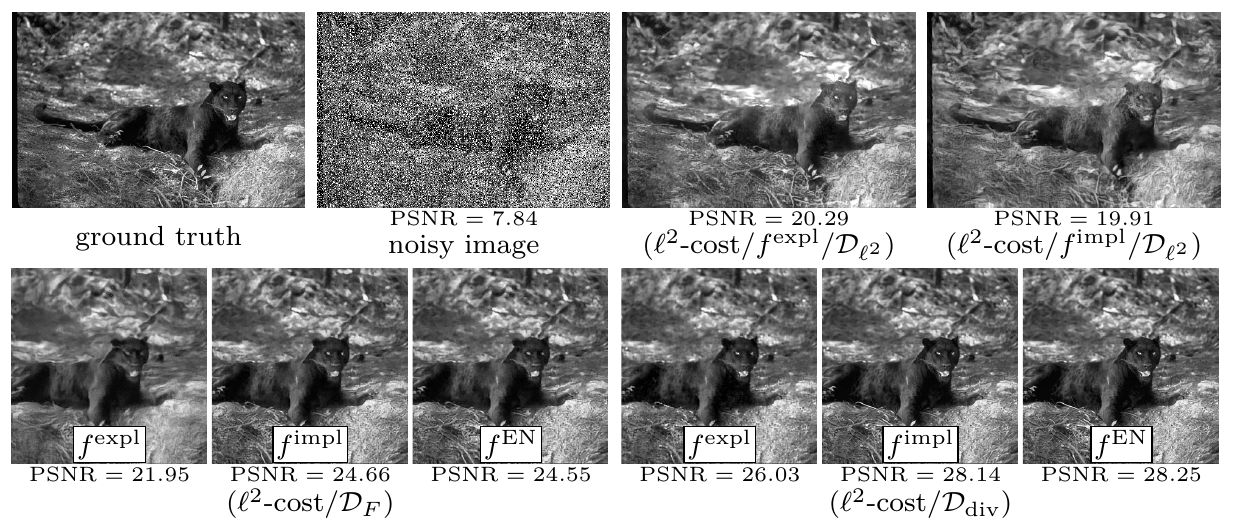}
\caption{Ground truth image, image corrupted by salt-and-pepper noise as well as the restored images using various discretization schemes.}
\label{fig:SNPResults}
\end{figure}

\subsubsection{Demosaicing}
In what follows, we apply energy-based learning to demosaicing, where the linear operator~$A$ is the standard mosaicing operator for the Bayer pattern and~$A_\init$ is the bicubic interpolation operator.

In~\Cref{fig:proxMapsDemosaicing}, the learned data fidelity terms for demosaicing using ($\ell_\iota^1$-cost/$f^\Newton$/$\mathcal{D}_F$) (left) and ($\ell_\iota^1$-cost/$f^\Newton$/$\mathcal{D}_\div$) (right) are depicted.
Note that in all experiments the resulting learned data fidelity terms for Fr\'echet and generalized divergence are approximately identical, respectively.
\Cref{tab:PSNRDemosaicingSV} lists the PSNR values for various discretization schemes as well as competing state-of-the-art methods evaluated on the Panasonic and the Canon data sets~\cite{KhNo14},
where all images have a resolution of $220\times 132$ and $318\times 210$, respectively.
For both data sets, the difference in the PSNR values among different discretization schemes is small and better values are observed when using the $\ell_\iota^1$-cost functional, but in either case we achieve state-of-the-art results.
Note that the best PSNR scores are achieved when using the learned Fr\'echet metric.
\Cref{fig:resultsDemosaicing} depicts the ground truth image, the input image after the bicubic interpolation~$A_\init z$ as well as the restored output images for all considered data fidelity terms using ($\ell_\iota^1$-cost/$f^\Newton$).
As a result, in all cases the quality of the restored images is substantially better than~$A_\init z$--both visibly and in terms of PSNR values.

\begin{figure}[htb]
\includegraphics[width=\linewidth]{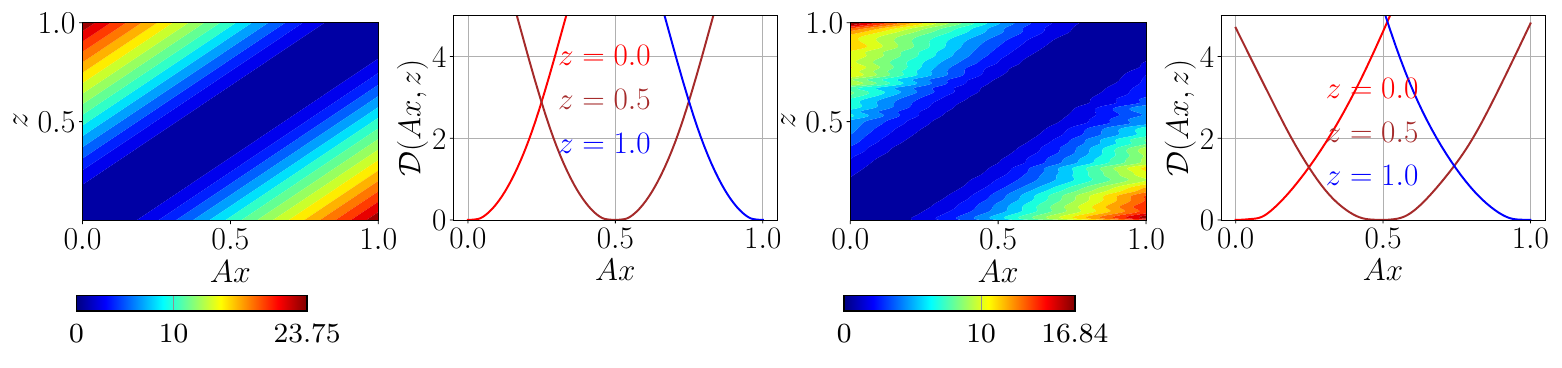}
\caption{
Learned data fidelity terms for demosaicing using ($\ell_\iota^1$-cost/$f^\Newton$/$\mathcal{D}_F$) (left) and ($\ell_\iota^1$-cost/$f^\Newton$/$\mathcal{D}_\div$) (right).
}
\label{fig:proxMapsDemosaicing}
\end{figure}

\begin{table}[ht]
\resizebox{\linewidth}{!}{
\begin{tabular}{c | c || *{8}{c|} }
\multirow{2}{*}{data set}& cost & \multirow{2}{*}{$A_\init z$} &  $\mathcal{D}_{\ell^2}$ & \multicolumn{2}{ |c| }{$\mathcal{D}_F$} &  \multicolumn{2}{ |c|}{$\mathcal{D}_\div$} & \multirow{2}{*}{SEM~\cite{KlHa16}}& \multirow{2}{*}{DeepISP~\cite{ScGi19}} \\\cline{4-8}
&functional& & $f^\explicit$ & $f^\explicit$ & $f^\Newton$ & $f^\explicit$ & $f^\Newton$ & & \\ \hline
\multirow{2}{*}{\textbf{P}} &$\ell_\iota^1$ & \multirow{2}{*}{$30.37$}  & $39.48$ & $39.79$ & $\mathbf{39.88}$ & $39.80$ & $39.80$ & \multirow{2}{*}{38.93} & \multirow{2}{*}{39.31} \\
&$\ell^2$ &  & $39.43$ & $39.45$ & $39.40$ & $39.36$ & $39.37$ & & \\ \hline
\multirow{2}{*}{\textbf{C}} &$\ell_\iota^1$ & \multirow{2}{*}{$32.30$}  & $41.71$ & $\mathbf{42.07}$ & $41.97$ & $41.69$ & $42.02$ & \multirow{2}{*}{41.09} & \multirow{2}{*}{41.70} \\
&$\ell^2$ &  & $41.52$ & $41.60$ & $41.60$ & $41.65$ & $41.65$ & & 
\end{tabular}
}
\caption{PSNR values of demosaicing for various discretization schemes and the Panasonic~(\textbf{P}) and Canon~(\textbf{C}) data set~\cite{KhNo14}.}
\label{tab:PSNRDemosaicingSV}
\end{table}

\begin{figure}[htb]
\includegraphics[width=\linewidth]{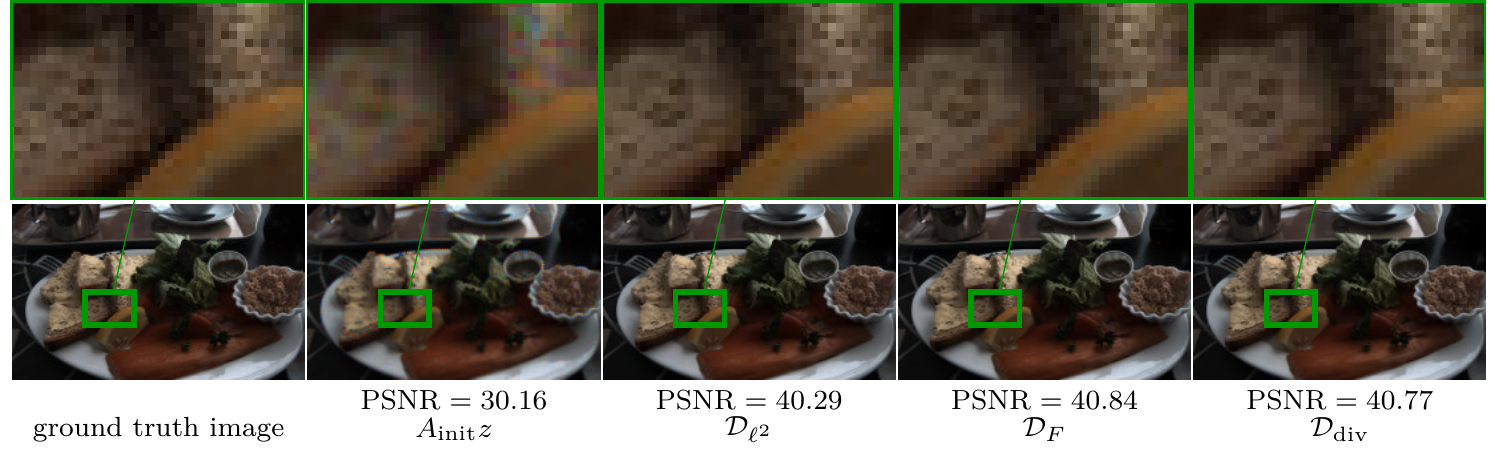}
\caption{From left to right: ground truth image, $A_\init z$, restored output images computed with different learned data fidelity terms using ($\ell_\iota^1$-cost/$f^\Newton$).
The magnification factor of the zoom is~$6$.}
\label{fig:resultsDemosaicing}
\end{figure}

\subsubsection{(Noisy) single image super-resolution}
For single image super-resolution (SISR), the linear operator~$A\in\R^{nC/\omega^2\times nC}$ is given as a Gaussian downsampling operator with scale factor~$\omega\in\N$ and standard deviation $\sigma_A=2$.
Throughout this paper, we set $\omega=3$ and we always use ($\ell_\iota^1$-cost/$f^\Newton$).
Furthermore, we consider both noise-free and noisy SISR, in the latter case additive white Gaussian noise~$\zeta$ with $\sigma_{\zeta}=7.65$ is included. 
To follow evaluation conventions in other super-resolution works (see e.g.~\cite{Zh20}), we also evaluate the PSNR value only on the Y-channel of the YCbCr color space, where we first remove $\omega$ pixels from all sides.

\Cref{fig:proxMapsSISR} visualizes the corresponding learned data fidelity terms in the noise-free (top row) and noisy configuration (bottom row).
In a narrow band around the diagonal, the resulting data fidelity term for the Fr\'echet metric is roughly~$0$, outside of this region steep slopes can be observed.
Interestingly, the level lines of~$\mathcal{D}_\div$ are visually significantly smoother in the noisy case.
\Cref{tab:PSNRSISRSV} lists the resulting PSNR values of our method in comparison with USRNet~\cite{Zh20} as the state-of-the-art benchmark.
We remark that USRNet is trained for multiple kernels, scale factors, and noise levels.
\Cref{fig:resultsSISR} depicts the original high-resolution ground truth image, the low-resolution input and the output images of our method for all three data fidelity terms.
The computed output images are visually and in terms of the PSNR value significantly better than the low-resolution input image.

\begin{figure}[htb]
\includegraphics[width=\linewidth]{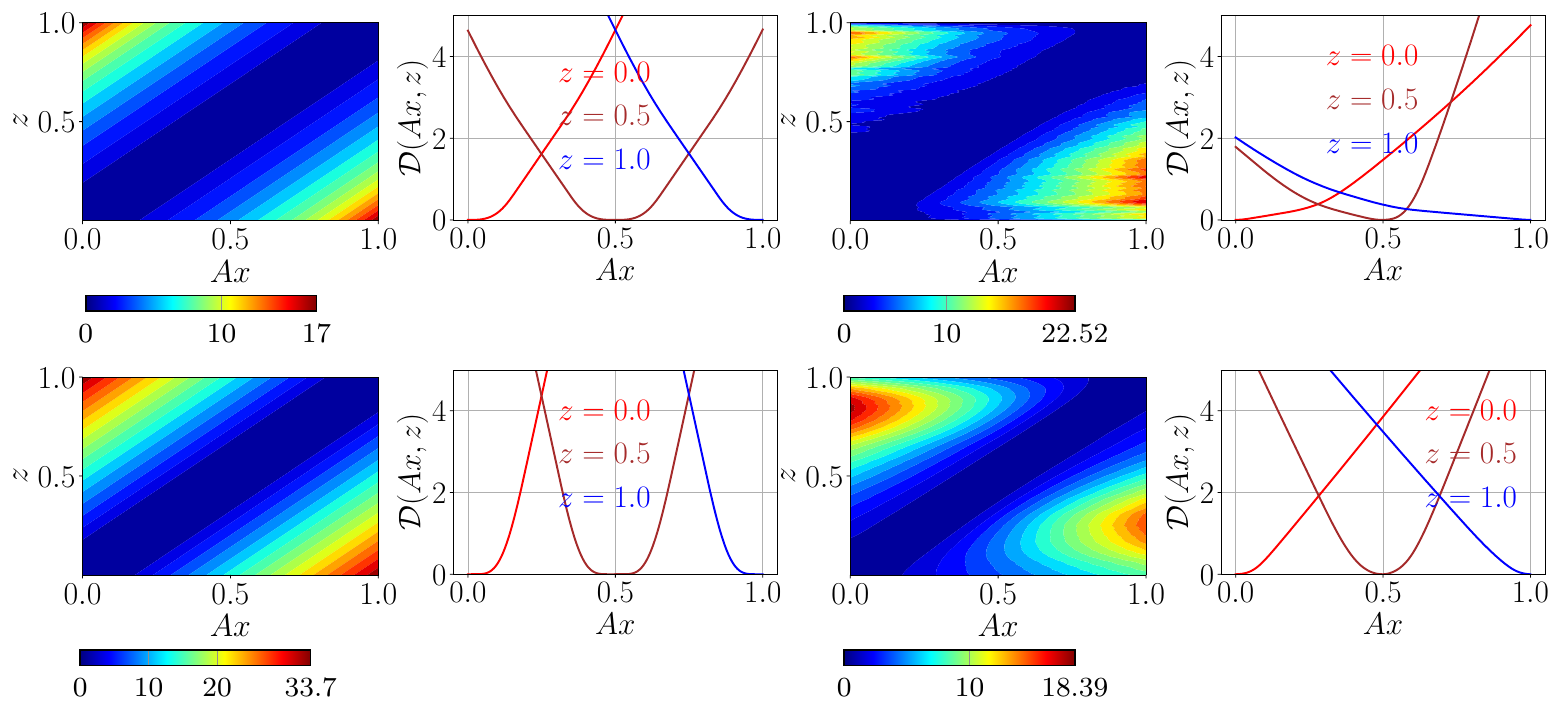}
\caption{
Learned data fidelity terms for SISR along with their respective slice functions using ($\ell_\iota^1$-cost/$f^\Newton$/$\mathcal{D}_F$) (first/second column) and ($\ell_\iota^1$-cost/$f^\Newton$/$\mathcal{D}_\div$) (third/fourth column)
for noise-free (first row) and noisy (second row) configuration.
}
\label{fig:proxMapsSISR}
\end{figure}

\begin{table}[ht]
\centering
\begin{tabular}{c || *{8}{c|} }
& $A_\init\overline{z}$ &  $\mathcal{D}_{\ell^2}$ & {$\mathcal{D}_F$} &  {$\mathcal{D}_\div$} & USRNet~\cite{Zh20} \\\hline
noise-free & $24.67$ & $28.47$ & $28.47$ & $\mathbf{28.49}$ & $27.88$ \\ \hline
noisy & $24.58$  & $26.70$ & $26.71$ & $\mathbf{26.78}$ & $25.57$
\end{tabular}
\caption{PSNR values of non-blind SISR for various discretization schemes.
USRNet is trained on multiple kernels and scale factors simultaneously.}
\label{tab:PSNRSISRSV}
\end{table}

\begin{figure}[htb]
\includegraphics[width=\linewidth]{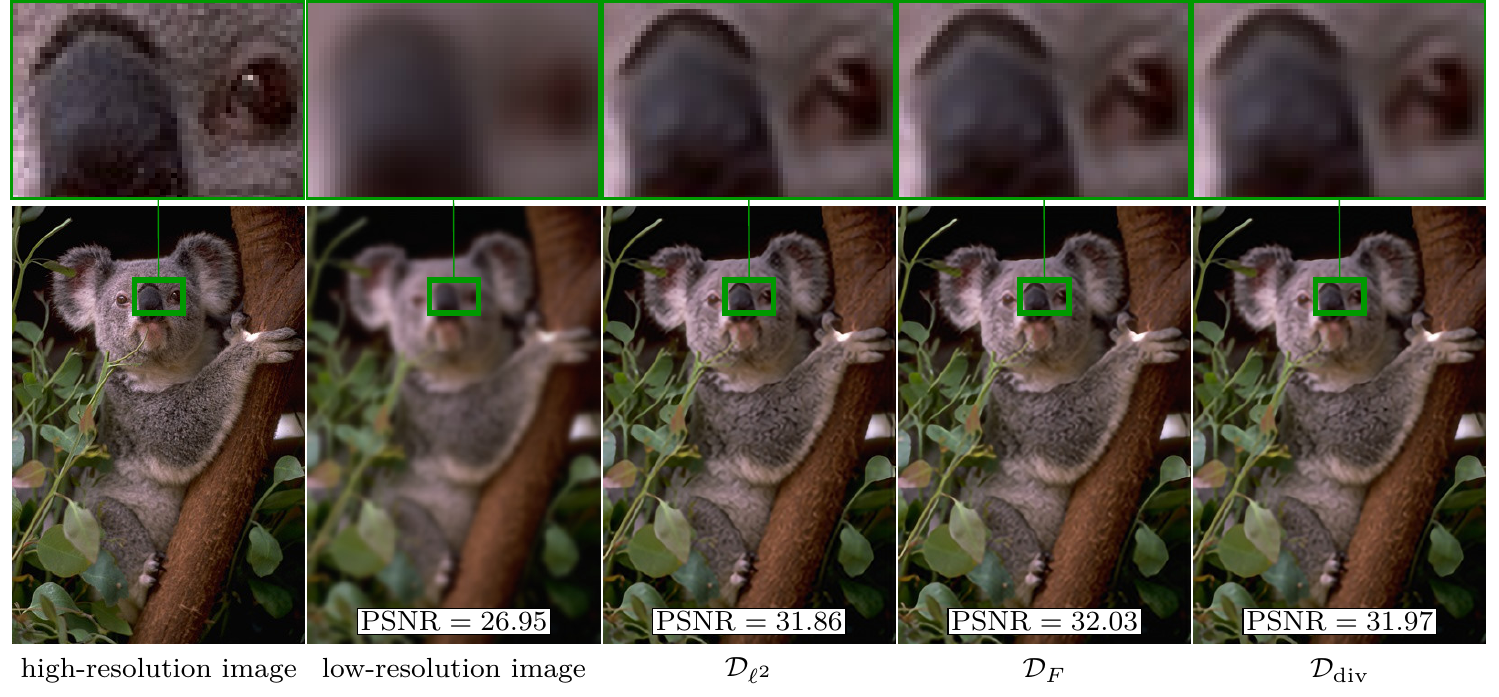}
\caption{From left to right: high-resolution image, noise-free low-resolution image, restored output images computed with different data fidelity terms using ($\ell_\iota^1$-cost/$f^\Newton$).}
\label{fig:resultsSISR}
\end{figure}

\subsection{Shared prior learning}\label{sub:numericsSharedPriorLearning}
Next, we present numerical experiments for shared prior learning.
In this case, we use two separate data sets for the supervised and unsupervised training.
Contrary to energy-based learning, \emph{no ground truth images are available for the unsupervised data set}.
In all experiments, the BSDS400 data set is used for supervised and the DIV2K data set~\cite{AgTi17} for unsupervised training, where both data sets are degraded in different fashions.
The patch size for both the supervised and the unsupervised subproblem is $60\times 60$ due to hardware restrictions, and the batch size is~$10$.
Furthermore, all control parameters are initialized with the corresponding control parameters of a previous supervised training, where we pre-train with $100\,000$ iterations.
For the Wasserstein distance estimation, we use $50$ iterations of \Cref{alg:Wasserstein} and we set $\beta=1$ if not otherwise stated.
In all experiments, the $\ell_\iota^1$-cost functional is used since this choice has empirically proven to yield better performance.
Again, we use the ADAM optimizer with a learning rate of $10^{-4}$, $2000$~iterations, $\beta_1=0.5$ and $\beta_2=0.9$.
If not otherwise specified, we utilize $\F=\DCT$ with $n_\patch=64$ and $N=3610$ (resulting from the number of overlapping patches in $10$~images of size $60\times 60$) in all experiments since this operator has proven to be superior.
We evaluate the performance in terms of the PSNR value of the reconstructions on validation data sets, which are corrupted in the same way as the unsupervised training data set.
Again, we highlight that we are solely aiming at maximizing the performance on the unsupervised subproblem.

\subsubsection{Image denoising}
In the first numerical experiment, we analyze the capability of shared prior learning to adapt to unknown realistic noise distributions.
Therefore, we consider denoising of images corrupted by additive Laplace noise ($\sigma=25$) without the incorporation of ground truth images.
In the supervised subproblem with ($\ell^2$-cost/$f^\implicit$/$\mathcal{D}_{\ell^2}$), the images are degraded by additive Gaussian noise with $\sigma=25$.
We emphasize that we are exclusively interested in the PSNR score for Laplace denoising task and we do not optimize the PSNR value of the Gaussian denoising subproblem.
For the unsupervised subproblem, we utilize ($\ell^2$-cost/$f^\implicit$/$\mathcal{D}_{\div}$).

\begin{table}[ht]
\centering
\begin{tabular}{*{6}{c|} | c }
 noisy &  $\alpha=1$ & $\alpha=0$ &  $\alpha=0.8$ & DnCNN-B~\cite{ZhZu17} & BM3D~\cite{DaFo07} & supervised reference \\\hline
 $24.67$ & $22.48$ & $24.42$ & $\mathbf{27.40}$ & $27.34$ & $27.05$ & $28.12$
\end{tabular}
\caption{PSNR values for unsupervised additive Laplace denoising, where only for the supervised reference ground truth images are available.
For our unsupervised results we used ($\ell^2$-cost/$f^\implicit$/$\mathcal{D}_{\ell^2}$).}
\label{tab:PSNRUSLaplace}
\end{table}

\Cref{tab:PSNRUSLaplace} lists the PSNR values for unsupervised additive Laplace denoising using the $\DCT$ operator and $n_\patch=100$ after $2.000$ iterations.
We highlight that shared prior learning with $\alpha=0.8$ clearly outperforms the competing purely supervised ($\alpha=1$) and unsupervised ($\alpha=0$) methods as well as the state-of-the-art approach.
\Cref{fig:ablationOperator} depicts the dependency of the PSNR value on the balance parameter~$\alpha$ for all operators, where the subscript refers to the varying patch size~$n_\patch$.
To compensate numerical instabilities, we set $\beta=5$ for the patch sizes~$10$ and~$12$.
In the first row, we terminate the optimization at the best PSNR value, for which ground truth images are necessary, whereas in the second row the performance after~$2000$ iterations is shown.
Furthermore, the horizontal black line indicates the PSNR value of the supervised training for additive Laplace noise using the ground truth during training.
We stress that there is hardly any difference in the performance for $\alpha\in[0.1,0.7]$ among all curves when terminating at the optimal iteration (first row),
but a significant increase in the performance is visible when passing from the purely supervised (i.e.~$\alpha=1$) or unsupervised (i.e.~$\alpha=0$) to shared prior learning (i.e.~$\alpha\in(0,1)$).
The $\DCT$ operator performs on average slightly better than the autoencoder, which itself is marginally superior to the identity operator.
As a result, the optimal value of~$\alpha$ strongly depends on the task, the operator and the patch size.
\Cref{fig:LaplaceDenoisingUN} (first row) shows the ground truth image (first image), the noisy input image (second image) corrupted by Laplace noise along with the restored images when trained on a data set deteriorated by Gaussian noise (third image) and Laplace noise (fourth image) in a supervised setting.
In the second row, several output images for shared prior learning are presented for various choices of~$\alpha$ and~$\F$, where the patch size is~$n_\patch=10$ in all experiments.
Since the Laplace distribution is heavier-tailed than the Gaussian distribution, isolated outliers result in deteriorated pixels in the restored images--apart from the supervised restoration trained on Laplace noise.

\begin{figure}[htb]
\includegraphics[width=\linewidth]{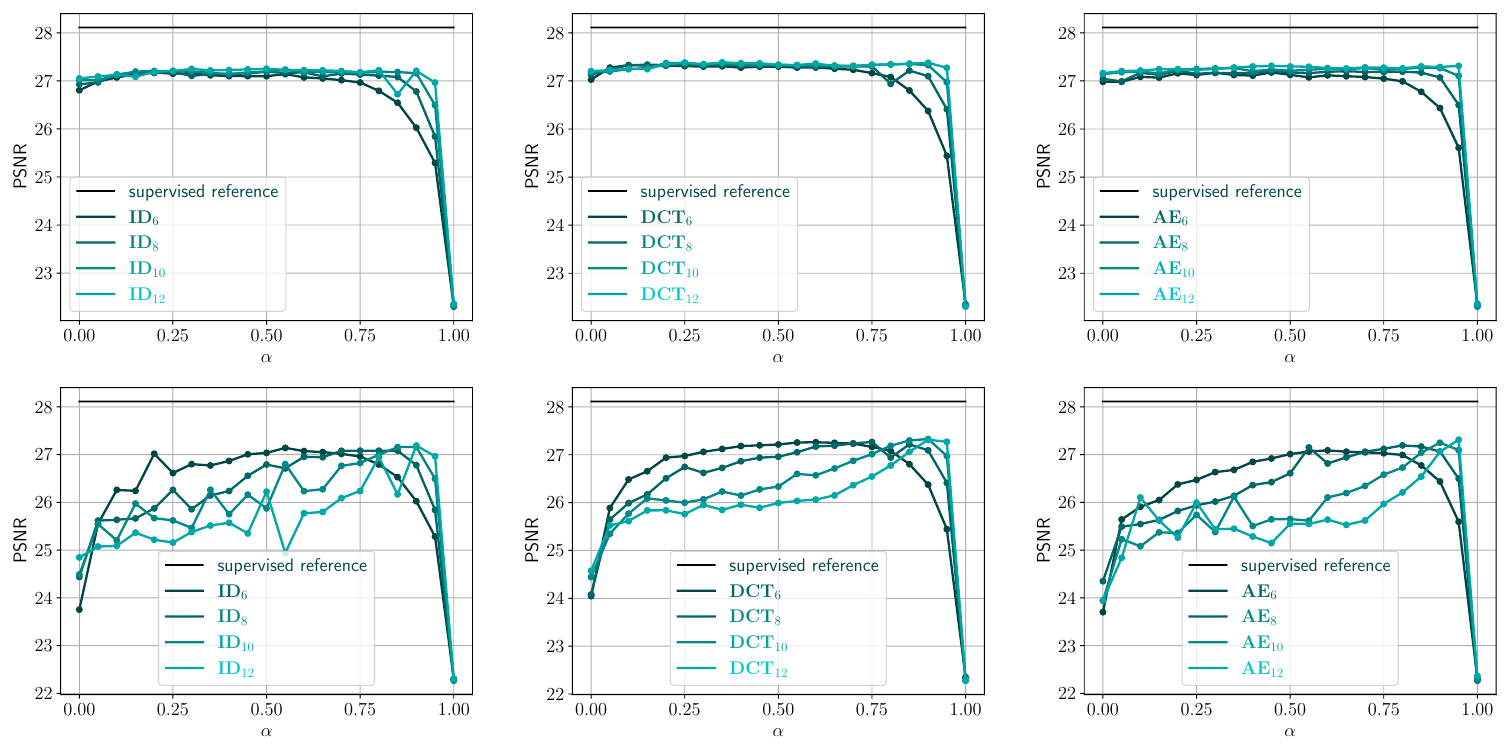}
\caption{Ablation study of the PSNR value for blind additive Laplace denoising using different operators and varying~$n_\patch$.
The optimization algorithm is stopped at the optimal iteration (first row) and after $2000$~iterations (second row).}
\label{fig:ablationOperator}
\end{figure}

\begin{figure}[htb]
\includegraphics[width=\linewidth]{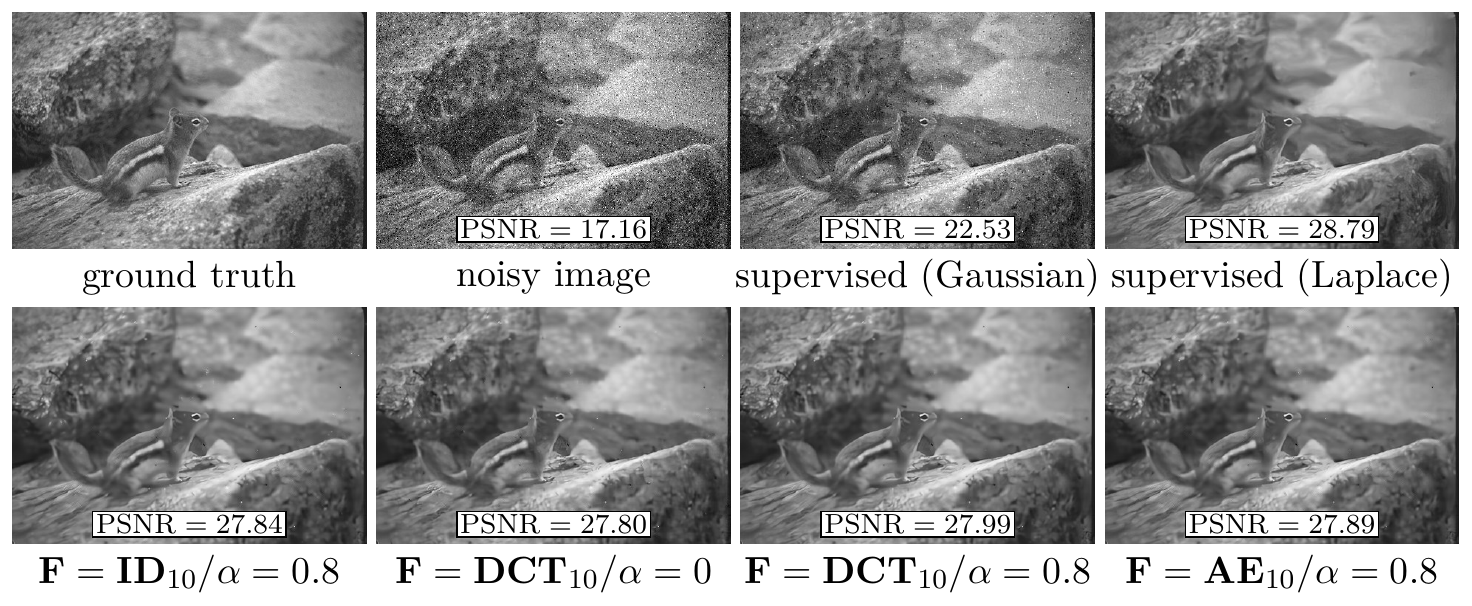}
\caption{First row (from left to right): ground truth image, noisy image, output image (supervised reconstruction) trained on images corrupted by Gaussian and Laplace noise.
Second row: prototypic images of the aforementioned ablation study in \Cref{fig:ablationOperator}.}
\label{fig:LaplaceDenoisingUN}
\end{figure}

\subsubsection{Single image super-resolution}
In what follows, we apply shared prior learning to SISR.
Throughout all experiments, the linear operator~$A$ in the supervised reference task is a Gaussian downsampling kernel with $\sigma_A=2$ and a scale factor of~$\omega=3$.
Furthermore, no noise is added and we use ($\ell_\iota^1$-cost/$f^\Newton$/$\mathcal{D}_F$).
For the unsupervised task using ($\ell_\iota^1$-cost/$f^\Newton$/$\mathcal{D}_F$), we assume that the linear operator~$\widetilde{A}$ is unknown during training and additive noise~$\zeta$ is included in some cases.
In detail, we consider three scenarios:
\begin{enumerate}[label=(SISR \arabic*)]
\item\label{item:SISR1}
$\widetilde{A}$ coincides with~$A$ and Gaussian noise~$\zeta$ with $\sigma_{\zeta}\in\{2.55,7.65\}$ is added.
\item\label{item:SISR2}
$\widetilde{A}$ is a Gaussian downsampling kernel with $\sigma_{\widetilde{A}}=1.2$ and no noise is added.
\item\label{item:SISR3}
$\widetilde{A}$ is a Gaussian downsampling kernel with $\sigma_{\widetilde{A}}=1.2$ and Gaussian noise $\zeta$ with $\sigma_{\zeta}\in\{2.55,7.65\}$ is added.
\end{enumerate}
We compare our results with the corresponding supervised outcome and two recently proposed approaches for non-blind SISR.
Contrary to the latter approach, we do not perform any kernel estimation here.
All experiments are conducted for color images, where only the $Y$-channel of~$x(1,\widehat z)$ of the YCbCr color space is used in all further computations.
We use $\alpha=0.03$ since the typical empirical Wasserstein distance is $1$ and the $\ell_\iota^1$-norm on the supervised task averages around $300$ for our chosen parameters.
In all experiments, we use $2000$~iterations for the optimizer.
\Cref{tab:PSNRSISRUnSISR} lists the PSNR results for all three scenarios.
We stress that shared prior learning with~$\alpha=0.03$ yields the best performance in all cases apart from the supervised case, in which ground truth images are available for training.
\Cref{fig:SISRUNAll} depicts resulting image sequences for all three scenarios with different ground truth images (first column), where $\sigma_{\zeta}=2.55$ in~\ref{item:SISR1} and~\ref{item:SISR3} is used.
As an initialization, we use $A_\init z$, where $A_\init$ is the adjoint operator of the known operator~$A$ multiplied by $\omega^2$.
The resulting restorations are depicted in the third column for $\alpha=1$ and in the fourth column for $\alpha=0.03$.
In the last column, the corresponding reference images for supervised training with known ground truth are shown.
We highlight that for some image sequences the restored images generated by shared prior learning with $\alpha=0.03$ outperform the associated supervised results of the last column (which is not valid for the entire validation data set).

\begin{table}[htb]
\centering
\begin{tabular}{c | c ||*{4}{c|} c | | c }
& &   & shared prior & shared prior & & & \\
& & & learning & learning & USRNet & IRCNN & supervised\\
& $\sigma_{\zeta}$ & $A_\init z$ & ($\alpha=1$) & ($\alpha=0.03$) & ~\cite{Zh20} & ~\cite{Zh18} & reference \\\hline\hline
\multirow{2}{*}{\ref{item:SISR1}} & $2.55$ & $24.66$ & $24.81$ & $\mathbf{27.18}$ & $26.78$ & $25.33$ & $27.75$ \\
& $7.65$ & $24.58$ & $17.81$ & $\mathbf{26.08}$  & $25.57$ & $24.39$ & $26.78$ \\\hline
\ref{item:SISR2} & $0$ & $25.20$ & $20.68$ & $\mathbf{28.17}$  & $27.76$ & $26.89$ & $28.50$ \\\hline
\multirow{2}{*}{\ref{item:SISR3}} & $2.55$ & $25.29$ & $20.10$ & $\mathbf{27.85}$  & $27.40$ & $26.13$ & $28.40$ \\
 & $7.65$ & $25.30$ & $19.22$ & $\mathbf{26.87}$  & $26.52$ & $24.68$ & $27.64$
\end{tabular}
\caption{PSNR values for all three scenarios considered.}
\label{tab:PSNRSISRUnSISR}
\end{table}

\begin{figure}[htb]
\includegraphics[width=\linewidth]{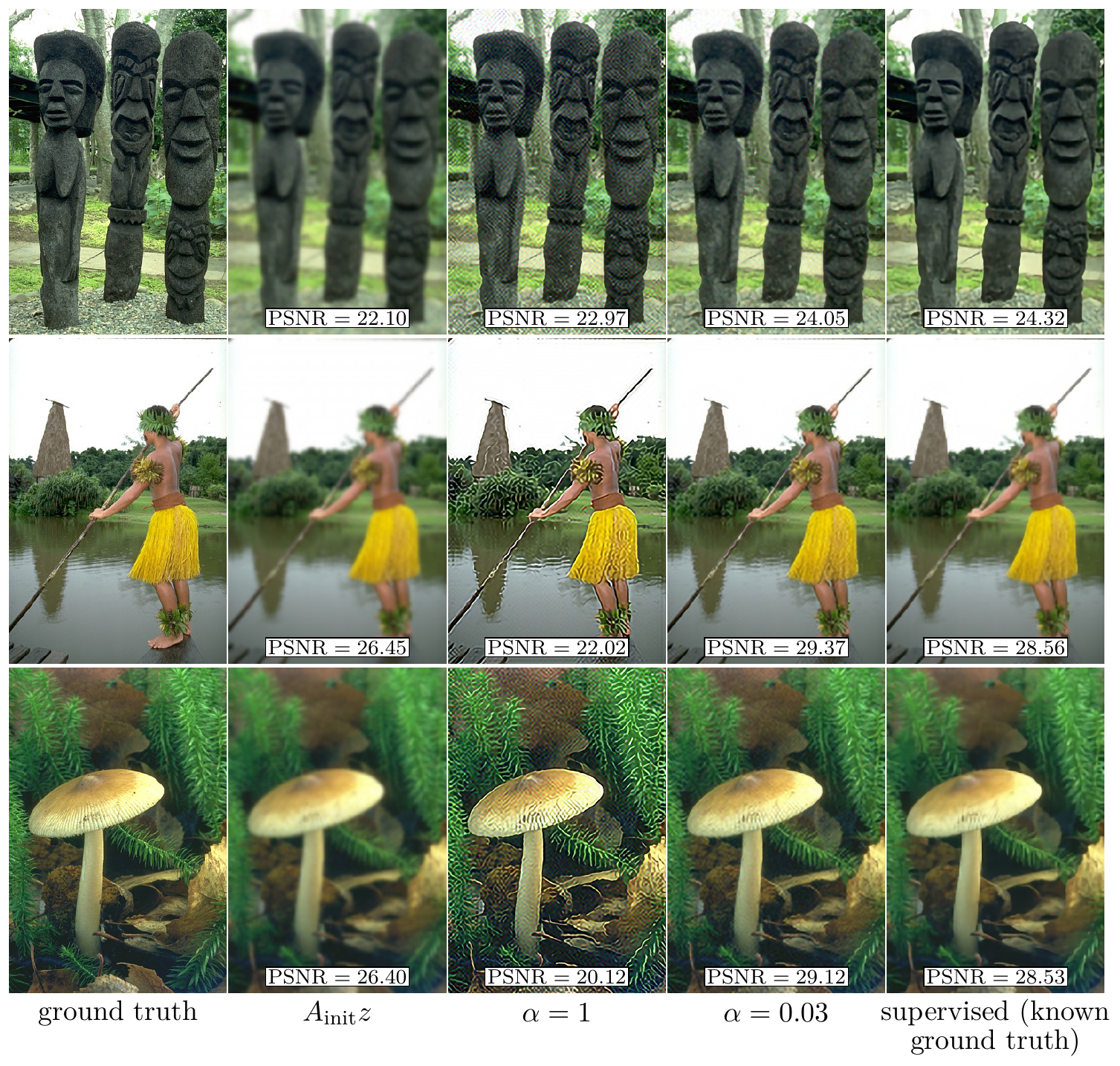}
\caption{Results for tasks \ref{item:SISR1} ($\sigma_{\zeta}=2.55$, first row), \ref{item:SISR2} (second row) and \ref{item:SISR3} ($\sigma_{\zeta}=2.55$, third row) with patch size $n_\patch=6$.}
\label{fig:SISRUNAll}
\end{figure}

\section{Conclusion}
In this paper, we have introduced a shared prior learning approach for learning variational models in imaging.
We have demonstrated the superiority of the proposed method for various applications--even in the absence of ground truth data.
In particular, the only task-specific hyperparameter is the learning rate, that is why our method can easily be adapted to a variety of imaging problems.
The learned data fidelity terms and proximal operators strongly depend on the task and the discretization scheme.
The consistency of the discretization schemes for increasing iteration steps~$S$ in terms of Mosco convergence has been verified.
The stability of the method including robustness against adversarial attacks and empirical upper bounds of the generalization error 
have not been addressed here, but the overall approach would be analogous to~\cite{KoEf20a}.
Finally, we highlight that our method is also applicable to nonlinear inverse problems, which will be analyzed in future work.

\section*{Acknowledgements}
All authors acknowledge support from the European Research Council under the Horizon 2020 program, ERC starting grant HOMOVIS (No. 640156).
We thank Vanessa Effland, Joana Grah and Andreja Ojster\v{s}ek for fruitful discussions and support.

\bibliography{main}
\bibliographystyle{alpha}

\end{document}